\definecolor{brickred} {rgb}{0.6,0,0}
\definecolor{jaehoblue}{rgb}{0,0,0.8}
\def\set@curr@file#1{\def\@curr@file{#1}} 
\newenvironment{customthm}[1]
  {\innercustomthm}
  {\endinnercustomthm}
\newtheorem{claim}[theorem]{Claim}
\newtheorem{mydefinition}{Definition}
\newtheorem*{theorem*}{Theorem}
\definecolor{Gray}{gray}{0.9}
\newcommand{\relu}{\textsc{ReLU}}
\newcommand{\id}{\textsc{Id}}
\newcommand{\step}{\textsc{Step}}
\title{Provable Memorization via Deep Neural Networks using Sub-linear Parameters}
\begin{document}

\maketitle
\begin{abstract}
It is known that $O(N)$ parameters are sufficient for neural networks to memorize arbitrary $N$ input-label pairs. By exploiting depth, we show that $O(N^{2/3})$ parameters suffice to memorize $N$ pairs, under a mild condition on the separation of input points. In particular, deeper networks (even with width $3$) are shown to memorize more pairs than shallow networks, which also agrees with the recent line of works on the benefits of depth for function approximation. We also provide empirical results that support our theoretical findings.
\end{abstract}

\section{Introduction}
The modern trend of over-parameterizing neural networks has shifted the focus of deep learning theory from analyzing their expressive power toward understanding the generalization capabilities of neural networks. While the celebrated universal approximation theorems state that over-parameterization enables us to approximate the target function with a smaller error \citep{cybenko89,pinkus99}, the theoretical gain is too small to satisfactorily explain the observed benefits of over-parameterizing already-big networks. Instead of ``how well can models fit,'' the question of ``why models do not overfit'' has become the central issue \citep{zhang17}.

Ironically, a recent breakthrough on the phenomenon known as the \textit{double descent} \citep{belkin19,nakkiran20} suggests that answering the question of ``how well can models fit'' is in fact an essential element in fully characterizing their generalization capabilities.
In particular, the double descent phenomenon characterizes two different phases according to the capability/incapability of the network size for memorizing training samples.
If the network size is insufficient for memorization, the traditional bias-variance trade-off occurs. However, after the network reaches the capacity that memorizes the dataset, i.e., ``interpolation threshold,'' larger networks exhibit better generalization.
Under this new paradigm, identifying the \textit{minimum size of networks} for memorizing \textit{finite} input-label pairs becomes a key issue, rather than function approximation that considers \textit{infinite} inputs.

The memory capacity of neural networks is relatively old literature, where researchers have studied the minimum number of parameters for memorizing arbitrary $N$ input-label pairs. 
Existing results showed that $O(N)$ parameters (i.e., weights and biases) are sufficient for various activation functions \citep{baum88,huang98,huang03,yun19,vershynin20}.
On the other hand, \cite{sontag97} established the negative result that for any network using analytic definable activation functions with $o(N)$ parameters, there exists a set of $N$ input-label pairs that the network cannot memorize.
The sub-linear number of parameters also appear in a related topic, namely the VC-dimension of neural networks.
It has been proved that there exists a set of $N$ inputs such that a neural network with $o(N)$ parameters can ``shatter,'' i.e., memorize arbitrary labels \citep{maass97,bartlett19}.
Comparing the two results on $o(N)$ parameters, \cite{sontag97} showed that \emph{not all} sets of $N$ inputs can be memorized for arbitrary labels, whereas \cite{bartlett19} showed that \emph{at least one} set of $N$ inputs can be shattered.
This suggests that there may be a reasonably large family of $N$ input-label pairs that can be memorized with $o(N)$ parameters, which is our main interest.

\subsection{Summary of results}\label{sec:contribution}
In this paper, we identify a mild condition satisfied by many practical datasets, and show that $o(N)$ parameters suffice for memorizing such datasets.
To bypass the negative result by \cite{sontag97}, we introduce a condition to the set of inputs, called the $\Delta$-separateness.
\begin{mydefinition}\label{def:separated}
For $\mathcal X\subset\mathbb{R}^{d_x}$, we say $\mathcal X$ is $\Delta$-separated if $$\sup_{x,x^\prime\in\mathcal X:x\ne x^\prime}\|x-x^\prime\|_2<\Delta\times\inf_{x,x^\prime\in\mathcal X:x\ne x^\prime}\|x-x^\prime\|_2.$$
\end{mydefinition}
This condition requires that the ratio of the maximum distance to the minimum distance between distinct points is bounded by $\Delta$.
Here, the condition is milder when $\Delta$ is bigger.
Notice that any given finite set of (distinct) inputs is $\Delta$-separated for some $\Delta$, so one might ask why $\Delta$-separateness is different from having distinct inputs in a dataset.
The key difference is that even if the number of data points $N$ grows, the ratio of the maximum to the minimum should remain bounded by some $\Delta$.
Given the discrete nature of computers, many practical datasets satisfy $\Delta$-separateness, as we will see shortly.
Also, this condition is more general than the minimum distance assumptions ($\forall i,  \|x_i\|_2 = 1$, $\forall i \neq j, \|x_i - x_j\|_2 \geq \rho > 0$) that are employed in existing theoretical results \citep{hardt17,vershynin20}. To see this, note that the minimum distance assumption implies $2/\rho$-separateness.
In our theorem statements, we will use the phrase ``$\Delta$-separated set of $N$ pairs'' to refer to $N$ input-label pairs, where the set of inputs is $\Delta$-separated.

In our main theorem sketched below, we prove the sufficiency of $o(N)$ parameters for memorizing {any} $\Delta$-separated set of $N$ pairs (i.e., any $\Delta$-separated set of $N$ inputs {with arbitrary labels}) even for large $\Delta$. More concretely, our result is of the following form:
\begin{customthm}{1}[Informal]\label{thm:lw-informal}
For any $w\in[2/3,1]$, there exists a $O(N^{2-2w}+\log\Delta)$-layer, $O(N^w+\log\Delta)$-parameter fully-connected network using a sigmoidal or $\relu$ activation function that can memorize any $\Delta$-separated set of $N$ 
pairs.
\end{customthm}
Theorem \ref{thm:lw-informal} implies that for $w=2/3$ and $\Delta=2^{O(N^{2/3})}$, $O(N^{2/3})$ parameters are sufficient for memorizing any $\Delta$-separated set of $N$ pairs.
Here, we can check from Definition~\ref{def:separated} that the $\log \Delta$ term does not usually dominate the depth or the number of parameters, especially for modern deep architectures and practical datasets. For example, it is easy to check that any dataset consisting of $3$-channel images (values from $\{0,1,\dots,255\}$) of size $a\times b$ satisfies $\log\Delta<(9+\frac12\log(ab))$ (e.g., $\log\Delta<17$ for the ImageNet dataset), which is often much smaller than the depth of modern deep architectures. 

For practical datasets, we can show that networks with parameters fewer than the number of pairs can successfully memorize the dataset.
For example, in order to perfectly classify one million images in the ImageNet dataset\footnote{http://www.image-net.org/} with $1000$ classes, our result shows that $0.7$ million parameters are sufficient.
The improvement is more significant for larger datasets. To memorize $15.8$ million bounding boxes in the Open Images V6 dataset\footnote{https://storage.googleapis.com/openimages/web/index.html} with $600$ classes, our result shows that only $4.5$ million parameters suffice.

For a large class (i.e., $2^{O(N^w)}$-separated) of datasets with $N$ pairs, Theorem~\ref{thm:lw-informal} improves the number of parameters sufficent for memorization from $O(N)$ down to $O(N^w)$ for any $w\in[2/3,1]$, by exploiting network depth that increases with the number of pairs $N$.
Then, it is natural to ask whether the depth increasing with $N$ is \emph{necessary} for memorization with a sub-linear number of parameters.
The following existing result on the VC-dimension implies that increasing depth is necessary for memorization with $o(N/\log N)$ parameters, at least for $\relu$ networks.
\begin{theorem*}[\cite{bartlett19}]\label{thm:reluvc}
{\bf(Informal)} For $L$-layer $\relu$ networks, $\Omega(N/(L\log N))$ parameters are necessary for memorizing at least one set of $N$ inputs with arbitrary labels.
\end{theorem*}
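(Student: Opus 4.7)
The plan is to prove the contrapositive, which is equivalent to the VC-dimension upper bound of Bartlett--Harvey--Liaw--Mehrabian: any set of $N$ inputs shatterable by an $L$-layer $\relu$ network with $W$ parameters must satisfy $N = O(WL\log(WL))$, from which $W = \Omega(N/(L\log N))$ follows by solving for $W$ and using $\log(WL) = O(\log N)$ in the relevant regime.

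First I would fix arbitrary inputs $x_1,\ldots,x_N$ and view each output $f(x_i;\theta)$ as a function of the parameters $\theta\in\reals^W$. Partition $\reals^W$ into \emph{activation regions}, the maximal connected open cells on which the sign of every neuron's pre-activation is constant, simultaneously for all $x_i$. Inside a single region every $\relu$ is frozen to the identity or zero, so $f(x_i;\theta)$ becomes a polynomial in $\theta$ of degree at most $L$. The total number of label patterns achievable by varying $\theta$ is therefore bounded by the number of activation regions times the number of sign patterns of the $N$ output polynomials inside one region; the latter is at most $(cNL/W)^W$ by Warren's (or Milnor--Thom's) theorem on sign patterns of polynomials.

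The heart of the argument is counting activation regions, which cannot be done by a single global application of Warren, because a neuron's pre-activation is globally only piecewise polynomial in $\theta$. Instead, one proceeds by induction on depth: if the partition after layer $\ell-1$ has $M_{\ell-1}$ cells, on each of which the layer-$\ell$ pre-activations are genuine polynomials of degree at most $\ell$, then the $NW_\ell$ new sign conditions subdivide each cell into at most $(cNL)^W$ sub-cells by Warren, so that $M_\ell \leq M_{\ell-1}(cNL)^W$. Iterating across all $L$ layers yields $M_L \leq (cNL)^{WL}$, and combining with the per-region output count produces at most $2^{O(WL\log(NL))}$ distinct labelings. Setting this $\geq 2^N$, as required for shattering, gives $N = O(WL\log(NL))$, and rearranging produces the claimed bound $W = \Omega(N/(L\log N))$. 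The main obstacle is executing the recursion so that the exponent of $L$ inside the final logarithm accumulates correctly and is not inflated by naively re-counting old cells at each layer; every other step---defining activation regions, recognizing polynomiality inside a region, applying Warren, and the final arithmetic---is routine bookkeeping.
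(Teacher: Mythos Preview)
The paper does not prove this statement; it is quoted as an informal restatement of a result from \cite{bartlett19} and used only as a black-box lower bound to complement the paper's own upper-bound constructions (Theorems~\ref{thm:lw}--\ref{thm:criteria}). There is therefore no proof in the paper against which to compare your proposal.

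That said, your sketch is a faithful (if slightly coarsened) rendering of the Bartlett--Harvey--Liaw--Mehrabian argument itself: partition parameter space layer by layer into cells on which every pre-activation is a genuine polynomial in $\theta$, bound the multiplicative growth of the cell count at each layer via Warren's theorem, and finally bound output sign patterns per cell. Your per-layer factor $(cNL)^W$ is looser than their more careful accounting, which yields VC-dimension $O(WL\log W)$ rather than $O(WL\log(NL))$ and ultimately the refined $\bar L$ version mentioned in the paper's related-work section; but the crude form is valid and still delivers the informal $\Omega(N/(L\log N))$ conclusion once you note that either $L\le N$ (so $\log(NL)\le 2\log N$) or $L>N$ (so $W\ge L>N$ and the bound is trivial). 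No gap, and nothing in the paper to contrast with beyond observing that you have reproduced the cited source.
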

The above theorem implies that for $\relu$ networks of constant depth, $\Omega(N/\log N)$ parameters are necessary for memorizing at least one set of $N$ inputs with arbitrary labels.
In contrast, by increasing depth with $N$, Theorem~\ref{thm:lw-informal} shows that there is a large class of datasets that can be memorized with $o(N/\log N)$ parameters.
Combining these two results, one can conclude that increasing depth is \emph{necessary and sufficient} for memorizing a large class of $N$ pairs with $o(N/\log N)$ parameters.

Given that the depth is critical for memorization with $o(N/\log N)$ parameters, is the width also critical? We prove that it is not the case, via the following theorem.
\begin{customthm}{2}[Informal]\label{thm:3-informal}
For a fully-connected network of width $3$ using a sigmoidal or $\relu$ activation function, $O(N^{2/3}+\log\Delta)$ parameters (i.e., layers) suffice to memorize any $\Delta$-separated set of $N$ pairs.
\end{customthm}
Theorem~\ref{thm:3-informal} states that under $2^{O(N^{2/3})}$-separateness of inputs, the network width does not necessarily have to increase with $N$ for memorization with sub-linear parameters.
Furthermore, it shows that even a surprisingly narrow network of width $3$ has a superior memorization power than a fixed-depth network, requiring only $O(N^{2/3})$ parameters for memorizing any $2^{O(N^{2/3})}$-separated $N$ pairs.

Theorems~\ref{thm:lw-informal} and \ref{thm:3-informal} show that we can construct some \emph{tailor-made} network architectures that can memorize $N$ pairs with $o(N)$ parameters, under the $\Delta$-separateness condition. This means that these theorems do not answer the question of how many such data points can a \emph{given} network memorize.
To answer this question, we provide sufficient conditions for identifying the maximum number of points given general networks (Theorem~\ref{thm:criteria}).
In a nutshell, our conditions indicate that to memorize more pairs under the same budget for the number of parameters, the network must be deep, and most of its parameters should be concentrated in the first layers. In other words, the network must have a deep and narrow architectures in the final layers, as also proposed in \citep{bartlett19}.
Our sufficient conditions successfully incorporate the characteristics of datasets, the number of parameters, and the architecture, which enable us to memorize $\Delta$-separated datasets with number of pairs super-linear in the number of parameters.
This is in contrast to the prior results that the number of arbitrary pairs that can be memorized is at most proportional to the number of parameters \citep{yamasaki93,yun19,vershynin20}.

Finally, we provide empirical results corroborating our theoretical findings that deep networks often memorize better than their shallow counterparts with a similar number of parameters.
Here, we emphasize that better memorization power does not necessarily imply better generalization.
We indeed observe that shallow and wide networks often generalize better than deep and narrow networks, given the same (or similar) training accuracy. 

\paragraph{Organization.} We first introduce related works in Section~\ref{sec:relatedworks}. In Section~\ref{sec:preliminary}, we introduce notation and the problem setup.
We formally state our main results and discuss them in Section~\ref{sec:mainresults}. We provide the proof of our main theorem in Section~\ref{sec:pfsketch:lw}.
In Section~\ref{sec:exp}, we provide empirical observations on the effect of depth and width in neural networks.
Finally, we conclude the paper in Section~\ref{sec:conclusion}.

\section{Related works}\label{sec:relatedworks}
While introducing related works, we hide the input dimension $d_x$ in $O,\Omega,\Theta$ as in convention. We note that in our main results in Section \ref{sec:mainresults}, $O,\Omega,\Theta$ will only hide universal constants, excluding $d_x$.
\paragraph{Sufficient number of parameters for memorization.} Identifying the sufficient number of parameters for memorizing arbitrary $N$ pairs has a long history.
Earlier works mostly focused on bounding the number of hidden neurons in shallow networks required for memorization.
\citet{baum88} proved that for $2$-layer $\step$\footnote{$\step$ denotes the binary threshold activation function: $x\mapsto\mathbf{1}[x\ge0]$.} networks, $O(N)$ hidden neurons (i.e., $O(N)$ parameters) are sufficient for memorizing arbitrary $N$ pairs when inputs are in general position.
\citet{huang98} showed that the same bound holds for any bounded and nonlinear activation function $\sigma$ satisfying that either $\lim_{x\rightarrow-\infty}\sigma(x)$ or $\lim_{x\rightarrow\infty}\sigma(x)$ exists, without any condition on inputs.
The $O(N)$ bounds on the number of hidden neurons was improved to $O(\sqrt{N})$ by exploiting an additional hidden layer by
\cite{huang03}; nevertheless, this construction still requires $O(N)$ parameters.

With the advent of deep learning,
the focus has shifted to modern activation functions and deeper architectures. 
\cite{zhang17} proved that $O(N)$ hidden neurons are sufficient for $2$-layer $\relu$ networks to memorize arbitrary $N$ pairs. 
\cite{yun19} showed that for deep $\relu$ (or hard $\tanh$) networks having at least $3$ layers, $O({N})$ parameters are sufficient.
\cite{vershynin20} proved a similar result for $\step$ (or $\relu$) networks for memorizing arbitrary set of unit vectors $\{x_i\}_{i=1}^N$ satisfying $\|x_i-x_j\|_2^2=\Omega(\frac{\log\log d_{\max}}{\log  d_{\min}})$, $N=e^{O(d_{\min}^{1/5})}$, and $d_{\max}=e^{O(d_{\min}^{1/5})}$ where $d_{\max}$ and $d_{\min}$ denote the maximum and the minimum hidden dimensions, respectively.

In addition, the memorization power of modern network architectures has also been studied. 
\cite{hardt17} showed that  $\relu$ networks consisting of residual blocks with $O(N)$ hidden neurons can memorize any arbitrary set of unit vectors $\{x_i\}_{i=1}^N$ satisfying $\|x_i-x_j\|_2\ge\rho$ for some absolute constant $\rho>0$.
\cite{nguyen18} studied a broader class of layers and proved that $O(N)$ hidden neurons suffice for convolutional neural networks consisting of fully-connected, convolutional, and max-pooling layers for memorizing arbitrary $N$ pairs having different patches.

\paragraph{Necessary number of parameters for memorization.} The necessary number of parameters for memorization has also been studied. \cite{sontag97} showed that for any neural network using analytic definable activation functions, $\Omega(N)$ parameters are necessary for memorizing arbitrary $N$ pairs. Namely, given any network using analytic definable activation with $o(N)$ parameters, there exists a set of $N$ pairs that the network cannot memorize.

The Vapnik-Chervonenkis (VC) dimension is also closely related to the memorization power of neural networks.
While the memorization power studies the number of parameters for memorizing \emph{arbitrary} $N$ pairs, the VC-dimension is related to the number of parameters for memorizing \emph{at least one set} of $N$ inputs with arbitrary labels. In particular, upper bounds on the VC-dimension translates to lower bounds on the necessary number of parameters for memorizing arbitrary $N$ pairs.
The VC-dimension of neural networks has been studied for various types of activation functions. 
For memorizing at least one set of $N$ inputs with arbitrary labels, it is known that $\Theta(N/\log N)$ parameters are necessary \citep{baum89} and sufficient \citep{maass97} for $\step$ networks.
Similarly,
\cite{karpinski97} proved that $\Omega(\sqrt{N}/U)$ parameters are necessary for sigmoid networks of $U$ neurons.
Recently, \cite{bartlett19} showed that $\Theta(N/(\bar L\log N))$ parameters are necessary and sufficient for $L$-layer networks using any piecewise linear activation function where $\bar L:=\frac{1}{W_L}\sum_{\ell=1}^LW_\ell$ and $W_\ell$ denotes the number of parameters up to the $\ell$-th layer.
\paragraph{Benefits of depth in neural networks}
To understand deep learning, researchers have investigated the advantages of deep neural networks compared to shallow neural networks with a similar number of parameters.
Initial results discovered examples of deep neural networks that cannot be approximated by shallow neural networks without using exponentially many parameters \citep{telgarsky16,eldan16,arora18}.
Recently, it is discovered that deep neural networks require fewer parameters than shallow neural networks to represent or approximate a class of periodic functions \citep{chatziafratis20b,chatziafratis20a}.
For approximating continuous functions, \cite{yarotsky18} proved that the number of required parameters for $\relu$ networks of constantly bounded depth are square to that for deep $\relu$ networks.

\section{Problem setup}\label{sec:preliminary}
In this section, we describe the problem setup and notation. First, we introduce frequently used notation.
We use $\log$ to denote the logarithm to the base $2$. We let $\relu$ be the function $x\mapsto\max\{x,0\}$, $\step$ be the function $x\mapsto\mathbf{1}[x\ge0]$, and $\id$ be the function $x\mapsto x$.
For $\mathcal X\subset\mathbb R$, we denote $\lfloor\mathcal X\rfloor:=\{\lfloor x\rfloor:x\in\mathcal X\}$.
For $n\in\mathbb N$ and a set $\mathcal X$, we denote $\binom{\mathcal X}{n}:=\{\mathcal S\subset\mathcal X:|\mathcal S|=n\}$. For $n\in\mathbb N$, we use $[n]:=\{0,\dots,n-1\}$. We also extend the modulo operation to $x\bmod y:=x-y\cdot\lfloor\frac{x}{y}\rfloor$ for $x\ge0$ and $y>0$. 
To describe our results precisely, we use $O(\cdot),\Theta(\cdot),\Omega(\cdot)$ in our theorems and lemmas only to hide universal constants, excluding the input dimension $d_x$.

Throughout this paper, we consider fully-connected feedforward networks. In particular, we consider the following setup: Given an activation function $\sigma$, we define a neural network $f_\theta$ of $L$ layers (or equivalently $L-1$ hidden layers), input dimension $d_x$, output dimension $1$, and hidden layer dimensions $d_1,\dots,d_{L-1}$ parameterized by $\theta$ as $f_\theta:=t_{L-1}\circ\sigma\circ\cdots\circ t_1\circ\sigma\circ t_0.$ 
Here, $t_\ell:\mathbb{R}^{d_\ell}\rightarrow\mathbb R^{d_{\ell+1}}$ is an affine transformation parameterized by $\theta$.\footnote{We set $d_0:=d_x$ and $d_{L}:=1$.} We count the number of parameters as the number of weights and biases in networks. We define the width of $f_\theta$ as the maximum over $d_1,...,d_{L-1}$.
We denote a neural network using an activation function $\sigma$ by a ``$\sigma$ network''
and a neural network using two activation functions $\sigma_1,\sigma_2$ by a ``$\sigma_1$+$\sigma_2$ network.''

As we introduced in Section \ref{sec:contribution}, our main results hold for any sigmoidal activation function and $\relu$. 
Formally, we define the \emph{sigmoidal} functions as follows. 
\begin{mydefinition}\label{def:sigmoidal}
We say a function $\sigma:\mathbb{R}\rightarrow\mathbb{R}$ is sigmoidal if the following conditions hold.
\begin{list}{{\tiny$\bullet$}}{\leftmargin=1.8em}
  \setlength{\itemsep}{1pt}
  \vspace*{-4pt}
    \item
Both $\lim_{x\rightarrow-\infty}\sigma(x)$, $\lim_{x\rightarrow\infty}\sigma(x)$ exist and $\lim_{x\rightarrow-\infty}\sigma(x)\ne\lim_{x\rightarrow\infty}\sigma(x)$.
\item There exists $z\in\mathbb R$ such that $\sigma$ is continuously differentiable at $z$ and $\sigma^\prime(z )\ne0$.
\end{list}
\end{mydefinition}
A class of sigmoidal functions covers many activation functions including sigmoid, $\tanh$, hard $\tanh$, etc. 
Furthermore, since hard $\tanh$ can be represented as a composition or a summation of two $\relu$ functions,\footnote{hard $\tanh(x)=\relu(x+1)-\relu(x-1)-1=\relu\big(2-\relu(1-x)\big)-1$} our results for sigmoidal activation functions hold for $\relu$ as well, with an additional constant factor on depth or width.

Lastly, we formally define the memorization as follows.
\begin{mydefinition}\label{def:memorization}
Given $C,d_x\in\mathbb N$, a set of inputs $\mathcal X\subset\mathbb R^{d_x}$, a label function $y:\mathcal X\rightarrow[C]$, and a network $f_\theta$ parameterized by $\theta$, we say $f_\theta$ can memorize $\{(x,y(x))\}_{x\in\mathcal X}$ in $d_x$ dimension with $C$ classes
if for any $\varepsilon>0$, there exists $\theta$ such that $|f_\theta(x)-y(x)|\le\varepsilon$ for all $x\in\mathcal X$.
\end{mydefinition}
In Definition \ref{def:memorization}, we define memorizability as the ability to uniformly \textit{approximate} the labels up to an arbitrarily small error $\varepsilon > 0$, instead of the exact memorization (i.e., $\varepsilon = 0$). 
This \textit{almost lossless} definition allows us to use the following two-step proof strategy: (1) Construct a $\step+\id$ network that exactly memorizes the dataset, (2) Approximate the $\step+\id$ network by another network using \textit{any} sigmoidal activation function; see Section~\ref{sec:pfsketch:lw} for a more detailed outline. Nevertheless, the exact memorization holds for the activation functions where an exact implementation of $\step+\id$ network for finite inputs is possible, e.g., hard $\tanh$ or $\relu$. We often write ``$f_\theta$ can memorize arbitrary $N$ pairs'' without ``in $d_x$ dimension with $C$ classes'' throughout the paper.

\section{Main results}\label{sec:mainresults}
\subsection{Memorization via sub-linear parameters}\label{sec:sublinear}
\paragraph{Efficacy of depth for memorization.} Now, we introduce our main theorem on memorizing $N$ pairs with $o(N)$ parameters.
The proof of Theorem \ref{thm:lw} is presented in Section~\ref{sec:pfsketch:lw}.
\begin{theorem}\label{thm:lw} For any $C, N, d_x \in \mathbb{N}$, $w\in[2/3,1]$, and a sigmoidal activation function $\sigma$, there exists a $\sigma$ network $f_\theta$ of $O\big(\log  d_x+\log\Delta+\frac{N^{2-2w}}{1+(1.5w-1)\log N}\log C\big)$ hidden layers and {$O\big(d_x+\log\Delta+N^w+{N^{1-w/2}}\log C\big)$} parameters such that $f_\theta$ can memorize any $\Delta$-separated set of $N$ pairs in $d_x$ dimension with $C$ classes.
\end{theorem}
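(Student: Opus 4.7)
Following the template spelled out in Section~\ref{sec:preliminary}, I would first build a $\step+\id$ network that \emph{exactly} memorizes the $N$ pairs, and then replace every $\step$ and $\id$ unit by the given sigmoidal $\sigma$, using Definition~\ref{def:sigmoidal} to control the error. The $\step+\id$ network would itself be a composition of two modules. A \emph{front-end} converts each $d_x$-dimensional input into a single integer lying in a bounded range; a \emph{back-end} takes that integer and outputs its label. The parameter/depth budget splits accordingly: the front-end accounts for the $O(d_x+\log d_x+\log\Delta)$ summands, and the back-end accounts for the $O(N^w+N^{1-w/2}\log C)$ parameters and the $O(N^{2-2w}\log C/\log N)$ depth.

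\textbf{Front-end: reducing to one dimension.} I would choose a direction $a\in\mathbb{R}^{d_x}$ so that the projections $\langle a,x_i\rangle$ are pairwise distinct; such an $a$ exists because the ``bad'' set (the union of hyperplanes normal to $x_i-x_j$) has measure zero, and a deterministic choice can be extracted by a short coordinate-by-coordinate argument. $\Delta$-separateness then ensures that these projections can be linearly rescaled into an integer range of size $\mathrm{poly}(N,\Delta)$, and a $\step+\id$ ladder of depth $O(\log\Delta)$ implementing a binary search turns the scalar into a bounded integer index. If width is constrained, the $d_x$-fold inner product is computed by a balanced binary tree of partial sums, contributing the $O(\log d_x)$ depth term.

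\textbf{Back-end: sub-linear memorization in one dimension.} The heart of the proof is a $\step+\id$ network that assigns arbitrary $\log C$-bit labels to $N$ known integer positions using only $O(N^w)$ parameters and $O(N^{2-2w}\log C/\log N)$ layers. The plan is block decomposition combined with bit-extraction. Sort the positions and partition them into $B=\Theta(N^w)$ contiguous blocks of roughly $N^{1-w}$ positions each. A block-selector of $O(B)$ $\step$ units identifies which block contains the current input. Within a block, the $\log C$-bit labels of the $N/B$ positions are encoded into a small number of high-precision real weights, and an $\id+\step$ bit-extraction gadget \emph{sequentially} reads groups of $\Theta(\log N)$ bits per layer from those weights: an $\id$ channel carries the current remainder, a $\step$ unit extracts the top bit, and a linear combination subtracts off the extracted portion. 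Decoding one label therefore takes $O((N/B)\log C/\log N)=O(N^{1-w}\log C/\log N)$ layers, and sequencing the $B$ blocks gives the claimed overall depth; the parameter count $N^{1-w/2}\log C$ tracks the total bit content stored across the reader's shared weights at the chosen balance.

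\textbf{Sigmoidal approximation and main obstacle.} To pass from $\step+\id$ to $\sigma$, I would use the regular point $z$ in Definition~\ref{def:sigmoidal} to emulate $\id$ via an affine rescaling of $\sigma$ near $z$, and the two distinct limits $\lim_{x\to\pm\infty}\sigma(x)$ to emulate $\step$ by steepening the argument; a standard Lipschitz-propagation argument then drives the accumulated uniform error below any prescribed $\varepsilon>0$, which suffices by Definition~\ref{def:memorization}. The genuinely delicate part will be the back-end: one must verify that (i) the bit-extraction gadget can be implemented as a narrow $\step+\id$ network with the advertised $\Theta(\log N)$-bits-per-layer yield, (ii) the block boundaries are correctly and stably computable from the front-end's output despite finite-precision scaling, and (iii) the $\log N$ factor in the depth denominator really arises from packing multiple label bits into each real weight, since this is what separates the construction from the naive $O(\sqrt N)$ bound obtained by plain block decomposition and is what enables $N^w$ at $w=2/3$.
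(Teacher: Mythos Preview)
Your high-level template is exactly the paper's: build a $\step+\id$ network first, then approximate by $\sigma$; split the $\step+\id$ network into a front-end projection and a back-end bit-extractor. But your construction is missing the crucial intermediate step, and because of that your back-end arithmetic does not reproduce the exponents in the statement.

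After the projection, the $N$ scalars live in a range of size $\Theta(N^2\sqrt{d_x}\,\Delta)$ (Lemma~\ref{lem:projection-rough}); the $\sqrt{d_x}$ factor is unavoidable here, and it---not a binary tree for the inner product---is the source of the $\log d_x$ term. Your ``$O(\log\Delta)$ ladder'' corresponds to the paper's cheap compression (Lemma~\ref{lem:compression1-rough}): a single modular fold $x\mapsto (x+b)\bmod T$ can halve the range without creating collisions among $N$ points, but only while $T>N^2/4$ (a pigeonhole count over the $T$ choices of $b$). So after $O(\log\Delta+\log d_x)$ cheap steps you are stuck at range $\Theta(N^2)$, not at $[N]$. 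Pushing further down to range $V=\Theta(N^{2-w})$ costs real parameters: each halving below $N^2$ needs width $\Theta(N^2/K)$ (Lemma~\ref{lem:compression2-rough}), and summing these gives $\Theta(N^w)$ parameters. This ``expensive compression'' is absent from your proposal, yet it is exactly where the stopping point $V=N^{2-w}$ and half of the $N^w$ budget come from.

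Your back-end then inherits the wrong input. The bit-extraction machine (Lemma~\ref{lem:memorizer-rough}, following \cite{bartlett19}) operates on the \emph{entire integer range} $[V]$, not on $N$ arbitrary positions: it writes $x\mapsto(\lfloor x/B\rfloor,\,x\bmod B)$ with $A=V^p$, $B=V^{1-p}$, stores one real weight per $a\in[A]$, and reads $R$ bits per layer for $\Theta(BD/R)$ layers. Plugging $V=N^{2-w}$ and $p=w/(2-w)$ yields $A=N^w$, $B=N^{2-2w}$, $R=1+(1.5w-1)\log N$, which is the depth denominator in the statement. Your partition into $N^w$ blocks of $N^{1-w}$ \emph{positions} each presumes the front-end already reached range $[N]$ (otherwise computing the within-block index needs $N$ data-dependent thresholds, i.e., $\Theta(N)$ parameters). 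And the sentence ``sequencing the $B$ blocks gives the claimed overall depth'' is arithmetically off: $N^w\cdot N^{1-w}\log C/\log N=N\log C/\log N$, not $N^{2-2w}\log C/(1+(1.5w-1)\log N)$. The exponent $2-2w$ is not ``blocks $\times$ block-size''; it is the block size $V^{1-p}$ in the \emph{compressed} range $V=N^{2-w}$.
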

Note that the $1+(1.5w-1)\log N$ denominator exists in the number of layers in Theorem \ref{thm:lw}, which is omitted in its informal version in Section \ref{sec:contribution}.
In addition, while we only address sigmoidal activation functions in the statement of Theorem~\ref{thm:lw}, the same conclusion holds for $\relu$ networks as we described in Section \ref{sec:preliminary}.

In Theorem \ref{thm:lw}, $\Delta$ induces only $O(\log\Delta)$ overhead  to the number of layers and the number of parameters.
As we introduced in Section~\ref{sec:contribution}, $\log\Delta$ for modern datasets is often very small.
Furthermore, $\log\Delta$ can be small for random inputs. For example, a set of $d_x$-dimensional i.i.d.\ standard normal random vectors of size $N$ satisfies $\log\Delta=O(\frac1{d_x}\log(N/\sqrt{\delta}))$ with probability at least $1-\delta$ (see Lemma \ref{lem:gaussian} in Appendix \ref{sec:gaussian}).
Hence, the $\Delta$-separateness condition is often negligible.

Suppose that $d_x$ and $C$ are treated as constants, as also assumed in existing results. Then, Theorem~\ref{thm:lw} implies that if $\log\Delta=O(N^w)$ for some $w<1$, then $\Theta(N^w)$ (i.e., sub-linear to $N$) parameters are \emph{sufficient} for networks using a sigmoidal or $\relu$ activation function to memorize arbitrary $\Delta$-separated set of $N$ pairs.
Note that the condition $\log\Delta=O({N^w})$ is very loose for many practical datasets, especially for those with huge $N$.
Combined with the lower bound $\Omega(N/\log N)$ on the \emph{necessary} number of parameters for $\relu$ networks of constant depth \citep{bartlett19},
Theorem~\ref{thm:lw} implies that the depth growing in $N$ is \emph{necessary and sufficient} for memorizing a large class (i.e., $\Delta$-separated) of $N$ pairs with $o(N/\log N)$ parameters. In other words, deep $\relu$ networks have stronger memorization power than shallow $\relu$ networks.

\paragraph{Unimportance of width for memorization.} Given that the depth is critical for memorization with $o(N/\log N)$ parameters, we show that the width is not very critical.
Specifically, we prove that extremely narrow networks of width $3$ can memorize with $O(N^{2/3})$ layers (i.e., $O(N^{2/3})$ parameters) under $\log\Delta=O(N^{2/3})$ and constant $d_x$, as stated in the following theorem.
The proof of Theorem \ref{thm:3} is presented in Section \ref{sec:criteria}.
\begin{theorem}\label{thm:3} For any $C,N,d_x\in\mathbb{N}$, $\Delta\ge 1$, and sigmoidal activation function $\sigma$,
a $\sigma$~network of $\Theta(\log d_x+\log\Delta+N^{2/3}\log C)$ hidden layers and width $3$ can memorize any $\Delta$-separated set of $N$ pairs in $d_x$ dimension with $C$ classes.
\end{theorem}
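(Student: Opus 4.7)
The plan is to follow the same two-phase recipe sketched for Theorem~\ref{thm:lw} in Section~\ref{sec:pfsketch:lw}: first construct a width-$3$ $\step+\id$ network that \emph{exactly} memorizes the dataset, and then replace each $\step$ unit by a suitably scaled and shifted sigmoidal $\sigma$, which costs only constant blow-ups in depth and width (the $\relu$ case then follows from the hard-$\tanh$ identity remarked after Definition~\ref{def:sigmoidal}). Since the sigmoidal approximation step is identical to that for Theorem~\ref{thm:lw}, the real work is the exact width-$3$ $\step+\id$ construction.

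The exact construction will compose three width-$3$ sub-networks in series. First, a \emph{projection and quantization stage} of $O(\log d_x+\log\Delta)$ hidden layers reduces the $d_x$-dimensional input to a single scalar $z$ lying on a finite regular grid, in such a way that distinct $x$'s stay distinct: the initial affine map $\mathbb R^{d_x}\to\mathbb R^3$ absorbs the $O(d_x)$ input-side parameters, while the subsequent width-$3$ layers implement a rescale-round-accumulate cascade (one neuron holds the current scalar, one performs a $\step$ comparison, one accumulates via $\id$). Second, a \emph{sorted-index stage} converts $z$ into the rank $i(x)\in[N]$ of $x$ in the sorted order of the dataset, by a block decomposition analogous to the one used for Theorem~\ref{thm:lw}: partition the $N$ sorted scalars into $N^{1/3}$ blocks of size $N^{2/3}$ and sequentially determine the block index and the in-block position, which can be laid out as $O(N^{2/3})$ width-$3$ layers. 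Third, a \emph{label lookup stage} of $O(N^{2/3}\log C)$ layers outputs $y_{i(x)}\in[C]$ from $i(x)$ using a bit-packing argument in the spirit of \cite{bartlett19}: the $N\log C$ label bits are stored in $O(N^{2/3})$ real-valued weights, each carrying $O(N^{1/3}\log C)$ bits, which are unpacked one bit at a time as depth proceeds. Summing the three gives the advertised hidden-layer count.

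The main obstacle is engineering the bit-extraction gadget of the third stage (and the analogous within-block decoder of the second stage) with only three neurons per layer. The textbook bit-extraction needs at least four scratch variables: the packed constant, the current index, a running accumulator, and a threshold scratch. To fit into width $3$ I plan to interleave the roles across alternating depths, so that at odd-numbered layers the three neurons carry $(\text{packed constant},\ \text{index/scalar},\ \step\text{-comparison result})$ while at even-numbered layers they carry $(\text{packed constant},\ \text{accumulated bits},\ \id\text{ pass-through})$, ensuring the accumulator and the comparison never need to coexist in the same layer. Once this three-neuron bit-extraction gadget is verified, the three stages concatenate cleanly without further interaction, the sigmoidal approximation lemma of Section~\ref{sec:pfsketch:lw} is applied to each $\step$, and the universal constants are absorbed into the $\Theta$ notation to yield the theorem.
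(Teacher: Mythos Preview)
Your overall plan---project-and-quantize, reduce to a small integer index, bit-unpack the label, then sigmoidally approximate---matches the paper's skeleton, and your Stage~1 and the width-$3$ bit-extraction gadget of Stage~3 are essentially the paper's Lemmas~\ref{lem:projection-rough}, \ref{lem:compression1-rough} and Lemma~\ref{lem:bitextract-3}.

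The gap is Stage~2. After $O(\log d_x+\log\Delta)$ halvings the scalar $z$ lives on a grid of size $\Theta(N^2)$: Lemma~\ref{lem:compression1-rough} bottoms out at $\lfloor N^2/4+1\rfloor$, not at $N$. You now want the \emph{exact rank} $i(x)\in[N]$ in $O(N^{2/3})$ width-$3$ layers via an $N^{1/3}\times N^{2/3}$ block scheme. Determining the block index $b$ costs $O(N^{1/3})$ comparisons, fine; but the $N^{2/3}$ within-block comparisons have thresholds $z_{bN^{2/3}+1},\dots,z_{(b+1)N^{2/3}}$ that are \emph{different for each block} $b$---they are arbitrary points of the size-$\Theta(N^2)$ grid, not computable from $b$ by arithmetic. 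With only three neurons there is no room for a constant-depth table lookup of a block-dependent threshold; implementing each lookup by a second scan over the $N^{1/3}$ blocks costs $O(N^{1/3})$ layers per within-block comparison, hence $O(N)$ layers in total. Your odd/even interleaving trick addresses a bookkeeping issue, not this data-dependence issue.

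The paper sidesteps this by \emph{never computing the rank}. Its second stage (Lemma~\ref{lem:compression2-rough}, width-$3$ form Lemma~\ref{lem:compression2-free}) only shrinks the grid from $\Theta(N^2)$ down to $\Theta(N^{4/3})$, via repeated modular folds $x\mapsto(x+b)\bmod T$ whose shifts $b$ are chosen by a counting/pigeonhole argument; each successive halving needs twice as many width-$3$ layers as the previous one, so performing $\tfrac23\log N$ halvings costs $\sum_i 2^{i+3}=O(N^{2/3})$ layers (this is exactly Condition~2 of Theorem~\ref{thm:criteria} with $K=\log N^{2/3}$). Stage~3 then memorizes a map with domain of size $\Theta(N^{4/3})$---not $N$---using the $A\times B$ bit-packing with $A=B=\Theta(N^{2/3})$: $\Theta(N^{2/3})$ width-$3$ layers to select the packed weight $w_a$ (Lemma~\ref{lem:paramextract-free}) and $\Theta(N^{2/3}\log C)$ layers to peel off its bits (Lemma~\ref{lem:bitextract-3}). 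Stopping the compression at $N^{4/3}$ rather than pushing to $N$ is precisely what balances the two stages at the $N^{2/3}$ exponent; your choice of $O(N^{2/3})$ packed weights each carrying $O(N^{1/3}\log C)$ bits in Stage~3 would only be correct if Stage~2 had already delivered an index in $[N]$, which it cannot at this depth.
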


The statement of Theorem \ref{thm:3} might be somewhat surprising since the network width for memorization does not depend on the input dimension $d_x$.
This is in contrast with the recent universal approximation results that width at least $d_x+1$ is necessary for approximating functions on $d_x$-dimensional domains  \citep{lu17,hanin17,johnson19,park21}. 
The main difference follows from the fundamental difference in the two approximation problems, i.e., approximating a function at finite inputs versus infinite inputs (e.g., the unit cube).
Any set of $N$ distinct input vectors can be easily mapped to $N$ different scalar values by taking inner products with a random vector (e.g., standard Gaussian). Hence, memorizing finite input-label pairs in $d_x$-dimension can be easily translated into memorizing finite input-label pairs in \emph{one}-dimension. In other words, the input dimensionality $d_x$ is not very important as long as they can be translated to distinct scalar values. 
In contrast, there is no ``natural'' way to design an injection from $d_x$-dimensional unit cube to lower dimension. Namely, to not lose the ``information'' from inputs, width $d_x$ is required.
Therefore, an arbitrary function cannot be approximated by the network of width independent of $d_x$.

\subsection{Sufficient conditions for identifying memorization power}\label{sec:criteria}
While Theorem \ref{thm:lw} constructs some \emph{tailor-made} network architectures for memorization, the following theorem states sufficient conditions for verifying the memorization power of \emph{any given} network architectures. The proof of Theorem \ref{thm:criteria} is presented in Appendix \ref{sec:pfthm:criteria}.

\begin{theorem}\label{thm:criteria}
For any sigmoidal activation function $\sigma$, let $f_\theta$ be a $\sigma$ network of $L$ hidden layers having $d_\ell\ge3$ neurons at the $\ell$-th hidden layer.
Then, for any $C,N,d_x\in\mathbb N$ and $\Delta>1$, $f_\theta$ can memorize any $\Delta$-separated set of $N$ pairs in $d_x$ dimension with $C$ classes if the following statement holds:

There exist $0<L_1<\dots<L_K<L$ for some $2\le K\le\log N$ satisfying the conditions below.
\begin{list}{{\tiny$\bullet$}}{\leftmargin=1.8em}
  \setlength{\itemsep}{1pt}
  \vspace*{-4pt}
    \item[1.] $\prod_{\ell=1}^{L_1}\lfloor(d_\ell+1)/2\rfloor\ge\Delta\sqrt{2\pi d_x}$.
    \item[2.] $\sum_{\ell=L_{i-1}+1}^{L_{i}}(d_\ell-2)\ge2^{i+3}$ for all $1<i\le K-1$.
    \item[3.] $2^{K}\cdot\Big(\sum_{\ell=L_{K-1}+1}^{L_K}(d_\ell-2)\Big)\cdot\Big\lfloor\frac{L-L_K}{2\lceil\log C\rceil+1}\Big\rfloor\ge N^2+4$.
\end{list}
\end{theorem}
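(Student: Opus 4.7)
The plan is to reuse the two-step template behind Theorem~\ref{thm:lw}: first construct a $\step+\id$ network with the prescribed architecture that exactly memorizes the dataset, then approximate that $\step+\id$ network by the corresponding sigmoidal (or $\relu$) network in the sense of Definition~\ref{def:memorization}. The approximation step is standard once only finitely many inputs are processed, so the bulk of the work is the $\step+\id$ construction, which I would split into three stages aligned with the index cuts $L_1 < L_2 < \dots < L_K < L$. Each stage implements a subroutine whose capacity requirement matches exactly one of the three inequalities in the statement.

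\textbf{Stage 1 (layers $1$ through $L_1$): reduction to an integer grid.} Using a Gaussian projection, I would first collapse the $d_x$-dimensional inputs to $N$ distinct scalars while controlling the ratio of the maximum gap to the minimum gap; the $\sqrt{2\pi d_x}$ factor in condition~1 is exactly the overhead from normalizing the projection vector. The next $L_1$ layers then implement a bit-by-bit refinement routine, in which a width-$d_\ell$ layer supplies $\lfloor(d_\ell+1)/2\rfloor$ thresholds (two $\step$ neurons per threshold, plus shared $\id$ bookkeeping), so that the multiplicative resolution after $L_1$ layers equals $\prod_{\ell=1}^{L_1}\lfloor(d_\ell+1)/2\rfloor$. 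Condition~1 guarantees that this resolution beats $\Delta$, so all $N$ projected values are separated into distinct integers in a bounded range.

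\textbf{Stage 2 (layers $L_{i-1}+1$ through $L_i$ for $1<i\le K-1$): geometric grouping.} I would then maintain, after the $i$-th block, a partition of the integer-coded inputs into $2^i$ groups whose aggregate state is carried on two propagated neurons, leaving the remaining $d_\ell-2$ neurons per layer available for a doubling gadget. Each block must double the number of groups it tracks; a direct count of the step+id primitives needed to implement one doubling shows that $2^{i+3}$ extra neurons across a block suffice, which is precisely condition~2. After $K-1$ such doublings the state has been refined into $2^K$ parallel groups, set up to be labeled.

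\textbf{Stage 3 (layers $L_{K-1}+1$ through $L_K$ and the final $L-L_K$ layers): parallel labeling.} With $2^K$ groups already formed, the block up to $L_K$ provides $\sum_{\ell=L_{K-1}+1}^{L_K}(d_\ell-2)$ additional channels, yielding an effective throughput of $2^K\cdot\sum(d_\ell-2)$ bookkeeping slots that can process data pairs in parallel. The remaining $L-L_K$ layers then act as label-writing gadgets, each consuming $2\lceil\log C\rceil+1$ layers to commit one batch of separations to a $C$-valued output, so $\lfloor(L-L_K)/(2\lceil\log C\rceil+1)\rfloor$ gadgets can be stacked. Condition~3 requires the product of these three quantities to exceed $N^2+4$, which is exactly the number of ordered pairwise separations one needs to certify (with four slots of constant bookkeeping) before every input can be pinned to its correct label.

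The main obstacle will be the exact wiring of the geometric grouping in Stage~2, namely showing that $2^{i+3}$ step+id neurons suffice to double the number of tracked groups while preserving two carry channels. This is the combinatorial heart of the argument and cannot be hidden behind a black-box lemma. The rest---the projection to one dimension and quantization to an integer grid in Stage~1, the throughput accounting in Stage~3, and the final sigmoidal/$\relu$ approximation---reduces to standard components already used in the proof of Theorem~\ref{thm:lw}.
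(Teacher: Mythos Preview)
Your high-level template (three stages matching the three conditions, then sigmoidal approximation at the end) is correct and matches the paper. Stage~1 is also essentially right: projection followed by a layer-by-layer ``folding'' whose per-layer compression ratio is $\lfloor(d_\ell+1)/2\rfloor$ is exactly Lemma~\ref{lem:compression1-free}, and condition~1 is what guarantees the accumulated product beats $\Delta\sqrt{2\pi d_x}$ so that the scalars land in $[0,\lfloor N^2/4+1\rfloor)$.

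The gap is in your reading of Stages~2 and~3. In the paper, Stage~2 does \emph{not} build $2^i$ parallel groups. It continues the same range-compression idea as Stage~1, but below the birthday threshold $N^2/4$: each block from $L_{i-1}+1$ to $L_i$ halves the current upper bound on the scalar range using Lemma~\ref{lem:compression2-free}, and halving at level $i$ needs roughly $N^2/K\approx 2^{i}$ extra step neurons (the $2^{i+3}$ in condition~2 is this with constants). After $K-2$ such halvings the range is $\lceil\lfloor N^2/4+1\rfloor/2^{K-2}\rceil$, and there is still only one scalar being carried, not $2^K$ parallel tracks.

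Correspondingly, the $N^2+4$ in condition~3 is \emph{not} a count of ordered pairwise separations; it is $4\cdot(N^2/4+1)$, i.e., four times the range bound produced by Stage~1. Stage~3 in the paper (Lemma~\ref{lem:learning-free2}) works by encoding all labels into the binary expansions of $A=\sum_{\ell=L_{K-1}+1}^{L_K}(d_\ell-2)$ real weights and then reading out the relevant bits over $B=\lfloor(L-L_K)/(2\lceil\log C\rceil+1)\rfloor$ width-$3$ blocks; the requirement is $A\cdot B\ge$ (range after Stage~2), which after multiplying through by $2^K$ is exactly condition~3. Your ``throughput of $2^K\cdot\sum(d_\ell-2)$ slots processing pairwise separations'' does not correspond to any mechanism that would pin each input to its label, and the $N^2$ accounting you propose would not close. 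The combinatorial heart you flagged in Stage~2 is therefore a different (and simpler) lemma than you anticipated: it is a pigeonhole argument for range-halving, not a group-doubling gadget.
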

Our conditions in Theorem \ref{thm:criteria} require that the layers of the network can be ``partitioned'' into $K+1$ distinct parts characterized by $L_1,\dots,L_K$ for some $K\ge2$.
The partition into $K+1$ parts corresponds to conditions in Theorem \ref{thm:criteria}: The first part is for the first condition, the second to the $(K-1)$-th parts are for the second condition, and the remaining parts are for the third. We postpone the discussion on the role of each condition to Section \ref{sec:pfsketch:lw}. 
In Theorem \ref{thm:criteria}, one can observe that the choice of $K$ affects the second condition and the third condition. In particular, larger $K$ implies that more hidden neurons are required to satisfy the second condition but less hidden neurons are required to satisfy the third condition.
We discuss how $K$ affects these two conditions in more detail later in this section.

Now we discuss the three conditions in Theorem \ref{thm:criteria} in detail.
The first condition considers the first $L_1$ hidden layers.
In order to satisfy this condition, deep and narrow architectures are better than shallow and wide architectures under a similar number of parameters due to the product form $\prod_{\ell=1}^{L_1}\lfloor(d_\ell+1)/2\rfloor$.
Nevertheless, the architecture of the first $L_1$ hidden layers is not very critical since only $\log(\Delta\sqrt{2\pi d_x})$ layers are sufficient even with width $3$ (e.g.,  $\log\Delta<17$ for the ImageNet dataset).

The second condition requires that the number of hidden neurons $\sum_{\ell = L_{i-1}+1}^{L_i} (d_\ell - 2)$ in each \emph{part} of layers is larger than or equal to some threshold value $2^{i+3}$.
In particular, the second condition is closely related to the third condition due to the following trade-off: As $K$ increases, the LHS in the third condition increases, i.e., increasing $K$ by one decreases the required neurons/layers after the $L_{K-1}$-th layer for satisfying the third condition by half. However, the second condition requires more hidden neurons as $K$ grows.
Simply put, increasing $K$ by one requires doubling hidden neurons from the $(L_1+1)$-th hidden layer to the $L_{K-1}$-th hidden layer.
Nevertheless, this doubling hidden neurons makes the LHS of the third condition double as well. 
In other words, due to the trade-off between the second condition and the third condition, there will be some optimal choice of $K$ (and $L_1,\dots,L_K$) maximizing the number of memorizable pairs for a given network.

The third condition is simple. As we explained, $2^K$ is approximately proportional to the number of hidden neurons from the $(L_1+1)$-th hidden layer to the $L_{K-1}$-th hidden layer.
The second term in the LHS of the third condition $\sum_{\ell=L_{K-1}+1}^{L_K}(d_\ell-2)$ simply counts the hidden neurons in the $K$-th part. 
On the other hand, the last term counts the number of layers in the $(K+1)$-th part. This indicates that to satisfy conditions in Theorem \ref{thm:criteria} using few parameters, the last layers of the network should be deep and narrow. In particular, we note that such a deep and narrow architecture in the last layers is indeed necessary for $\relu$ networks to memorize with $o(N/\log N)$ parameters \citep{bartlett19}, as we discussed in Section \ref{sec:contribution}. 

\paragraph{Deriving Theorem \ref{thm:3} using Theorem \ref{thm:criteria}.} Now, we describe how to show memorization with a sub-linear number of parameters using our sufficient conditions by deriving Theorem \ref{thm:3} from Theorem \ref{thm:criteria}. 
Consider a network of width~$3$, i.e., $d_\ell=3$ for all $\ell$.
As we explained, the first conditions can be easily satisfied using $\Theta(\log d_x+\log\Delta)$ layers. For the second condition, consider choosing $K=\log(N^{2/3})$, then $\Theta(N^{2/3})$ hidden neurons (i.e., $\Theta(N^{2/3})$ hidden layers) would be sufficient, i.e., $L_{K-1}-L_1=\Theta(N^{2/3})$.
Finally, we choose $L_K$ and $L$ to satisfy $L_{K}-L_{K-1}=\Theta(N^{2/3})$ and $L-L_K=\Theta(N^{2/3}\log C)$.
Then, it naturally satisfies the third condition using only $\Theta(N^{2/3}\log C)$ layers and completes the derivation of Theorem \ref{thm:3}.

\subsection{Discussions on Theorems \ref{thm:lw}--\ref{thm:criteria}}
\paragraph{Extension to regression problem.} The results of Theorems \ref{thm:lw}--\ref{thm:criteria} can be easily applied to the regression problem, i.e., when labels are from $[0,1]$.
This is because one can simply translate the regression problem with some $\varepsilon>0$ error tolerance to the classification problem with $\lceil1/\varepsilon\rceil$ classes. Here, each class $c\in\{0,1,\dots,\lceil1/\varepsilon\rceil-1\}$ corresponds to the target value $c\cdot\varepsilon$.
Hence, the regression problem can also be solved within $\varepsilon$ error with $o(N)$ parameters, where the sufficient number of layers and the sufficient number of parameters are identical to the numbers in Theorems~\ref{thm:lw}--\ref{thm:criteria} with the replacement of $\log C$ with $\log(1/\varepsilon)$.

\paragraph{Relation with benefits of depth in neural networks.} 
Our observation that deeper $\relu$ networks have more memorization power is closely related to the recent studies on the benefits of depth in neural networks (see Section \ref{sec:relatedworks}).
While our observation indicates that depth is critical for the memorization power, these works mostly focused on showing the importance of depth for approximating functions.
Here, the existing results on the benefits of depth for function approximation cannot directly imply the benefits of depth for memorization since they often focus on specific classes of functions or require parameters far more than $O(N)$.

\paragraph{On parameter precision.}
As in many works on the expressive power of neural networks \citep{cybenko89,pinkus99,lu17,bartlett19,yun19,park21}, we also assume real-valued parameters (i.e., weights and biases) rather than parameters of constantly bounded precision (e.g., binary, 16 bit floating-point numbers).
Notably, parameters of constantly bounded precision is provably \emph{insufficient} for memorization with $o(N)$ parameters: Any network of $o(N)$ parameters of constantly bounded precision cannot memorize any set of $N$ inputs with arbitrary labels \citep{shalev14}. Hence, parameters of $\omega(1)$ precision (e.g., real-valued parameters) are \emph{necessary} for memorization with $o(N)$ parameters.
We note that the maximum required precision for Theorem \ref{thm:lw}  is $O(N^{2/3})$ at $w=2/3$, i.e., memorization with $O(N^{2/3})$ parameters. This required precision decreases to $\Theta(1)$ as $w$ goes to $1$.

\section{Network construction for proof of Theorem \ref{thm:lw}}\label{sec:pfsketch:lw} 
{In this section, we give a constructive proof of Theorem \ref{thm:lw}. First, we sketch the main proof idea in Section \ref{sec:pfsketch}. Then, we provide the formal proof of Theorem \ref{thm:lw} in Section~\ref{sec:network_construction}. 
Lastly, we describe connections between our network construction and conditions in Theorem \ref{thm:criteria} in Section~\ref{sec:func_criteria}.

\subsection{Proof outline}\label{sec:pfsketch}
Before introducing the formal proof, we first sketch our network construction.
In our proof, we mainly focus on constructing a $\step+\id$ network\footnote{Recall from Section \ref{sec:preliminary} that $\step:x\mapsto\mathbf1[x\ge0]$ and $\id:x\mapsto x$.}  which can memorize any $\Delta$-separated $N$ pairs in $d_x$ dimension with $C$ classes using
$O\big(\log  d_x+\log\Delta+\frac{N^{2-2w}}{1+(1.5w-1)\log N}\log C\big)$ hidden layers and {$O\big(d_x+\log\Delta+N^w+{N^{1-w/2}}\log C\big)$} parameters, instead of designing a target $\sigma$ network directly.
Then, we approximate this $\step+\id$ network using a $\sigma$ network of the same architecture to complete the proof. 

We first briefly discuss the motivation of our construction which can memorize $N$ pairs with $o(N)$ parameters.
Our construction is motivated by the $\relu$ network construction achieving nearly tight VC-dimension \citep{bartlett19}: We observe that there exists a network of $o(N)$ parameters that can memorize any \emph{well-separated} $N$ scalar values in $\big[0,o(N^2)\big)$ with arbitrary labels
(Lemma \ref{lem:memorizer-rough} in Section \ref{sec:network_construction}).
Here, the \emph{well-separateness} means that each scalar value falls into each interval $[i,i+1)$.
Namely, once we map input vectors to well-separated values in $\big[0,o(N^2)\big)$ using $o(N)$ parameters, then we can memorize these input vectors with arbitrary labels with $o(N)$ parameters.

Under our motivation, we now describe a high-level outline of our $\step+\id$ network construction.
First, we project $\Delta$-separated, $d_x$-dimensional input vectors to well-separated scalar values in some bounded interval $\big[0,O(N^2\sqrt{d_x}\Delta)\big)$, using $O(1)$ hidden layers and $O(d_x)$ parameters. 
In the following layers, we gradually decrease the upper bound on these scalar values from $O(N^2\sqrt{d_x}\Delta)$ to $O(N^{2-w})$ (i.e., $o(N^2)$) by relocating these values while preserving their well-separateness. This step utilizes $O(\log \Delta+\log d_x+\log N)$ hidden layers and $O(\log \Delta+\log d_x+N^w)$ parameters. Here, note that under the well-separateness, there exists one-to-one correspondence between these scalar values and input vectors. 
Finally, we map these scalar values in $\big[0,O(N^{2-w})\big)$ to corresponding labels using
$O\big(\frac{N^{2-2w}}{1+(1.5w-1)\log N}\log C\big)$ hidden layers and $O(N^w+{N^{1-w/2}}\log C)$ parameters. 
By approximating this $\step+\id$ network construction using a $\sigma$ network, we obtain the statement of Theorem \ref{thm:lw}.
The overall network construction is illustrated in Figure \ref{fig:pfsketch}.

\begin{figure}
    \centering
    \includegraphics[width=\textwidth]{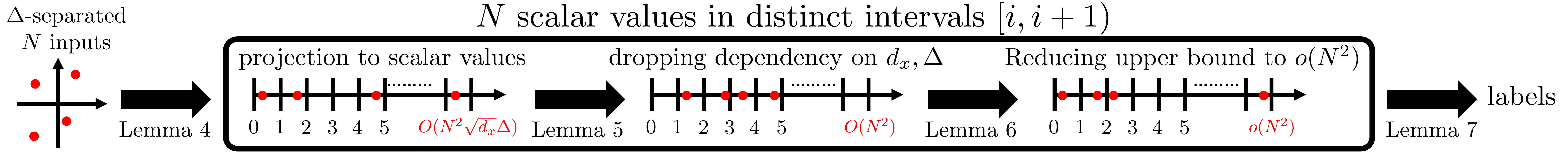}
    \caption{Illustration of the network construction for the proof of Theorem \ref{thm:lw}.}
    \label{fig:pfsketch}
\end{figure}

\subsection{Constructing target network}\label{sec:network_construction}
Now, we formally describe our network construction.
To this end, we first construct the network using $\step$ and $\id$ activation functions and approximate this network using a $\sigma$ network.
\paragraph{Projecting input vectors to scalar values.} We first map input vectors to bounded well-separated scalar values using the following lemma. The proof of Lemma \ref{lem:projection-rough} is presented in Appendix \ref{sec:pflem:projection-rough}.
\begin{lemma}\label{lem:projection-rough}
For any $\Delta$-separated $\mathcal X\in\binom{\mathbb R^{d_x}}{N}$, there exist $v\in\mathbb R^{d_x}$, $b\in\mathbb R$ such that $$\big\lfloor\{v^\top x+b:x\in\mathcal X\}\big\rfloor\in\binom{[O(N^2\sqrt{d_x}\Delta)]}{N}.$$
\end{lemma}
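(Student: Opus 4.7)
The strategy is probabilistic: I will show that a uniformly random direction $v_0\in S^{d_x-1}$, after rescaling and an additive shift, produces $N$ projections whose integer parts are distinct and all lie in $[0,M)$ with $M=O(N^2\sqrt{d_x}\Delta)$. Write $\delta\defeq\inf_{x\ne x'}\ltwo{x-x'}$ and $D\defeq\sup_{x\ne x'}\ltwo{x-x'}$; the $\Delta$-separateness of $\mc X$ says $D<\Delta\delta$. The two properties to enforce are (i) all pairwise differences $|v^\top x-v^\top x'|$ strictly exceed $1$, which forces distinct floors, and (ii) the total range of $\{v^\top x+b:x\in\mc X\}$ fits inside an interval of length $O(N^2\sqrt{d_x}\Delta)$ starting at $0$.

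For (i), I use the standard anticoncentration bound for spherical marginals: when $v_0$ is uniform on $S^{d_x-1}$ and $u$ is any unit vector, the density of $v_0^\top u$ is bounded above by $\sqrt{d_x/(2\pi)}$, so
\[
\Pr\bigl[|v_0^\top u|\le t\bigr]\le t\sqrt{2d_x/\pi}\qquad\text{for all }t>0.
\]
Applying this to each of the $\binom{N}{2}$ unit vectors $(x-x')/\ltwo{x-x'}$ with $t\defeq\sqrt{\pi/(2d_x)}/N^2$ and taking a union bound, the event $|v_0^\top(x-x')|\ge t\ltwo{x-x'}\ge t\delta$ for every distinct pair has probability strictly less than $1$ failing, so some $v_0\in S^{d_x-1}$ realizes this lower bound.

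Now set $v\defeq(3/(t\delta))\,v_0$. The pairwise projection gaps satisfy $|v^\top(x-x')|\ge 3>1$, giving (i) regardless of $b$. On the other hand, $|v^\top(x-x')|\le(3/(t\delta))\cdot D<3\Delta/t$, so the range of $\{v^\top x:x\in\mc X\}$ is at most $3\Delta/t=3\Delta N^2\sqrt{2d_x/\pi}=O(N^2\sqrt{d_x}\Delta)$. Choose $b$ so that $\min_{x\in\mc X}v^\top x+b\in[0,1)$; then every $v^\top x+b$ sits in $[0,M)$ for an integer $M=O(N^2\sqrt{d_x}\Delta)$, and the gap-$>1$ property forces the $N$ floors to be distinct, giving the required element of $\binom{[M]}{N}$.

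\textbf{Main obstacle.} The one nontrivial ingredient is the anticoncentration constant $\sqrt{d_x/(2\pi)}$, which is exactly the source of the $\sqrt{d_x}$ in the bound $M$; this follows from the beta-type density of spherical marginals, but if one prefers to avoid a direct density computation, the same argument goes through with $v_0\sim N(0,I_{d_x})$ since $v_0^\top(x-x')\sim N(0,\ltwo{x-x'}^2)$ yields an immediate Gaussian density bound, with the factor $\sqrt{d_x}$ in $M$ reappearing through the concentration of $\ltwos{v_0}$ around $\sqrt{d_x}$.
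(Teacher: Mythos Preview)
Your proposal is correct and follows essentially the same approach as the paper: a random direction on the sphere, an anticoncentration bound for the spherical marginal (the paper derives this via the explicit surface-area integral and Gautschi's inequality, arriving at the same $\sqrt{d_x}$ dependence), a union bound over the $\binom{N}{2}$ pairs, and then a rescale-and-shift to get minimum gap at least $1$ and range $O(N^2\sqrt{d_x}\Delta)$. The only cosmetic differences are that the paper scales the minimum gap to exactly $1$ rather than $3$, and packages the anticoncentration step as a separate lemma (Lemma~\ref{lem:projection}).
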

Lemma \ref{lem:projection-rough} states that any $\Delta$-separated $N$ vectors can be mapped to some well-separated scalar values bounded by $O(N^2\sqrt{d_x}\Delta)$ using a simple projection with $O(1)$ hidden layers and $O(d_x)$ parameters. 
\paragraph{Decreasing upper bound on scalar values to $O(N^{2-w})$.} We map these well-separated $N$ values upper bounded by $O(N^2\sqrt{d_x}\Delta)$ to another well-separated values of the smaller upper bound $\lfloor{N^2}/{4}+1\rfloor$ using the following lemma. The proof of Lemma \ref{lem:compression1-rough} is presented in Appendix \ref{sec:pflem:compression1-rough}.
\begin{lemma}\label{lem:compression1-rough}
For any $\mathcal X\in\binom{\mathbb R}{N}$ and $K\in\mathbb N$ such that $\lfloor\mathcal X\rfloor\in\binom{[K]}{N}$, there exists a $\step+\id$ network $f$ of $1$ hidden layer and width $3$ such that $\lfloor f(\mathcal X)\rfloor\in\binom{[T]}{N}$ where $T:=\max\big\{\lceil{K}/{2}\rceil,\lfloor{N^2}/{4}+1\rfloor\big\}$.
\end{lemma}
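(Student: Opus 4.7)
The plan is to construct $f$ as a $\step+\id$ network of the form
\[
 f(x) \;=\; \id(x) \;-\; s_1 \cdot \step(x - c_1) \;-\; s_2 \cdot \step(x - c_2),
\]
which has one hidden layer consisting of two $\step$ neurons and one $\id$ neuron, hence width $3$. With integer parameters $0 \le c_1 \le c_2$ and $s_1, s_2 \ge 0$, the map descends on floors to the piecewise-integer shift $g(k) = k - s_1 \mathbf{1}[k \ge c_1] - s_2 \mathbf{1}[k \ge c_2]$, so the lemma reduces to the combinatorial task of choosing the four integer parameters so that $g$ is injective on $F := \lfloor \mathcal X \rfloor$ with $g(F) \subseteq [T]$.

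The first case is $T \ge K$, which is equivalent to $\lfloor N^2/4 + 1\rfloor \ge K$. Here I would simply take $s_1 = s_2 = 0$, obtaining $f = \id$ and hence $\lfloor f(\mathcal X)\rfloor = F \subseteq [K] \subseteq [T]$, with $N$ distinct elements.

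In the remaining case $T = \lceil K/2\rceil < K$, equivalently $K \ge \lfloor N^2/4\rfloor + 2$, I would partition $F$ as $F_1 = F \cap [0, c_1)$, $F_2 = F \cap [c_1, c_2)$, $F_3 = F \cap [c_2, K)$. A collision among $F_1$, $F_2 - s_1$, $F_3 - s_1 - s_2$ happens precisely when $s_1$, $s_2$, or $s_1 + s_2$ equals a pairwise difference within the relevant pair of parts, giving at most $|F_1||F_2|$, $|F_2||F_3|$, and $|F_1||F_3|$ forbidden values respectively. My plan is to set $c_2 = k_N$ (the largest floor in $F$), so that $F_3 = \{k_N\}$ and the forbidden counts collapse to $|F_1||F_2|\le\lfloor(N-1)^2/4\rfloor$, $|F_2|$, and $|F_1|$. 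Since the admissible range of $s^* := s_1 + s_2$ inside $(k_N - T, k_N]$ has size $T \ge \lfloor N^2/4\rfloor + 1 > |F_1|$, pigeonhole produces a valid $s^*$; fixing this $s^*$ and choosing $s_1$ in its admissible range then avoids the remaining $|F_1||F_2| + |F_2|$ forbidden values by an analogous count.

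The main obstacle is that the admissible range of $s_1$, namely the interval on which $F_2 - s_1 \subseteq [0, T)$ and $s_2 = s^* - s_1 \ge 0$, varies with the choice of $c_1$ and can be tight for adversarial $F$. I would address this by a short casework on $c_1$ guided by the order statistics of $F$: if $F \subseteq [0, T)$ already, take $f = \id$; otherwise choose $c_1$ to be either $0$ or the smallest element of $F \cap [T, K)$ (depending on whether the overflow elements below $k_N$ are empty), and verify in each sub-case that the admissible range for $s_1$ has size strictly greater than $|F_1||F_2| + |F_2|$, again using the hypothesis $K \ge \lfloor N^2/4\rfloor + 2$. Reading off the weights and biases from the chosen $(c_1, c_2, s_1, s_2)$ gives the desired one-hidden-layer width-$3$ $\step + \id$ network, completing the construction.
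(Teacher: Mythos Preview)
Your network form is exactly the one the paper uses, and the reduction to the integer map $g(k)=k-s_1\mathbf{1}[k\ge c_1]-s_2\mathbf{1}[k\ge c_2]$ is fine. The gap is in the two--stage pigeonhole after fixing $c_2=k_N$. You assert that, in the sub-case where $c_1$ is the smallest element of $F\cap[T,K)$, the admissible range for $s_1$ has size strictly greater than $|F_1|\,|F_2|+|F_2|$. This is false. Take $N=4$, $K=10$, so $T=\max\{\lceil 10/2\rceil,\lfloor 4+1\rfloor\}=5$, and $F=\{0,5,8,9\}$. Then $k_N=9$, $c_1=5$, $F_1=\{0\}$, $F_2=\{5,8\}$, and the constraint $F_2-s_1\subseteq[0,5)$ forces $s_1\in\{4,5\}$, an admissible set of size~$2$, while $|F_1|\,|F_2|+|F_2|=4$. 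Your bound therefore cannot select $s_1$ for a generic valid $s^*$; the fact that $(s^*,s_1)=(6,4)$ happens to work here is not established by your argument. The same size mismatch occurs in the other branch (``$c_1=0$'') whenever $\min F$ is small, so the casework as outlined does not close.

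The paper avoids this by parameterizing with a \emph{single} free integer rather than a two--stage choice: it fixes $c_1=T$ and takes $f_b(x)=x$ on $[0,T)$ and $f_b(x)=(x+b)\bmod T$ on $[T,K)$, implemented as
\[
f_b(x)=x+(b-T)\,\mathbf{1}[x\ge T]-T\,\mathbf{1}[x\ge 2T-b],
\]
which is still one $\id$ neuron plus two $\step$ neurons. The modular structure makes the collision count exact: summing over $b\in[T]$, every pair $(k,t)$ with $k\in F\cap[T,K)$ and $t\in F\cap[0,T)$ contributes precisely one collision, so the total is $|F\cap[T,K)|\cdot|F\cap[0,T)|\le N^2/4<T$, and some $b$ is collision--free. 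Replacing your choice $c_2=k_N$ and the two--stage pigeonhole by this single modular shift is what makes the counting go through.
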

Lemma \ref{lem:compression1-rough} states that $O(1)$ hidden layers and $O(1)$ parameters are sufficient for halving the upper bound until $\lfloor {N^2}/4+1\rfloor$.
Therefore, by applying Lemma \ref{lem:compression1-rough} $O(\log\Delta+\log d_x)$ times, a network of $O(\log\Delta+\log d_x)$ hidden layers and $O(\log\Delta+\log d_x)$ parameters can decrease the upper bound $O(N^2\sqrt{d_x}\Delta)$ to $\lfloor {N^2}/4+1\rfloor$. 
The intuition behind Lemma \ref{lem:compression1-rough} is that if  the number of target intervals $T$ is large enough compared to the number of inputs $N$, then inputs can be easily mapped without harming the well-separateness (i.e., no two inputs are mapped into the same interval $[i,i+1)$) using some simple network of a small number of layers and parameters.
In particular, we construct the simple network $f$ in the proof of Lemma \ref{lem:compression1-rough} as 
\begin{align}
    f(x):=\begin{cases}
    x~&\text{if}~x\in\big[0,T\big)\\
    (x+b)\bmod T~&\text{if}~x\in\big[T,K\big)
    \end{cases}\label{eq:compression1-rough-f}
\end{align}
for some $b\in[T]$. 
However, if $T$ is not large enough compared to $N$ (i.e., $T<\lfloor N^2/4+1\rfloor)$, then outputs of our network \eqref{eq:compression1-rough-f} may not preserve the well-separateness, i.e., $b\in[T]$ satisfying $f(\lfloor\mathcal X\rfloor)\in\binom{[T]}{N}$ may not exist. 
However, the upper bound on the well-separated values can be further decreased by utilizing more parameters, as stated in the following lemma. The proof of Lemma \ref{lem:compression2-rough} is presented in Section \ref{sec:pflem:compression2-rough}.

\begin{lemma}\label{lem:compression2-rough}
For any $\mathcal X\in\binom{\mathbb R}{N}$ and $K\in\mathbb N$ such that $\lfloor\mathcal X\rfloor\in\binom{[K]}{N}$, there exists a $\step+\id$ network $f$ of $1$ hidden layer and width $O\big({N^2}/{K}\big)$ such that $\lfloor f(\mathcal X)\rfloor\in\binom{[T]}{N}$ where $T:=\max\big\{\lceil{K}/{2}\rceil,N\big\}$.
\end{lemma}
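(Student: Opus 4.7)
The plan is to extend the Lemma~\ref{lem:compression1-rough} construction by admitting a small number of residual collisions at the base layer and repairing each one with a local ``bump'' neuron pair, so that the total width scales as $O(N^2/K)$. Concretely, I would keep the template $f_1(x)=x$ for $x\in[0,T)$ and $f_1(x)=(x+b)\bmod T$ for $x\in[T,K)$ from Lemma~\ref{lem:compression1-rough}---realizable as a width-$3$ $\step+\id$ network for any $b\in[0,T)$ since $K\le 2T$ forces at most two mod-reductions---but now choose $b$ to minimize collisions rather than to avoid them entirely.

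First, I would argue that for each cross-pair $(\xi,\xi')\in \lfloor\mathcal X\cap[0,T)\rfloor\times\lfloor\mathcal X\cap[T,K)\rfloor$, the equation $\xi=(\xi'+b)\bmod T$ has a unique solution $b=(\xi-\xi')\bmod T\in[0,T)$. Summing, $\sum_{b=0}^{T-1}c(b)\le n_0\, n_1\le N^2/4$, where $c(b)$ is the number of colliding pairs at shift $b$ and $n_0,n_1$ count $\lfloor\mathcal X\rfloor$-elements below and above $T$ respectively. Averaging over the $T$ values of $b$ then yields some $b^\ast$ with $c(b^\ast)\le N^2/(4T)\le N^2/(2K)$, using $T\ge\lceil K/2\rceil$. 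I fix this $b^\ast$ for the base network.

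Next, I observe that $f_1(\mathcal X)$ hits $N-c(b^\ast)$ distinct slots in $[T]$, so at least $T-(N-c(b^\ast))\ge c(b^\ast)$ slots remain unused (using $T\ge N$). For the high element $\xi'_i$ of each of the $c(b^\ast)$ colliding pairs, I would pick a distinct unused slot $u_i\in[T]$ and add a bump neuron pair $\delta_i\bigl(\step(x-\xi'_i)-\step(x-\xi'_i-1)\bigr)$ with $\delta_i=u_i-f_1(\xi'_i)$. Because $\lfloor\mathcal X\rfloor$ consists of distinct integers, the unit interval $[\xi'_i,\xi'_i+1)$ contains exactly one element of $\mathcal X$, so each bump only relocates $\xi'_i$'s image and leaves all other images intact. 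The resulting $\step+\id$ network has one hidden layer and width $3+2c(b^\ast)=O(N^2/K)$, which after approximating step with a sigmoidal activation gives the claim.

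The most delicate point will be the averaging step: because $\xi'+b$ can fall in either $[T,2T)$ or $[2T,3T)$, I need to verify carefully that the two mod-reduction cases together assign exactly one bad $b$ to each cross-pair, so that the sum $\sum_b c(b)$ is precisely $n_0 n_1$. Once this bijection between cross-pairs and bad shifts is established, the remainder of the construction is mechanical---the bumps are non-interacting precisely because $\mathcal X$ has distinct integer floors, which localizes each correction to a single unit interval and lets us compose all of them in one hidden layer.
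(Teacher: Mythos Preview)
Your argument is correct and genuinely different from the paper's.

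The paper proves this lemma via a more general statement (its Lemma~\ref{lem:compression2-free}): it partitions $[T,K)$ into $C=\Theta(N^2/K)$ consecutive blocks, each containing at most $\lceil N/C\rceil$ input floors, and then chooses a separate shift $b_i$ for each block so that \emph{no} collisions occur. The inductive pigeonhole step is that each block has at most $\lfloor T/N\rfloor$ points while at most $N$ points are already placed, so the total number of bad shifts is strictly less than $T$. The resulting one-hidden-layer network has width $2C+1$, with two $\step$ neurons per block.

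Your route instead keeps a \emph{single} shift, accepts up to $\lfloor N^2/(4T)\rfloor=O(N^2/K)$ cross-collisions by averaging, and then patches each collision with a two-neuron bump supported on a single unit interval. This is arguably more elementary---one averaging step plus local surgery---and it makes very transparent why the width is $O(N^2/K)$. The paper's block construction, on the other hand, generalizes cleanly to an $L$-layer statement (each layer can absorb as many blocks as it has spare neurons), which is what the paper actually needs for Theorem~\ref{thm:criteria}; your bump-repair trick does not extend as naturally across layers because the bumps are tied to specific input intervals rather than to a uniform piecewise-affine template.

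Two small remarks. First, your observation that high--high collisions cannot occur (since $z\mapsto z\bmod T$ is injective on integer floors in $[T,2T)$ when $b$ is an integer) is important and worth stating explicitly; it is what guarantees that $c(b^\ast)$ really counts \emph{all} collisions and that each bump repairs exactly one element. Second, the closing comment about approximating $\step$ by a sigmoidal activation is unnecessary here: the lemma asks for a $\step+\id$ network, and the passage to sigmoidal $\sigma$ is handled separately (Lemma~\ref{lem:transform} in the paper).
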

The network in Lemma \ref{lem:compression2-rough} can halve the upper bound $K$ of the inputs beyond $\lfloor N^2/4+1\rfloor$. However, the required number of parameters will be doubled if the current upper bound $K$ decreases by half.
Hence, in order to decrease the upper bound from $\lfloor N^2/4+1\rfloor$ to $O(N^{2-w})$ using $O(\log N)$ applications of Lemma \ref{lem:compression2-rough}, we need $O(\log N)$ hidden layers and $O(N^w)$ parameters. Here, we construct each application of Lemma \ref{lem:compression2-rough} using two hidden layers. One hidden layer of $O(N^2/K)$ hidden neurons implements $f$ in Lemma \ref{lem:compression2-rough}. The other hidden layer of one hidden neuron just bypasses the one-dimensional output of $f$ implemented in the previous layer. The reason behind using an extra layer is that, if we implement each application of Lemma \ref{lem:compression2-rough} using a single hidden layer (as we did for Lemma \ref{lem:compression1-rough}), then it will introduce $O(N^4/K^2)$ parameters between each pair of adjacent hidden layers of width $O(N^2/K)$; we avoid these unnecessary parameters by adding extra layers of width 1.

\paragraph{Mapping scalar values to corresponding labels.}
So far, we have $N$ well-separated values, upper bounded by $O(N^{2-w})$. Now, we map these values to corresponding labels (i.e., labels of their original inputs) using the following lemma, motivated by the $\relu$ network achieving nearly tight VC-dimension \citep{bartlett19}. The proof of Lemma \ref{lem:memorizer-rough} is presented in Appendix~\ref{sec:pflem:memorizer-rough}.
\begin{lemma}\label{lem:memorizer-rough}
For any $C,V\in\mathbb N$ and $p\in[1/2,1]$, there exists a $\step+\id$ network $f_\theta$ of $O(V^p+V^{1/2}\log C)$ parameters and {$O\big(\frac{V^{1-p}}{1+(p-0.5)\log V}\big)$} hidden layers such that $f_\theta$ can memorize any $\mathcal X\in\binom{\mathbb R}{V}$ satisfying $\lfloor \mathcal X\rfloor=[V]$ with arbitrary labels in $[C]$.
\end{lemma}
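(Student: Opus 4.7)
The plan is to construct a $\step+\id$ network via the bit-extraction technique of Bartlett, Harvey, Liaw, and Mehrabian (2019), and then approximate it by a sigmoidal network using the standard simulation already invoked in the paper's proof outline. Since $\lfloor\mathcal X\rfloor=[V]$, each input $x$ is determined by its integer part in $[V]$, so the task reduces to realizing an arbitrary lookup table from $[V]$ to $[C]$. I first reduce to binary labels by splitting each label into $\lceil\log C\rceil$ bits $\ell_b \colon [V]\to\{0,1\}$, running $\log C$ parallel bit-extractors, and recombining their outputs with weights $2^b$ via an identity-activated final layer; this parallelization is what ultimately produces the $V^{1/2}\log C$ summand in the parameter count.

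In the routing stage I partition $[V]$ into $V^p$ consecutive super-blocks of size $V^{1-p}$ and pack each super-block's label bits into binary expansions of real weights $r_{j,b}\in[0,1)$. A single layer of $\Theta(V^p)$ $\step$ indicators $\delta_j(x)=\mathbf 1[jV^{1-p}\le\lfloor x\rfloor<(j+1)V^{1-p}]$ together with a linear combination produces, for each output bit $b$, a real handle $s_b=\sum_j\delta_j(x)r_{j,b}=r_{j^\ast,b}$ encoding the labels within the unique super-block $j^\ast$ containing $x$, along with a real counter $t=x-j^\ast V^{1-p}$ whose integer part is the within-block offset. The indicators contribute the $V^p$ summand, and after the amortization discussed below the selection weights fit into the $V^{1/2}\log C$ summand.

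In the extraction stage I set $k:=\max\{1,\lceil(p-\tfrac12)\log V\rceil\}$ and, at each layer, simultaneously shift each $s_b$ by $k$ bits and read off all $k$ newly exposed high bits. The shift is implemented by $2^k=O(V^{p-1/2})$ step thresholds $\step(2^ks_b-j)$ for $j=1,\dots,2^k-1$, whose sum subtracted from $2^ks_b$ yields the new residual and whose appropriate linear combinations recover each of the $k$ new bits. A parallel decremented counter and a $\step$-gated select write the correct one of the $k$ bits into a running accumulator precisely when $t\in[0,k)$. This yields $V^{1-p}/k=O\bigl(V^{1-p}/(1+(p-\tfrac12)\log V)\bigr)$ extraction layers of width $O(V^{p-1/2}\log C)$, contributing $O\bigl(V^{1/2}\log C/(1+(p-\tfrac12)\log V)\bigr)\le O(V^{1/2}\log C)$ further parameters and completing the $\step+\id$ construction.

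The main technical obstacle is the parameter accounting in the routing stage: a naive design would produce $V^p\log C$ selection weights, which exceeds the $V^{1/2}\log C$ budget whenever $p>\tfrac12$ and $C$ is large. The fix is to pipeline the $\log C$ output streams through the extraction stage in batches of size $\Theta(V^{1/2})$, reusing the $V^p$ indicators across batches so that the total selection-weight count stays bounded by $V^{1/2}\log C$. Getting the $\step+\id$ multi-bit shift-and-select primitive to produce both the new residual and all $k$ extracted bits from the same $2^k-1$ step outputs in one layer, and verifying that the pipelined routing interlocks with the multi-bit extraction so that the construction smoothly interpolates between single-bit extraction at $p=\tfrac12$ (depth $V^{1/2}$, width $\log C$) and direct one-neuron-per-input memorization as $p\to 1$ (depth $O(1)$, width $V$), is the bookkeeping-heavy but mechanical part of the argument.
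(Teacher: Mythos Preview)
Your overall strategy—block routing followed by multi-bit extraction à la Bartlett et al.—matches the paper's, but your handling of the $\lceil\log C\rceil$ label bits diverges in a way that leaves a real gap. By running $\log C$ parallel extractors you are forced to store one real handle $r_{j,b}$ per (block, bit) pair, i.e.\ $\Theta(V^p\log C)$ selection weights. This is not $O(V^p+V^{1/2}\log C)$ in general: with $p=1$ and $\log C=V^{1/2}$ you need $V^{3/2}$ selection weights against a budget of $O(V)$. Your pipelining fix does not rescue this: however you schedule the computation, the label table has $V\log C$ bits and each $r_{j,b}$ holds only $V^{1-p}$ of them, so $V^p\log C$ distinct real parameters are needed just to encode the data. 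Processing the $\log C$ streams in batches of size $V^{1/2}$ still uses $V^{p+1/2}$ selection weights per batch over $\log C/V^{1/2}$ batches, which is again $V^p\log C$.

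The paper avoids this by packing \emph{all} $D=\lceil\log C\rceil$ label bits for every position in block $a$ into a \emph{single} real weight $w_a$ (so $w_a$ has $BD=V^{1-p}\log C$ significant bits), which makes routing cost only $O(V^p)$ parameters. The price is that the single extractor must now peel off $BD$ bits rather than $B$, so the depth becomes $\Theta(BD/R)=\Theta\!\big(\tfrac{V^{1-p}\log C}{1+(p-0.5)\log V}\big)$; the $\log C$ factor moves from width into depth. This is exactly what the paper's proof derives and what is used downstream in Theorem~1 (the omission of $\log C$ from the depth in the lemma statement appears to be a typo). So the simultaneous bounds you are aiming for—depth $V^{1-p}/k$ without $\log C$ \emph{and} parameters $V^p+V^{1/2}\log C$—are stronger than what the paper proves, and your construction does not achieve them either. (Minor point: the lemma concerns $\step+\id$ networks, so the sigmoidal approximation step you mention is not part of this proof.)
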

If we set $w\in[2/3,1]$, $V=O(N^{2-w})$, and $p=\frac{w}{2-w}$ in Lemma \ref{lem:memorizer-rough}, then $O\big(\frac{N^{2-2w}}{1+(1.5w-1)\log N}\log C\big)$ hidden layers and $O(N^w+{N^{1-w/2}}\log C)$ parameters are sufficient for mapping any $N$ well-separated values in $\big[0,O(N^{2-w})\big)$ to corresponding labels.

Overall, our $\step+\id$ network construction 
mapping $\Delta$-separated, $d_x$-dimensional vectors to well-separated values bounded by $O(N^{2-w})$
requires $O\big(\log  d_x+\log\Delta+\frac{N^{2-2w}}{1+(1.5w-1)\log N}\log C\big)$ hidden layers and $O\big(d_x+\log\Delta+N^w+{N^{1-w/2}}\log C\big)$ parameters
by Lemmas \ref{lem:projection-rough}--\ref{lem:memorizer-rough}.
\paragraph{Approximating $\step+\id$ network using $\sigma$ network.}
Finally, we approximate our $\step+\id$ network construction by a $\sigma$ network of the same architecture, within any approximation error using the following lemma. 
The proof of Lemma \ref{lem:transform} is presented in Appendix~\ref{sec:pflem:transform}.
\begin{lemma}\label{lem:transform}
For any finite set of inputs $\mathcal X$, $\varepsilon>0$, $\step+\id$ network $f$, and sigmoidal activation function $\sigma$, there exists a $\sigma$ network $g$ having the same architecture with $f$ satisfying $|f(x)-g(x)|<\varepsilon$ for all $x\in\mathcal X.$
\end{lemma}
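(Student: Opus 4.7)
My plan is to approximate the $\step+\id$ network $f$ layer by layer, exploiting the fact that $\mathcal X$ is finite so the pre-activation at each node ranges over a finite set of real numbers. First I would reduce to the case in which every pre-activation to a $\step$ neuron (evaluated on any $x\in\mathcal X$) is strictly nonzero: if some $\step$ neuron has pre-activations $a_1,\ldots,a_n$ on $\mathcal X$ with $a_i=0$ for some $i$, I add a small $\rho>0$ with $\rho<\min\{|a_j|:a_j\ne0\}$ to its bias; because $\step(0)=\step(\rho)=1$ and every strictly negative $a_j$ remains negative after the shift, all $\step$ outputs on $\mathcal X$ are unchanged and downstream computations in $f$ are identical, so the fix can be applied one $\step$ neuron at a time. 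After this preprocessing there exists $\delta>0$ lower-bounding the absolute value of every $\step$ pre-activation realized on $\mathcal X$.

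Next I would construct local sigmoidal approximations. Let $A:=\lim_{t\to-\infty}\sigma(t)$, $B:=\lim_{t\to\infty}\sigma(t)$ (with $A\ne B$), and fix $z\in\mathbb R$ with $\sigma'(z)\ne 0$ as in Definition~\ref{def:sigmoidal}. For each $\step$ neuron realizing $h\mapsto\step(w^\top h+b)$, I replace it by
\[
h\mapsto \frac{\sigma\bigl(M(w^\top h+b)\bigr)-A}{B-A},
\]
and for each $\id$ neuron realizing $h\mapsto w^\top h+b$ I use
\[
h\mapsto \frac{\sigma(z+\eta(w^\top h+b))-\sigma(z)}{\eta\,\sigma'(z)}.
\]
The multiplier $M$ (resp.\ $\eta$) is absorbed into the incoming affine map, and the rescaling/shift into the outgoing one, so the architecture is preserved. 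As $M\to\infty$ the first expression converges to $\step(w^\top h+b)$ uniformly on $\{h:|w^\top h+b|\ge\delta/2\}$; as $\eta\to 0$ the second converges to $w^\top h+b$ uniformly on any compact set, by continuous differentiability of $\sigma$ at $z$.

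Finally I would close the argument by induction on depth. Let $\gamma_\ell$ be the maximum, over $x\in\mathcal X$, of the componentwise absolute difference between the outputs of layer $\ell$ in $f$ and in $g$. Since each affine map in $f$ is Lipschitz with constants fixed by the weights, the layer-$(\ell+1)$ pre-activations of $g$ differ from those of $f$ by at most $L_\ell\gamma_\ell$ for some $L_\ell$. Working backward from the output tolerance $\varepsilon$ I prescribe targets $\gamma_L,\gamma_{L-1},\ldots,\gamma_0=0$, and then working forward I choose, at each layer, $M$ large enough that every $\step$ pre-activation of $g$ stays in the region $\{|\cdot|\ge\delta/2\}$ and that the scaled sigmoidal is within $\gamma_{\ell+1}$ of $\step$ there, and $\eta$ small enough that each $\id$ approximation is within $\gamma_{\ell+1}$ on the (finite, hence bounded) set of attainable pre-activations. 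The main obstacle is the discontinuity of $\step$: an arbitrarily small accumulated error could in principle flip the sign of a $\step$ pre-activation and destroy uniform convergence, and this is exactly what the preprocessing (giving $\delta>0$) together with the per-layer tolerance bookkeeping (keeping propagated errors below $\delta/2$) rules out; the $\id$ part is then a standard derivative approximation on a compact set.
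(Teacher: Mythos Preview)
Your proposal is correct and follows essentially the same approach as the paper: preprocess so that every $\step$ pre-activation on $\mathcal X$ is bounded away from zero by some $\delta>0$, approximate each $\step$ and $\id$ neuron by an affine reparametrization of $\sigma$ (exactly the paper's two auxiliary claims), and control error propagation layer by layer. The only cosmetic difference is that the paper replaces layers from the last hidden layer to the first---so the pre-activations at the layer being replaced are always exactly those of $f$ and the margin $\delta$ is met automatically---whereas you proceed forward with explicit tolerance bookkeeping to keep cumulative errors below $\delta/2$; both arguments are valid.
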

} 
\subsection{Functions of conditions in Theorem \ref{thm:criteria}}\label{sec:func_criteria}
The three conditions in Theorem \ref{thm:criteria} correspond to network constructions in Section \ref{sec:network_construction}. The first condition in Theorem \ref{thm:criteria} is for mapping $\Delta$-separated inputs to some well-separated values bounded by $O(N^2)$. 
Here, decreasing the upper bound on the well-separated set by half does not require a large number of layers and parameters until the upper bound reaches $\lfloor N^2/4+1\rfloor$ as stated in Lemma \ref{lem:compression1-rough}.
The second condition in Theorem \ref{thm:criteria} is for improving $O(N^2)$ upper bound on the well-separated values to $O(N^2/2^K)$. As stated in Lemma \ref{lem:compression2-rough}, decreasing the upper bound on the well-separated set by half (i.e., increasing $K$ by one in the second condition) requires {twice as many} parameters in the second condition.
Finally, the third condition in Theorem \ref{thm:criteria} is for mapping well-separated values to their labels, which corresponds to Lemma \ref{lem:memorizer-rough}. We refer Appendix \ref{sec:pfthm:criteria} for more detailed descriptions.
\section{Experiments}\label{sec:exp}
In this section, we study the effect of depth and width through experiments.
In particular, we empirically verify whether our theoretical finding extends to practices:  Do deep and narrow networks have more memorization power (i.e., better training accuracy) compared to their shallow and wide counterparts under similar numbers of parameters?
For the experiments, we use residual networks \citep{he16} having the same number of channels for each layer. The detailed experimental setups are presented in Appendix \ref{sec:expsetup}.
In our experiments, we report the training and test accuracy of networks by varying the number of channels and the number of residual blocks.
\begin{figure}[t]
\centering
\subfigure[CIFAR-10\label{fig:similar-cifar}]{\includegraphics[width=2.2in]{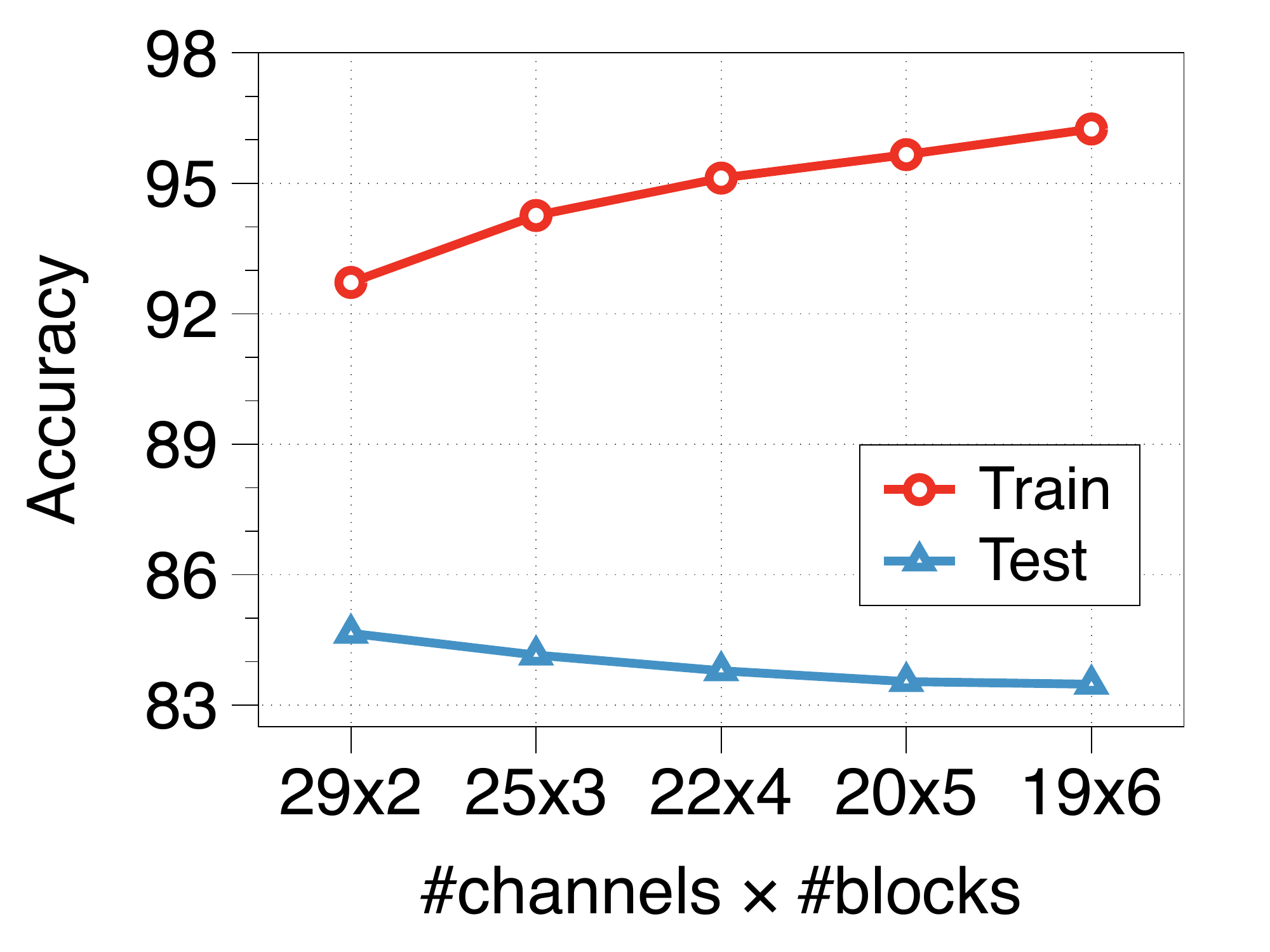}}
\subfigure[{SVHN}] {\includegraphics[width=2.2in]{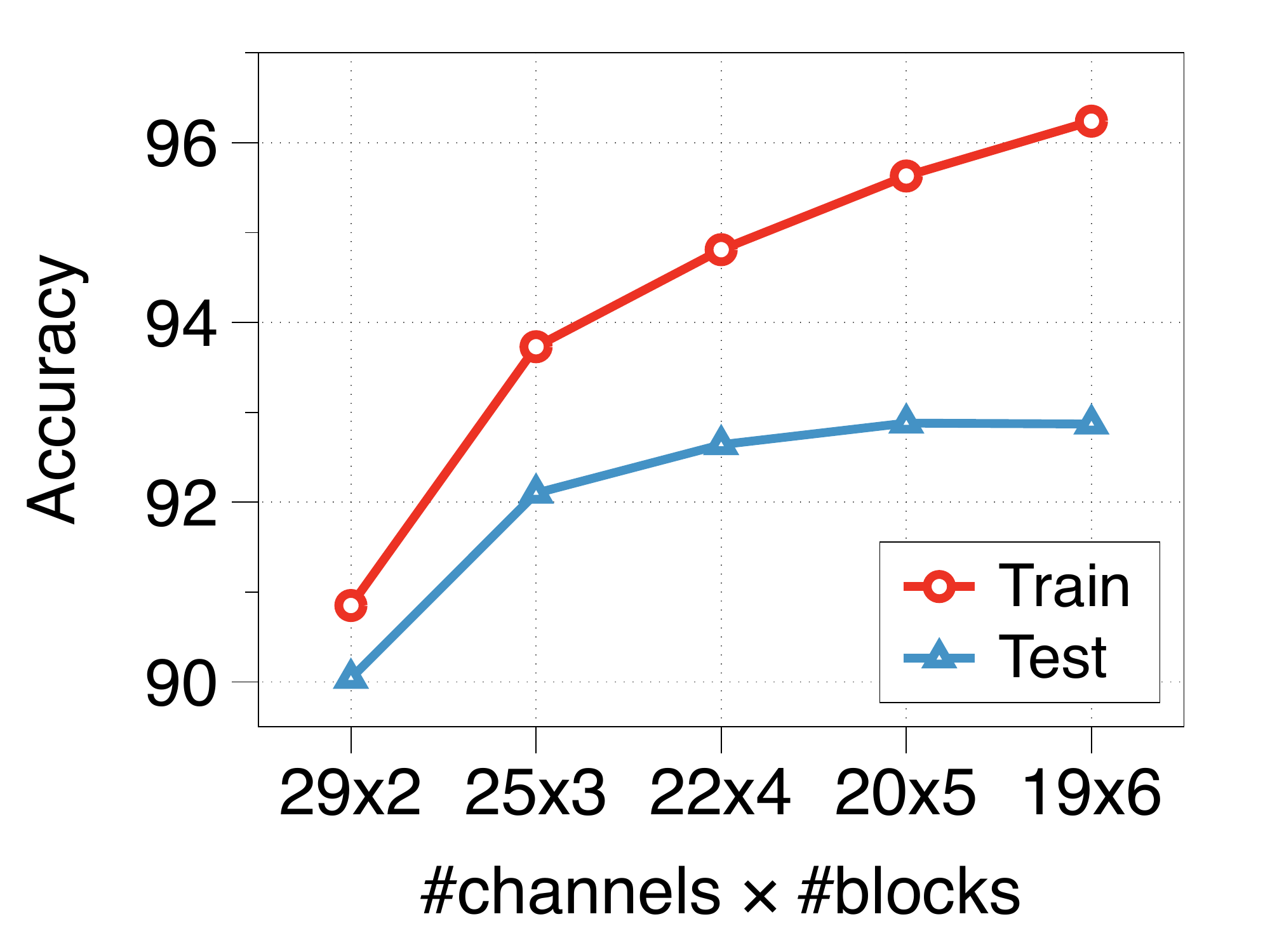}\label{fig:similar-svhn}}
\caption{Depth-width trade-off under a similar number of parameters.}
\label{fig:similar}
\end{figure}
\subsection{Depth-width trade-off in memorization}\label{sec:lwtradeoff}
We verify the memorization power of different network architectures having similar numbers of parameters.
Figure \ref{fig:similar} illustrates training and test accuracy of five different architectures with approximately $50000$ parameters for classifying the CIFAR-10 dataset \citep{krizhevsky09} and the SVHN dataset \citep{netzer11}.
One can observe that as the network architecture becomes deeper and narrower, the training accuracy increases.
Namely, deep and narrow networks memorize better than shallow and wide networks under similars number of parameters.
This observation agrees with Theorem \ref{thm:lw}, which states that increasing depth reduces the required number of parameters for memorizing the same number of pairs. 

However, more memorization power does not always imply better generalization (i.e., test accuracy).
In Figure~\ref{fig:similar}(b),
as the depth increases, the test accuracy also increases for the SVHN dataset. In contrast, the test accuracy decreases for the CIFAR-10 dataset as the depth increases in Figure~\ref{fig:similar}(a). 
In other words, overfitting occurs for the CIFAR-10 dataset while classifying the SVHN data receive benefits from more memorization power.
Note that a similar observation has also been made in the recent double descent phenomenon \citep{belkin19,nakkiran20} that more memorization power can both hurt/improve the generalization. 

\subsection{Effect of width and depth}
In this section, we observe the effect of depth and width by varying both.
Figure \ref{fig:heatmap-cifar} reports the training and test accuracy for the CIFAR-10 dataset by varying the number for channels from $5$ to $30$ and the number of residual blocks from $5$ to $50$. We present the experimental results for the SVHN dataset in Appendix \ref{sec:svhnapdx} under the same setup.
First, we observe that networks of $15$ channels with feature map size $32\times32$ successfully memorize (i.e., training accuracy over $99\%$).
This size is much narrower than modern network architectures, e.g., ResNet-18 has $64$ channels at the first hidden layer \citep{he16}.
On the other hand, too narrow networks (e.g., $5$-channels) fail to memorize.
This result does not contradict Theorem~\ref{thm:3} as the test of memorization in experiments/practice involves the stochastic gradient descent.
We note that similar observations are made for the SVHN dataset.

Furthermore, 
once the network memorize, we observe that increasing width is more effective than increasing depth for improving {test accuracy}.
These results indicate that width is not very critical for the memorization power; however, it can be effective for generalization.
Note that similar connections between generalization and the width/depth have also been made \citep{zhang17,golubeva21}.
\begin{figure}[t]
\centering
\begin{minipage}[b]{2.2in}
    \includegraphics[width=2.2in]{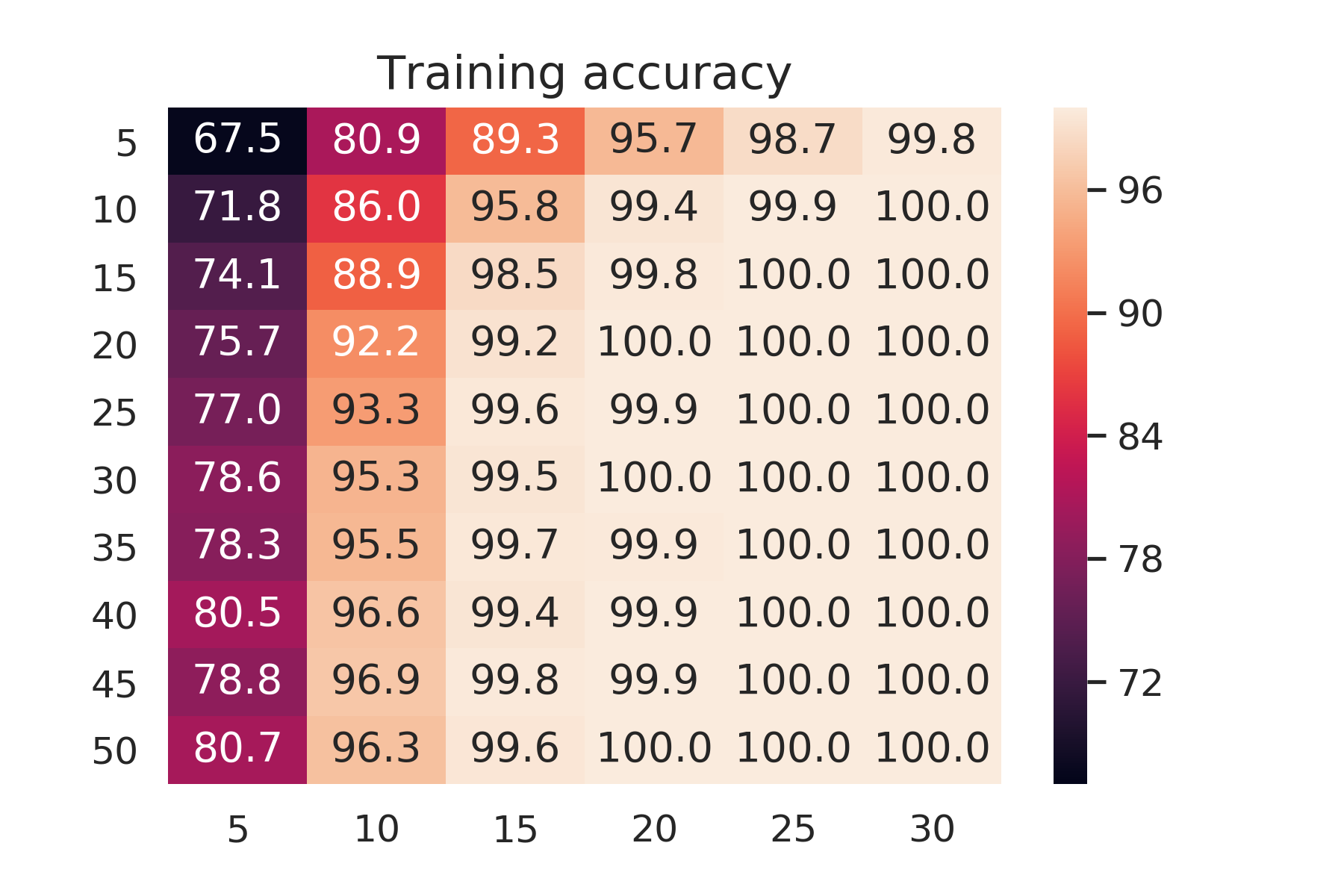}
\end{minipage}
\begin{minipage}[b]{2.2in}
    \includegraphics[width=2.2in]{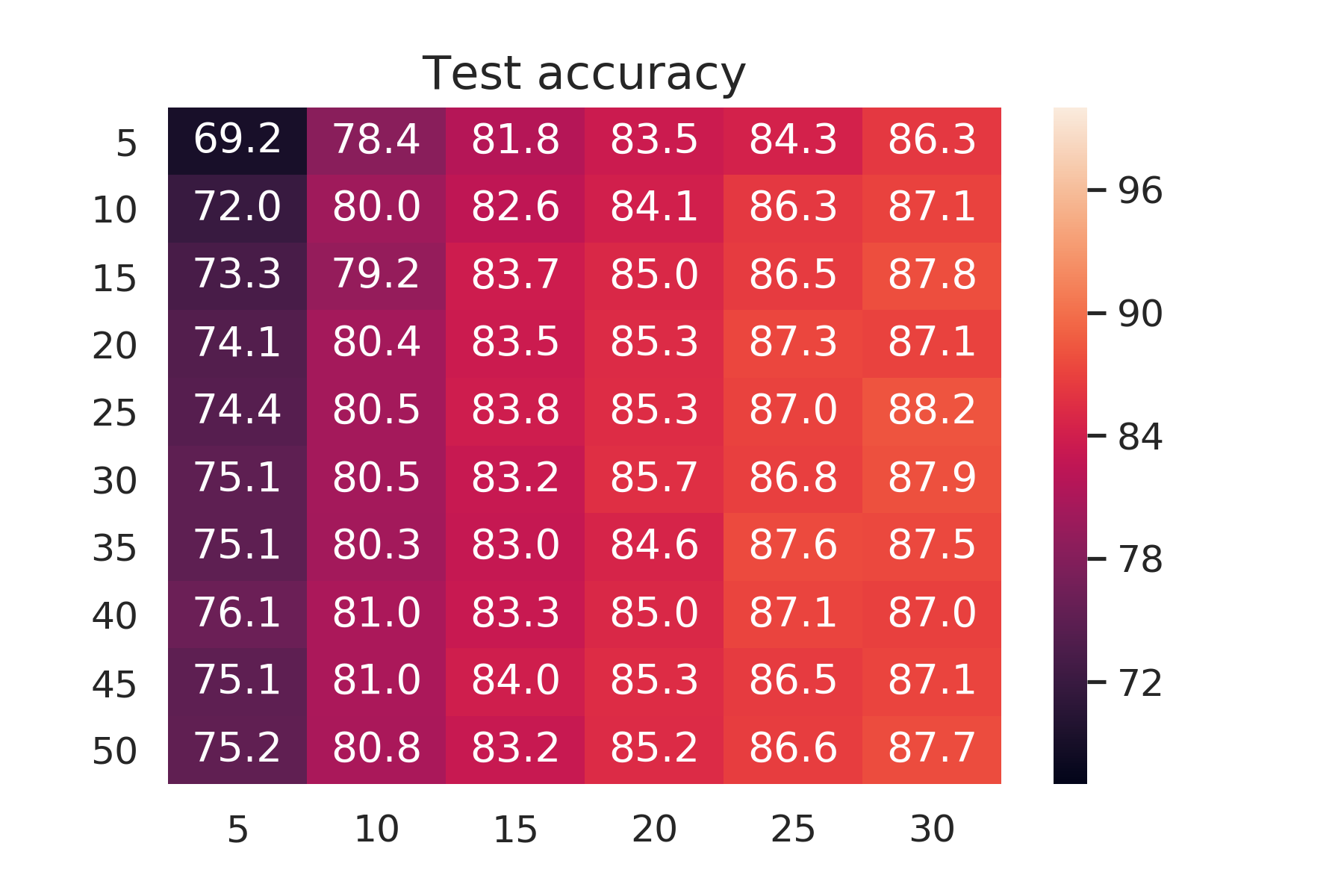}
\end{minipage}
\caption{Training/test accuracy of networks for training the CIFAR-10 dataset by varying the number of channels ($x$-axis) and the number of residual blocks ($y$-axis).}
\label{fig:heatmap-cifar}
\end{figure}

\section{Conclusion}\label{sec:conclusion}
In this paper, we prove that $\Theta(N^{2/3})$ parameters suffice for memorizing arbitrary $N$ pairs under the mild $\Delta$-separateness condition.
Our result provides significantly improved results, compared to the prior results showing the sufficiency of $\Theta(N)$ parameters with/without conditions on pairs.
Theorem~\ref{thm:lw} shows that deeper networks have more memorization power. This result coincides with the recent study on the benefits of depth for function approximation.
On the other hand, Theorem~\ref{thm:3} shows that network width is not important for the memorization power. We also provide sufficient conditions for identifying the memorization power of networks. Finally, we empirically confirm our theoretical results.

\acks{This work was supported by Institute of Information \& Communications Technology Planning \& Evaluation (IITP) grant funded by the Korea government (MSIT) 
(No.2019-0-00075, Artificial Intelligence Graduate School Program (KAIST) and No.2017-0-01779, A machine learning and statistical inference framework for explainable artificial intelligence). CY acknowledges Korea Foundation for Advanced Studies, NSF CAREER grant 1846088, and ONR grant
N00014-20-1-2394 for financial support.}

\clearpage
\bibliography{reference}

\appendix
\newpage
\onecolumn
{
\section{Experimental setup}\label{sec:expsetup}
In this section, we described the details on residual network architectures and hyper-parameter setups used for our experiments.

We use the residual networks of the following structure.
First, a convolutional layer and $\relu$ maps a $3$-channel input image to a $c$-channel feature map. Here, the size of the feature map is identical to the size of input images.
Then, we apply $b$ residual blocks where each residual block maps $x\mapsto\relu(\textsc{Conv}\circ\relu\circ\textsc{Conv}(x)+x)$ while preserving the number of channels and the size of feature map.
Finally, we apply an average pooling layer and a fully-connected layer.
We train the model for $5\times10^5$ iterations with batch size $64$ by the stochastic gradient descent.
We use the initial learning rate $0.1$, weight decay $10^{-4}$, and the learning rate decay at the  $1.5\times10^5$-th iteration and the $3.5\times10^5$-th iteration by a multiplicative factor $0.1$.
All presented results are averaged over three independent trials.
}

\newpage
\section{Training and test accuracy for SVHN dataset}\label{sec:svhnapdx}
\begin{figure}[h]
\centering
\subfigure[]{\includegraphics[width=2.5in]{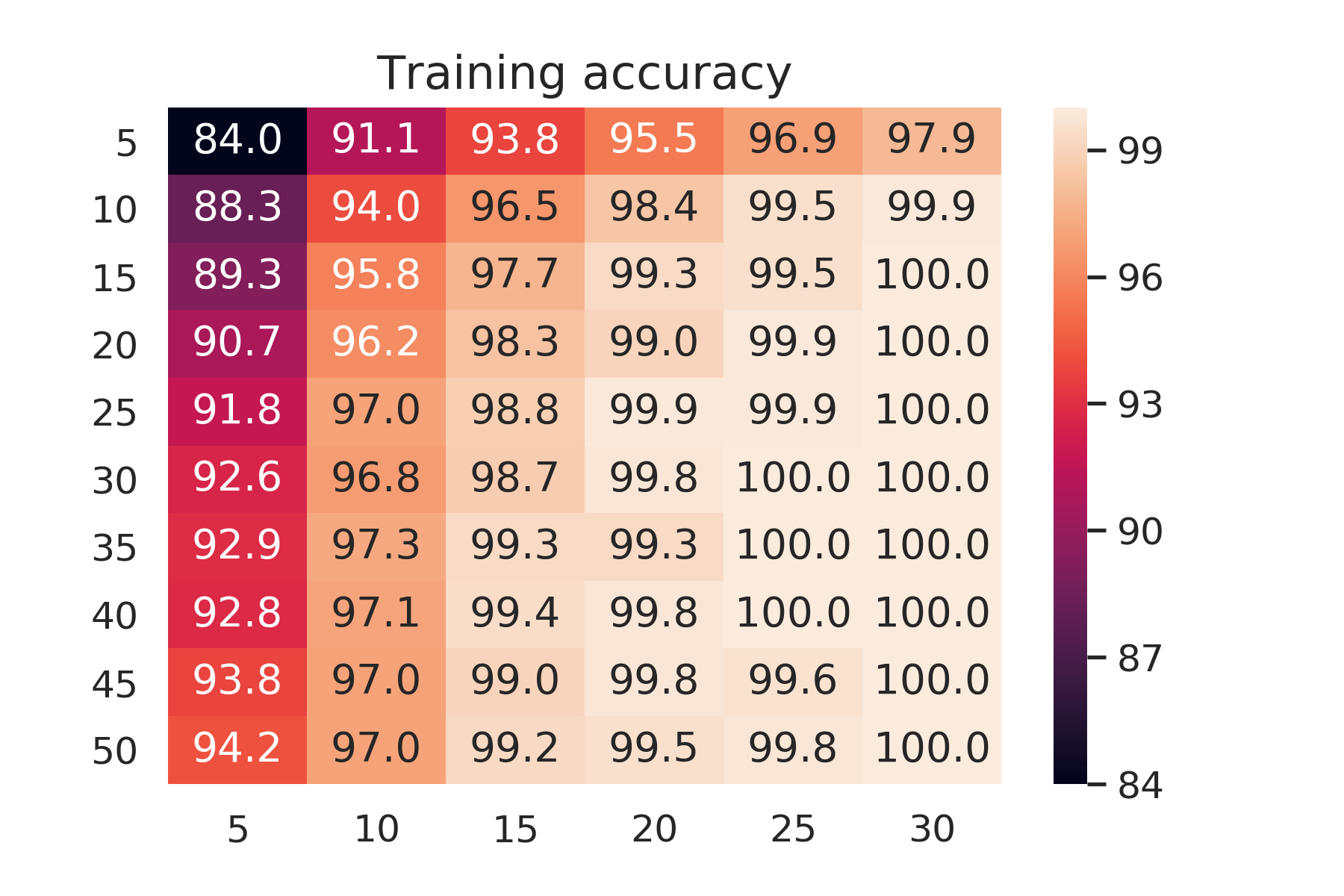}}
\subfigure[] {\includegraphics[width=2.5in]{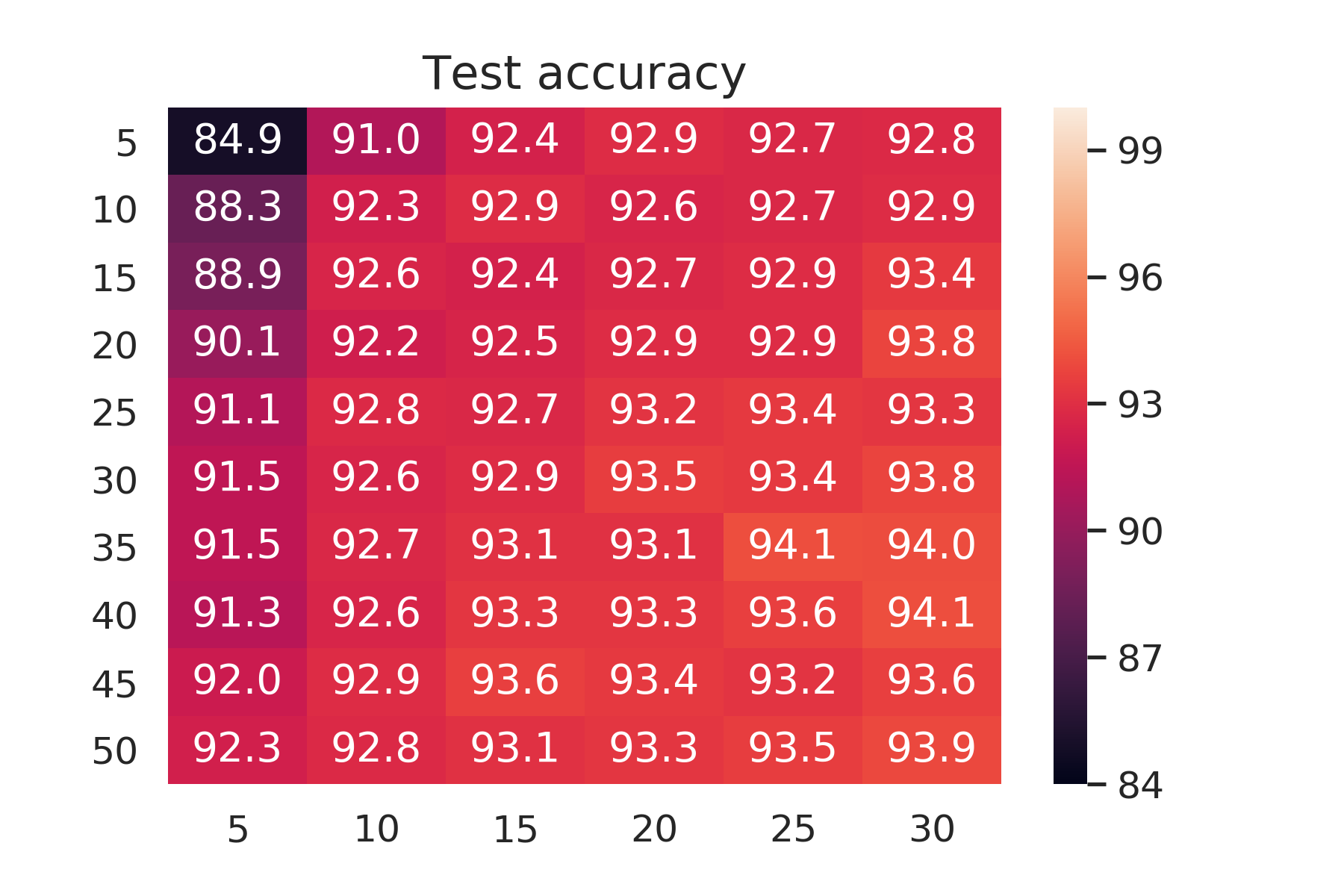}}
\caption{Training/test accuracy of networks for training the SVHN dataset by varying the number of channels ($x$-axis) and the number of residual blocks ($y$-axis).}
\label{fig:heatmap}
\end{figure}

\newpage
\section{$\Delta$-separateness of Gaussian random vectors}\label{sec:gaussian}
While we mentioned in Section \ref{sec:contribution} that digital nature of data enables the $\Delta$-separateness of inputs with small $\Delta$, random inputs are also $\Delta$-separated with small $\Delta$ with high probability.
In particular, we prove the following lemma.
\begin{lemma}\label{lem:gaussian}
For any $d_x\in\mathbb N$, consider a set of $N$ vectors $\mathcal X=\{x_1,\dots,x_N\}\subset\mathbb R^{d_x}$ where each entry of $x_i$ is drawn from the i.i.d.\ standard normal distribution. Then, for any $\delta>0$, $\mathcal X$ is $\Big((N/\sqrt{\delta})^{2/d_x}\sqrt{3e+\frac{5e}{d_x}\ln( {N}/{\sqrt\delta})}\Big)$-separated with probability at least $1-\delta$.
\end{lemma}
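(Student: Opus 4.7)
\section*{Proof proposal for Lemma \ref{lem:gaussian}}

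The plan is to control the maximum and minimum pairwise distances separately and then take their ratio. Since each $x_i - x_j$ is an independent $\mathcal{N}(0,2 I_{d_x})$ vector, the squared distance satisfies $\tfrac{1}{2}\|x_i-x_j\|_2^2 \sim \chi^2_{d_x}$. Hence the bound on $\Delta$ reduces to simultaneous upper tail and lower tail (small-ball) estimates for a $\chi^2_{d_x}$ random variable, combined with a union bound over the $\binom{N}{2} \le N^2/2$ pairs.

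For the upper bound on $\max_{i \neq j}\|x_i-x_j\|_2$, I would invoke the standard Laurent--Massart inequality $\Pr[\chi^2_{d_x} \ge d_x + 2\sqrt{d_x t}+2t] \le e^{-t}$. Setting $t = \ln(N^2/\delta) = 2\ln(N/\sqrt{\delta})$ and union bounding over pairs yields, with probability at least $1-\delta/2$, the uniform bound $\|x_i-x_j\|_2^2 \le 2d_x + 4\sqrt{d_x t} + 4t$. A convenient weighted AM--GM step $2\sqrt{d_x t} \le \alpha d_x + t/\alpha$ converts this to a linear expression of the form $c_1 d_x + c_2 \ln(N/\sqrt{\delta})$, and a good choice of $\alpha$ calibrates the constants to match the $3e$ and $5e$ appearing in the statement.

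For the lower bound on $\min_{i \neq j}\|x_i-x_j\|_2$, the key ingredient is a small-ball estimate for $\chi^2_{d_x}$: integrating the density $\tfrac{1}{2^{d_x/2}\Gamma(d_x/2)} s^{d_x/2-1} e^{-s/2}$ from $0$ to $s$ and dropping $e^{-s/2}$ gives $\Pr[\chi^2_{d_x} \le s] \le s^{d_x/2}/(2^{d_x/2-1}\Gamma(d_x/2+1))$, and a Stirling estimate on $\Gamma(d_x/2+1)$ produces something like $\frac{1}{\sqrt{\pi d_x}}(es/d_x)^{d_x/2}$. Setting this equal to $\delta/N^2$ and solving for $s$ gives $s \gtrsim (d_x/e)(\sqrt{\delta}/N)^{4/d_x}$, so that with probability $\ge 1-\delta/2$ all pairwise squared distances exceed $(2d_x/e)(\sqrt{\delta}/N)^{4/d_x}$, up to the small polynomial prefactor absorbed into the constant.

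Finally, conditioning on both high-probability events (total failure probability $\le \delta$), the ratio $\Delta$ of maximum to minimum distance is bounded by
\[
\Delta \;\le\; \sqrt{\frac{c_1 d_x + c_2 \ln(N/\sqrt{\delta})}{(2d_x/e)(\sqrt{\delta}/N)^{4/d_x}}} \;=\; \left(\tfrac{N}{\sqrt{\delta}}\right)^{2/d_x}\sqrt{\tfrac{c_1 e}{2} + \tfrac{c_2 e}{2 d_x}\ln(N/\sqrt{\delta})},
\]
which matches the stated form once the constants in the tail bounds are tuned. The main obstacle I anticipate is bookkeeping to land exactly on the constants $3e$ and $5e$: this will require choosing the AM--GM parameter in the upper-tail simplification carefully and absorbing the Stirling prefactor from the small-ball calculation into the leading constants, rather than leaving it as an extra multiplicative term.
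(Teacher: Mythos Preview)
Your proposal is correct and follows the same overall architecture as the paper: reduce to $\chi^2_{d_x}$ tail bounds for $\tfrac{1}{2}\|x_i-x_j\|_2^2$, control both tails at level $\delta/N^2$, and union bound over the $\binom{N}{2}$ pairs. The paper, however, uses a single tool for both tails, namely the raw Chernoff bound $\Pr[X \gtrless z d_x] \le (z e^{1-z})^{d_x/2}$, and simply plugs in $z = 3 + \tfrac{5}{d_x}\ln(N/\sqrt{\delta})$ for the upper tail and $z = e^{-1}(\sqrt{\delta}/N)^{4/d_x}$ for the lower tail, verifying directly that each choice yields probability at most $\delta/N^2$. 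Your route differs in the technical lemmas: Laurent--Massart (itself a consequence of Chernoff) for the upper tail plus an AM--GM simplification, and a density-integration/Stirling small-ball estimate for the lower tail. Both work; the paper's version avoids Stirling and the AM--GM bookkeeping entirely, while your lower-tail bound is actually slightly sharper (by a factor $(\pi d_x)^{-1/2}$) but you then discard that gain. Your AM--GM parameter $\alpha=2$ does land exactly on $c_1=6$, $c_2=10$, so the constants $3e$ and $5e$ come out as claimed.
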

Lemma \ref{lem:gaussian} implies that Theorem \ref{thm:lw} and Theorem \ref{thm:3} can be successfully applied to random Gaussian input vectors
as the separateness condition in Lemma \ref{lem:gaussian} is much weaker than our $2^{O(N^{2/3})}$-separateness condition for memorization with $o(N)$ parameters.

\begin{proof}
First, notice that for $i\ne j$, we have
\begin{align}
    &\mathbb P\left(\sqrt{\frac2e}\cdot\left(\frac{\sqrt\delta}{N}\right)^{2/d_x}\le\frac1{\sqrt{d_x}}\|x_i-x_j\|_2\le\sqrt{6+\frac{10}{d_x}\ln \frac{N}{\sqrt\delta}}\right)\notag\\
    &=\mathbb P\left(\frac2e\cdot\left(\frac{\sqrt{\delta}}{N}\right)^{4/d_x}\le \frac1{d_x}\|x_i-x_j\|_2^2\le6+\frac{10}{d_x}\ln \frac{N}{\sqrt\delta}\right)\notag\\
    &=\mathbb P\left(\frac2e\cdot\left(\frac{\sqrt{\delta}}{N}\right)^{4/d_x}\le \frac2{d_x}X\le6+\frac{10}{d_x}\ln \frac{N}{\sqrt\delta}\right)\notag\\
    &=1-\mathbb P\left(\frac2e\cdot\left(\frac{\sqrt{\delta}}{N}\right)^{4/d_x}> \frac2{d_x}X\right)-\mathbb P\left(\frac2{d_x}X>6+\frac{10}{d_x}\ln\frac{N}{\sqrt\delta}\right)\notag\\
    &=1-\mathbb P\left(\frac1e\cdot\left(\frac{\sqrt{\delta}}{N}\right)^{4/d_x}\cdot d_x>X\right)-\mathbb P\left(X>\Big(3+\frac{5}{d_x}\ln \frac{N}{\sqrt\delta}\Big)\cdot d_x\right)\notag\\
    &\ge1-\left(\frac1e\cdot\left(\frac{\sqrt{\delta}}{N}\right)^{4/d_x}e^{1-\frac1e\cdot(\frac{\sqrt{\delta}}{N})^{4/d_x}}\right)^{d_x/2}-\left(\Big(3+\frac{5}{d_x}\ln\frac{N}{\sqrt\delta}\Big)e^{-2}\left(\frac{\sqrt{\delta}}{N}\right)^{5/d_x}\right)^{d_x/2}\notag\\
    &=1-\frac\delta{N^2}-\frac\delta{N^2}\cdot\left(\Big(3+\frac{5}{d_x}\ln \frac{N}{\sqrt\delta}\Big)e^{-2}\Big(\frac{\sqrt{\delta}}{N}\Big)^{1/d_x}\right)^{d_x/2}\notag\\
    &\ge 1-\frac\delta{N^2}-\frac\delta{N^2}\cdot\left(\frac3{e^2N^{1/d_x}}+\frac{5}{e^2}\cdot\frac{\ln \big(\frac{N}{\sqrt\delta}\big)^{1/d_x}}{\big(\frac{N}{\sqrt\delta}\big)^{1/d_x}}\right)^{d_x/2}\notag\\
    &\ge 1-\frac\delta{N^2}-\frac\delta{N^2}\cdot\left(\frac3{e^2}+\frac{5}{e^3}\right)^{d_x/2}\notag\\
    &\ge1-\frac\delta{N^2}-\frac\delta{N^2}=1-\frac{2\delta}{N^2}\label{eq:gaussian}
\end{align}
where $X$ denotes a chi-square random variable with $d_x$ degrees of freedom.
For the first inequality in \eqref{eq:gaussian}, we use the inequalities
\begin{align*}
    &\mathbb P(X< z\cdot d_x)\le\inf_{t>0}\frac{\mathbb{E}[e^{-tX}]}{e^{-t(z\cdot d_x)}}=\inf_{t>0}\frac{(1+2t)^{-d_x/2}}{e^{-t(z\cdot d_x)}}=(ze^{1-z})^{d_x/2}~\text{for}~0<z<1\\
    &\mathbb P(X> z\cdot d_x)\le\inf_{t>0}\frac{\mathbb{E}[e^{tX}]}{e^{t(z\cdot d_x)}}=\inf_{0<t<1/2}\frac{(1-2t)^{-d_x/2}}{e^{t(z\cdot d_x)}}=(ze^{1-z})^{d_x/2}~\text{for}~z>1
\end{align*}
which directly follow from the Chernoff bound for the chi-square distribution.
For the third inequality in \eqref{eq:gaussian}, we use the fact that $\max_{x>0}(\ln x)/{x}=1/e$ and $N\ge1$. For the last inequality in \eqref{eq:gaussian}, we use $\frac{3}{e^2}+\frac5{e^3}<1$.

Then, $\mathcal X$ is $\Big((N/\sqrt{\delta})^{2/d_x}\sqrt{3e+\frac{5e}{d_x}\ln( {N}/{\sqrt\delta})}\Big)$-separated with probability at least $1-\delta$ since the following bound holds:
\begin{align*}
&\mathbb P\left(\sqrt{\frac2e}\delta^{1/d_x}N^{-2/d_x}\le\frac1{\sqrt{d_x}}\|x_i-x_j\|_2\le\sqrt{6+\frac{10}{d_x}\ln \frac{N}{\sqrt\delta}},~\forall i\ne j\right)\\
&=1-\mathbb{P}\left(\exists i\ne j~\text{s.t.}~\|x_i-x_j\|_2> \sqrt{6+\frac{10}{d_x}\ln \frac{N}{\sqrt\delta}}~\text{or}~\|x_i-x_j\|_2< \sqrt{\frac2e}\delta^{1/d_x}N^{-2/d_x}\right)\\
&\ge1-\sum_{i\ne j}\mathbb{P}\left(\|x_i-x_j\|_2> \sqrt{6+\frac{10}{d_x}\ln \frac{N}{\sqrt\delta}}~\text{or}~\|x_i-x_j\|_2< \sqrt{\frac2e}\delta^{1/d_x}N^{-2/d_x}\right)\\
&=1-\sum_{i\ne j}\left(1-\mathbb P\left(\sqrt{\frac2e}\delta^{1/d_x}N^{-2/d_x}\le\frac1{\sqrt{d_x}}\|x_i-x_j\|_2\le\sqrt{6+\frac{10}{d_x}\ln \frac{N}{\sqrt\delta}}\right)\right)\\
&\ge1-\frac{N(N-1)}{2}\times \frac{2\delta}{N^2}\\
&\ge1-\delta
\end{align*}
where the first inequality follows from the union bound and the second inequality follows from \eqref{eq:gaussian}.
This completes the proof of Lemma \ref{lem:gaussian}.
\end{proof}

\newpage

\section{Proof of Lemmas for Theorem \ref{thm:lw}}\label{sec:pfthm:lw}

\subsection{Tools for proving Lemma \ref{lem:transform}}\label{sec:tools}
We present the following claims for the proving lemmas.
In particular, Claim \ref{claim:identity} and Claim \ref{claim:indicator} guide how to approximate $\id$ and $\step$ by a single sigmoidal neuron, respectively.
\begin{claim}[{\citet[Lemma~4.1]{kidger19}}]\label{claim:identity}
For any sigmoidal activation function $\sigma$, for any bounded interval $\mathcal I\subset \mathbb R$ for any $\varepsilon>0$, there exist $a,b,c,d\in\mathbb R$ such that $|a\cdot\sigma(c\cdot x+d)+b-x|<\varepsilon$ for all $x\in\mathcal I$.
\end{claim}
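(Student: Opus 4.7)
The plan is to exploit the second defining property of a sigmoidal function: there exists a point $z$ at which $\sigma$ is continuously differentiable with $\sigma'(z)\ne 0$. Near this point, $\sigma$ looks approximately affine, so a single sigmoidal unit, suitably rescaled, should reproduce the identity up to an error we can drive to zero.

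Concretely, I would pick $d=z$ and a small scaling $c>0$, so that for every $x$ in the bounded interval $\mathcal{I}$ the argument $cx+z$ stays in a small neighborhood of $z$ where the linear approximation is accurate. Choose the outer coefficients to make the first-order Taylor expansion match the identity exactly:
\[
a=\frac{1}{c\,\sigma'(z)},\qquad b=-\frac{\sigma(z)}{c\,\sigma'(z)}.
\]
Then $a\sigma(cx+z)+b-x$ equals $a$ times the remainder $\sigma(cx+z)-\sigma(z)-\sigma'(z)\,cx$. By the definition of differentiability at $z$, for any $\eta>0$ there exists $\delta>0$ such that $|h|<\delta$ implies $|\sigma(z+h)-\sigma(z)-\sigma'(z)h|\le \eta|h|$. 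Setting $M=\sup_{x\in\mathcal{I}}|x|$ (finite since $\mathcal{I}$ is bounded), choose $c$ small enough that $cM<\delta$; the error is then bounded by $|a|\cdot\eta\,|cx|=\eta|x|/|\sigma'(z)|\le \eta M/|\sigma'(z)|$.

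Finally, take $\eta<\varepsilon|\sigma'(z)|/M$ to make this uniform bound smaller than $\varepsilon$ on all of $\mathcal{I}$. The edge case $M=0$ (i.e.\ $\mathcal{I}=\{0\}$) is trivial, handled by any choice making the constant term zero. I do not foresee any real obstacle: the first sigmoidal condition (existence of distinct limits at $\pm\infty$) plays no role here, and the proof is just a careful Taylor-expansion argument, with the only subtlety being to fix $a,b$ in terms of $c$ \emph{before} taking $c$ small so that the ``$1/c$'' blow-up in $a$ is compensated by the ``$o(c)$'' behavior of the remainder.
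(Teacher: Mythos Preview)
Your argument is correct and is the standard proof of this fact. Note, however, that the paper does not give its own proof of this claim: it simply cites it as \cite[Lemma~4.1]{kidger19}, so there is nothing in the paper to compare against. Your Taylor-expansion approach---linearizing $\sigma$ near the point $z$ where $\sigma'(z)\ne 0$, choosing $a=1/(c\sigma'(z))$ and $b=-\sigma(z)/(c\sigma'(z))$, and then sending $c\to 0$---is exactly the natural (and essentially only) way to prove this, and is the argument in the cited reference.
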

\begin{claim}\label{claim:indicator}
For any sigmoidal activation function $\sigma$, for any $\varepsilon,\delta>0$, there exist $a,b,c\in\mathbb R$ such that $|a\cdot\sigma(c\cdot x)+b-\mathbf{1}[x\ge0]|<\varepsilon$ for all $x\notin\mathcal[-\delta,\delta]$.
\end{claim}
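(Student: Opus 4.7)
The plan is to exploit directly the first bullet of the sigmoidal definition: the two one-sided limits $L_-:=\lim_{x\to-\infty}\sigma(x)$ and $L_+:=\lim_{x\to+\infty}\sigma(x)$ exist and satisfy $L_-\ne L_+$. Since we have three free scalar parameters $a,b,c$ and two ``target'' values ($0$ to the left of $-\delta$, and $1$ to the right of $\delta$), the affine coefficients $a,b$ are essentially forced once we commit to driving $\sigma(cx)$ close to $L_+$ on $[\delta,\infty)$ and close to $L_-$ on $(-\infty,-\delta]$; the remaining scalar $c$ is then a ``temperature'' that we can choose large enough to sharpen $\sigma(c\,\cdot)$ into a good approximation of the step.

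Concretely, I would first define
\[
a\defeq\frac{1}{L_+-L_-},\qquad b\defeq\frac{-L_-}{L_+-L_-},
\]
so that $aL_++b=1$ and $aL_-+b=0$. Then I would fix $\varepsilon'\defeq\varepsilon/|a|=\varepsilon\,|L_+-L_-|>0$ and use the definitions of the two limits to produce a threshold $M>0$ such that $|\sigma(y)-L_+|<\varepsilon'$ whenever $y\ge M$ and $|\sigma(y)-L_-|<\varepsilon'$ whenever $y\le -M$. Finally, I would set $c\defeq M/\delta$. For any $x\ge\delta$ we have $cx\ge M$, and for any $x\le -\delta$ we have $cx\le -M$, so in either case $|\sigma(cx)-L_{\pm}|<\varepsilon'$ with the appropriate sign, and multiplying by $a$ and adding $b$ yields $|a\sigma(cx)+b-\mathbf 1[x\ge 0]|<|a|\varepsilon'=\varepsilon$.

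There is essentially no obstacle beyond bookkeeping. The only mild subtlety is the case $L_+<L_-$: then $a$ comes out negative, but the computation above still goes through verbatim because $|a|=1/|L_+-L_-|$ and the two ``close to the right limit'' inequalities are preserved under multiplication by $a$ regardless of sign. Note that this argument uses only the first bullet of Definition~\ref{def:sigmoidal} (existence and distinctness of the two limits); the differentiability bullet is not needed here and will only be invoked for Claim~\ref{claim:identity}, where a nonzero derivative at some point is the natural handle for approximating the identity by a single sigmoidal neuron.
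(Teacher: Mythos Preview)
Your proof is correct and essentially identical to the paper's own argument: the paper also sets $a=\tfrac{1}{\beta-\alpha}$, $b=-\tfrac{\alpha}{\beta-\alpha}$, picks a threshold $k$ (your $M$) so that $\sigma$ is within $(\beta-\alpha)\varepsilon$ of its limits beyond $\pm k$, and takes $c=k/\delta$. The only cosmetic difference is that the paper handles $L_+<L_-$ by a ``similar manner'' remark, whereas you note explicitly that the sign of $a$ is irrelevant.
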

\begin{proof}
We assume that $\alpha:=\lim_{x\rightarrow-\infty}\sigma(x)<\lim_{x\rightarrow\infty}\sigma(x)=:\beta$ where the case that $\beta<\alpha$ can be proved in a similar manner.
From the definition of $\alpha,\beta$, there exists $k>0$ such that $|\sigma(x)-\alpha|<(\beta-\alpha)\varepsilon$ if $x<-k$ and $|\sigma(x)-\beta|<(\beta-\alpha)\varepsilon$ if $x>k$.
Then, choosing $a=\frac1{\beta-\alpha}$, $b=-\frac{\alpha}{\beta-\alpha}$, and $c=\frac{k}{\delta}$ completes the proof of Claim \ref{claim:indicator}.
\end{proof}
\begin{claim}\label{claim:seqceil}
For any $a,x\in\mathbb R$ such that $a\ne0$, for any $b\in\mathbb N$, it holds that $\big\lceil\frac{x}{a\cdot b}\big\rceil=\big\lceil\frac{\lceil x/a\rceil}{b}\big\rceil$.
\end{claim}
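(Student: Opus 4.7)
The plan is to reduce the claim to the classical nested-ceiling identity $\lceil \lceil y \rceil / n \rceil = \lceil y/n \rceil$, valid for every real $y$ and every positive integer $n$. Once this auxiliary identity is in hand, substituting $y = x/a$ (well-defined since $a \ne 0$) and $n = b$ rewrites the right-hand side as $\lceil \lceil x/a \rceil / b \rceil = \lceil (x/a)/b \rceil = \lceil x/(ab) \rceil$, which matches the left-hand side. A convenient feature of this reduction is that the auxiliary identity places no sign restriction on $y$, so the hypothesis $a \ne 0$ (rather than $a > 0$) is already sufficient; the sign of $a$ just changes the sign of $y$ but does not affect the argument.

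To prove the auxiliary identity, I would set $k := \lceil y/n \rceil$, so that $n(k-1) < y \le nk$. Since $nk$ is an integer and $\lceil \cdot \rceil$ is monotone, the upper bound $y \le nk$ yields $\lceil y \rceil \le nk$, hence $\lceil y \rceil / n \le k$ and therefore $\lceil \lceil y \rceil / n \rceil \le k$. In the other direction, $\lceil y \rceil \ge y > n(k-1)$, and because $n(k-1)$ is an integer this sharpens to $\lceil y \rceil \ge n(k-1) + 1$; dividing by $n$ gives $\lceil y \rceil / n > k - 1$, so $\lceil \lceil y \rceil / n \rceil \ge k$. Combining the two inequalities proves $\lceil \lceil y \rceil / n \rceil = k = \lceil y/n \rceil$.

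There is no real obstacle in this proof; it is a one-line corollary of a standard identity, and the only point requiring a touch of care is that $b$ must be a positive integer (so that division by $b$ preserves ordering of integers) while $a$ is only required to be nonzero. Both are exactly the hypotheses given.
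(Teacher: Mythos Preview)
Your proof is correct and essentially the same as the paper's: both are elementary ceiling manipulations hinging on the fact that $b$ is a positive integer (so $bm$, respectively $nk$, is an integer that can be compared to $\lceil x/a\rceil$). The paper proves the inequality $\lceil x/(ab)\rceil \le \lceil \lceil x/a\rceil /b\rceil$ as ``trivial'' and then derives a contradiction from strict inequality, whereas you factor through the standard identity $\lceil \lceil y\rceil/n\rceil = \lceil y/n\rceil$ and prove both directions directly; the underlying arithmetic is identical.
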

\begin{proof}
It trivially holds that $\lceil\frac{x}{a\cdot b}\rceil\le\lceil\frac{\lceil x/a\rceil}{b}\rceil$.
Now, we show a contradiction if $\lceil\frac{x}{a\cdot b}\rceil<\lceil\frac{\lceil x/a\rceil}{b}\rceil$.
Suppose that $\lceil\frac{x}{a\cdot b}\rceil<\lceil\frac{\lceil x/a\rceil}{b}\rceil$.
Then, there exists an integer $m$ such that
\begin{align*}
\frac{x}{a\cdot b}\le m <\frac{\lceil x/a\rceil}{b}\qquad\text{and hence,}\qquad\frac{x}a\le\Big\lceil\frac{x}a\Big\rceil\le b\cdot m<\Big\lceil\frac{x}a\Big\rceil
\end{align*}
which leads to a contradiction.
This completes the proof of Claim \ref{claim:seqceil}.
\end{proof}

\subsection{Proof of Lemma \ref{lem:transform}}\label{sec:pflem:transform}
Without loss of generality, we first assume that for any network $h$ having $\step$ neurons, all inputs to  $\step$ neurons (i.e., $\mathbf{1}[x\ge0]$) is non-zero for all $x\in\mathcal X$ during the evaluation of $h(x)$.
This assumption can be easily satisfied by adding some small bias to the inputs of $\step$ neurons so that the output of $\step$ neurons does not change for all $x\in\mathcal X$. Note that such a bias always exists since $|\mathcal X|<\infty$. Furthermore, introducing this assumption does not affect to the values of $h(x)$ for all $x\in\mathcal X$

Now, we describe our construction of $g$.
Let $\delta>0$ be the minimum absolute value among all inputs to $\step$ neurons during the evaluation of $f(x)$ for all $x\in\mathcal X$.
Let $L$ be the number of hidden layers in $f$. 
Starting from $f$, we iteratively substitute the $\step$ and $\id$ hidden neurons into $\sigma$ hidden neurons, from the last hidden layer to the first hidden layer.
In particular, by using Claim \ref{claim:identity} and Claim \ref{claim:indicator}, we replace $\step$ and $\id$ neurons in the $\ell$-th hidden layer by $\sigma$ neurons approximating $\id$ and $\step$.

First, let $g_L$ be a network identical to $f$ except for its $L$-th hidden layer. In particular, the $L$-th hidden layer of $g_L$ consists of $\sigma$ neurons approximating $\step$ and $\id$  neurons in the $L$-th hidden layer of $f$.
Here, we accurately approximate $\step$ and $\id$ neurons by $\sigma$ using Claim \ref{claim:identity} and Claim \ref{claim:indicator} so that $|g_L(x)-f(x)|<\varepsilon/L$ for all $x\in\mathcal X$.
Note that such approximation always exists due to the existence of $\delta>0$.
Now, let $g_{L-1}$ be a network identical to $f_L$ except for its $(L-1)$-th hidden layer consisting of $\sigma$ neurons approximating $\step$ and $\id$ neurons in the $(L-1)$-th hidden layer of $g_L$.
Here, we also accurately approximate $\step$ and $\id$ neurons by $\sigma$ using Claim \ref{claim:identity} and Claim \ref{claim:indicator} so that $|g_L(x)-g_{L-1}(x)|<\varepsilon/L$ for all $x\in\mathcal X$.
If we repeat this procedure until replacing the first hidden layer, then $g:=g_1$ would be the desired network satisfying that $|f(x)-g(x)|<\varepsilon$ for all $x\in\mathcal X$.
This completes the proof of Lemma \ref{lem:transform}.

\subsection{Proof of Lemma \ref{lem:projection-rough}}\label{sec:pflem:projection-rough}
We let $d_x \ge 2$, as the lemma trivially holds when $d_x = 1$.
Now, consider a projection $x \mapsto u^\top x \in \mathbb{R}$ by some vector $u \in \mathbb{R}^{d_x}$. There exists some $u$ such that
\begin{align}
\frac{\max_{\{x,x^\prime\}\in\binom{\mathcal X}{2}}|u^\top(x-x^\prime)|}{\min_{\{x,x^\prime\}\in\binom{\mathcal X}{2}}|u^\top(x-x^\prime)|}<N^2\Delta\sqrt{\frac{\pi d_x}{8}}.\label{eq:projection}
\end{align}
holds, as we can see from the following lemma. The proof of Lemma~\ref{lem:projection} is presented in Appendix~\ref{sec:pflem:projection}.
\begin{lemma}\label{lem:projection}
For any $N,d_x\in\mathbb{N}$, for any $\mathcal X\in\binom{\mathbb{R}^{d_x}}{N}$, there exists a unit vector $u\in\mathbb{R}^{d_x}$ such that $\frac1{N^2}\sqrt{\frac{8}{\pi d_x}}\|x-x^\prime\|_2\le|u^\top(x-x^\prime)|\le\|x-x^\prime\|_2$ for all $x,x^\prime\in\mathcal X$.
\end{lemma}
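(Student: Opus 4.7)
The upper bound $|u^\top(x-x')|\le\|x-x'\|_2$ is immediate from Cauchy--Schwarz for any unit vector $u$, so the real content is the lower bound. My plan is to prove existence of a good $u$ by the probabilistic method: sample $u$ uniformly from the unit sphere $S^{d_x-1}$, control the probability that any single pair has a small projection, and take a union bound over the $\binom{N}{2}$ pairs.

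\textbf{Reduction to a one-dimensional problem.} Fix a pair $x\ne x'$ in $\mathcal X$ and set $v:=(x-x')/\|x-x'\|_2$, a unit vector. By rotational invariance of the uniform distribution on $S^{d_x-1}$, the scalar $u^\top v$ has the same distribution as the first coordinate $u_1$ of a uniformly random point on the sphere. So the event $\{|u^\top(x-x')|<\varepsilon\|x-x'\|_2\}$ has the same probability as $\{|u_1|<\varepsilon\}$, independent of the pair.

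\textbf{Density bound on $u_1$.} For $d_x\ge 3$, the density of $u_1$ on $[-1,1]$ is
\[
f(t)=\frac{\Gamma(d_x/2)}{\sqrt{\pi}\,\Gamma((d_x-1)/2)}(1-t^2)^{(d_x-3)/2},
\]
which is maximized at $t=0$. I will bound the gamma ratio via Gautschi's inequality, $\Gamma(d_x/2)/\Gamma((d_x-1)/2)\le\sqrt{d_x/2}$, which gives $f(0)\le\sqrt{d_x/(2\pi)}$ and hence $\mathbb{P}(|u_1|<\varepsilon)\le 2\varepsilon f(0)\le\varepsilon\sqrt{2d_x/\pi}$. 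The cases $d_x=1$ (trivial since $u\in\{\pm 1\}$ gives $|u^\top(x-x')|=\|x-x'\|_2$) and $d_x=2$ (direct computation: $\mathbb{P}(|u_1|<\varepsilon)=(2/\pi)\arcsin\varepsilon\le\varepsilon$) have to be checked separately, but satisfy the same bound.

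\textbf{Union bound and choice of $\varepsilon$.} Setting $\varepsilon^\star:=\frac{1}{N^2}\sqrt{8/(\pi d_x)}$ and taking a union bound over the at most $\binom{N}{2}$ pairs,
\[
\mathbb{P}\!\left(\exists\,x\ne x'\in\mathcal X:|u^\top(x-x')|<\varepsilon^\star\|x-x'\|_2\right)\le\binom{N}{2}\cdot\varepsilon^\star\sqrt{2d_x/\pi}=\frac{2(N-1)}{\pi N}<1.
\]
So with positive probability the random unit vector $u$ satisfies the lower bound for every pair simultaneously, proving existence.

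\textbf{Main obstacle.} The only nontrivial step is obtaining the correct $\sqrt{d_x/(2\pi)}$ constant in the density bound, which is what drives the clean $\sqrt{8/(\pi d_x)}$ factor in the conclusion; a cruder bound such as $\Gamma(d_x/2)/\Gamma((d_x-1)/2)\le\sqrt{d_x}$ would lose a $\sqrt{2}$ and weaken Lemma~\ref{lem:projection-rough} slightly. I expect the rest to be routine.
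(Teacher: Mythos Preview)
Your proposal is correct and follows essentially the same approach as the paper: probabilistic method with $u$ uniform on the sphere, reduction to the marginal of $u_1$ by rotational invariance, a Gautschi-type bound on the gamma ratio, and a union bound over the $\binom{N}{2}$ pairs. The only cosmetic difference is that the paper bounds the spherical cap probability via an explicit surface-area integral (using $(\sin\phi)^{d_x-2}\le 1$ and $\arcsin t\le \tfrac{\pi}{2}t$) rather than your density-at-zero bound, but the resulting inequality $\mathbb{P}(|u_1|<\varepsilon)\le \varepsilon\sqrt{2d_x/\pi}$ and the handling of small $d_x$ are the same.
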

Finally, we construct the desired map by 
$$x\mapsto\frac{u^\top x-\min\{u^\top x:x\in\mathcal X\}}{\min_{\{x,x^\prime\}\in\binom{\mathcal X}{2}}|u^\top(x-x^\prime)|}=:v^\top x+b.$$
Then, the following equalities/inequalities holds:
\begin{align*}
    &\min_{\{x,x^\prime\}\in\binom{\mathcal X}{2}}|v^\top(x-x^\prime)|=\frac{\min_{\{x,x^\prime\}\in\binom{\mathcal X}{2}}|u^\top (x-x^\prime)|}{\min_{\{x,x^\prime\}\in\binom{\mathcal X}{2}}|u^\top(x-x^\prime)|}=1,\\
    &\max_{\{x,x^\prime\}\in\binom{\mathcal X}{2}}|v^\top(x-x^\prime)|=\frac{\max_{\{x,x^\prime\}\in\binom{\mathcal X}{2}}|u^\top (x-x^\prime)|}{\min_{\{x,x^\prime\}\in\binom{\mathcal X}{2}}|u^\top(x-x^\prime)|}\le N^2\Delta\sqrt{\frac{\pi d_x}{8}},\\
    &\min_{x\in\mathcal X}v^\top x+b=0,\\
    &\max_{x\in\mathcal X}v^\top x+b=\max_{\{x,x^\prime\}\in\binom{\mathcal X}{2}}|v^\top (x-x^\prime)|\le N^2\Delta\sqrt{\frac{\pi d_x}{8}},
\end{align*}
i.e., $\lfloor\{v^\top x+b:x\in\mathcal X\}\rfloor\in\binom{[\lceil N^2\Delta\sqrt{{\pi d_x}/{8}}\rceil]}{N}$.
This completes the proof of Lemma \ref{lem:projection-rough}.

\subsection{Proof of Lemma \ref{lem:compression1-rough}}\label{sec:pflem:compression1-rough}
To prove Lemma \ref{lem:compression1-rough}, we introduce its generalization as follows. The proof of Lemma \ref{lem:compression1-free} is presented in Appendix \ref{sec:pflem:compression1-free}.

\begin{lemma}\label{lem:compression1-free}
For any $N, K, d\in\mathbb N$ and for any $\mathcal X$ such that $\lfloor\mathcal X\rfloor\in\binom{[K]}{N}$, there exists a $\step+\id$ network $f$ of $1$ hidden layer and width $d$ such that $\lfloor f(\mathcal X)\rfloor\in\binom{[T]}{N}$ where $T:=\max\big\{\big\lceil\frac{K}{\lfloor(d+1)/2\rfloor}\big\rceil,\lfloor\frac{N^2}{4}+1\rfloor\big\}$.
\end{lemma}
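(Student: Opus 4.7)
The plan is to generalize the two-block construction of Lemma \ref{lem:compression1-rough} to $m := \lfloor(d+1)/2\rfloor$ blocks, and to establish the existence of the required shifts by a greedy counting argument.

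First, I set $T := \max\{\lceil K/m\rceil, \lfloor N^2/4 + 1\rfloor\}$ and partition $[0,K)$ into $m$ blocks $B_i := [iT,(i+1)T)\cap[0,K)$ for $i = 0, \ldots, m-1$. For integer shifts $b_0, \ldots, b_{m-1} \in [T]$, I consider the piecewise-linear function
\[
f(x) := \bigl((x - iT) + b_i\bigr) \bmod T \qquad \text{for } x \in B_i,
\]
which has slope $1$ on each piece. Its jump discontinuities are of two kinds: within $B_i$ with $b_i > 0$, a jump of size $-T$ at $x = (i+1)T - b_i$ (the modular wrap-around); and at each block boundary $x = (i+1)T$ for $i = 0, \ldots, m-2$, a jump of size $b_{i+1} - b_i$. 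Normalizing $b_0 := 0$ eliminates the first type of jump inside $B_0$, leaving at most $2m - 2$ distinct jump locations. Consequently $f(x) = x + \gamma + \sum_j \alpha_j \cdot \step(x - c_j)$ is realized by one identity neuron plus at most $2m - 2$ step neurons in a single hidden layer, for a total of at most $2m - 1 \le d$ hidden units, which matches the width budget.

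The heart of the argument is choosing the shifts so that $\lfloor f(\mathcal{X})\rfloor$ consists of $N$ distinct integers in $[T]$. For two inputs in the same block $B_i$, the map on integer parts reduces to the bijection $k \mapsto (k + b_i) \bmod T$ on $[T]$, so well-separateness is preserved within each block automatically regardless of the chosen shift. For $x \in B_i$ and $y \in B_j$ with $i \ne j$, a post-image collision $\lfloor f(x)\rfloor = \lfloor f(y)\rfloor$ occurs precisely when $b_j - b_i$ hits exactly one specific residue modulo $T$ determined by $\lfloor x\rfloor$ and $\lfloor y\rfloor$; thus each inter-block pair forbids a single residue for $b_j - b_i \pmod T$.

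With this observation I choose the shifts greedily: fix $b_0 := 0$, and for $k = 1, \ldots, m-1$ pick $b_k \in [T]$ avoiding every residue of the form $b_i + (\text{forbidden offset}) \bmod T$ arising from a pair between $B_k$ and some earlier block $B_i$, $i < k$. Letting $n_k := |\mathcal{X} \cap B_k|$, the number of forbidden values for $b_k$ is at most $n_k (n_0 + \cdots + n_{k-1}) \le n_k(N - n_k) \le N^2/4$ by AM--GM. Since $T \ge \lfloor N^2/4 + 1 \rfloor > N^2/4$, at least one valid $b_k$ always remains, so the greedy process terminates successfully. I expect the main obstacle to be the sharp bookkeeping needed to match the claimed width (rather than a looser $d+1$): the normalization $b_0 = 0$, together with carefully counting distinct jump locations, is what yields $m = \lfloor(d+1)/2\rfloor$ and not $\lfloor d/2 \rfloor$. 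Combining these pieces gives $\lfloor f(\mathcal{X})\rfloor \in \binom{[T]}{N}$, completing the proof.
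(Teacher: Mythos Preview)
Your proposal is correct and follows essentially the same approach as the paper: the same piecewise modular-shift construction with the first block fixed (your $b_0=0$ corresponds to the paper's identity on $[0,T)$), the same neuron count $2m-1=2\lfloor(d-1)/2\rfloor+1\le d$, and the same greedy/inductive choice of shifts via the counting bound $n_k\sum_{i<k}n_i\le N^2/4<T$.
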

From Lemma \ref{lem:compression1-free}, one can observe that a $\step+\id$ network of $1$ hidden layer consisting of $3$ hidden neuron which can map any $\mathcal Z$ such that $\lfloor\mathcal Z\rfloor\in\binom{[K]}{N}$ to $\mathcal Z^\prime$ such that $\lfloor\mathcal Z^\prime\rfloor\in\binom{[T]}{N}$ where $T:=\max\big\{\big\lceil\frac{K}{\lfloor(d+1)/2\rfloor}\big\rceil,\lfloor\frac{N^2}{4}+1\rfloor\big\}$.
Combining Lemma \ref{lem:compression1-free} and Lemma \ref{lem:transform} completes the proof of Lemma \ref{lem:compression1-rough}.

\subsection{Proof of Lemma \ref{lem:compression2-rough}}\label{sec:pflem:compression2-rough}
To prove Lemma \ref{lem:compression2-rough}, we introduce its generalization as follows. The proof of Lemma \ref{lem:compression2-free} is presented in Appendix \ref{sec:pflem:compression2-free}.
\begin{lemma}\label{lem:compression2-free}
For any $N,K,L,d_1,\dots,d_L\in\mathbb N$ such that $d_\ell\ge3$ for all $\ell$, for any $\mathcal X$ such that $\lfloor\mathcal X\rfloor\in\binom{[K]}{N}$, there exists a $\step+\id$ network $f$ of $L$ hidden layers having $d_\ell$ neurons at the $\ell$-th hidden layer such that $\lfloor f(\mathcal X)\rfloor\in\binom{[T]}{N}$ where $T:=\min\Big\{K,\max\big\{N\times\lceil{N}/{C}\rceil,\lceil{K}/{2}\rceil\big\}\Big\}$ and $C:=\lfloor\frac12+\frac12\sum_{\ell=1}^L(d_\ell-2)\rfloor$.
\end{lemma}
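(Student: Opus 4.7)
The plan is to construct an explicit $\step+\id$ network of the prescribed architecture whose output $f$ is a piecewise affine function with slope $1$ and a controlled number of ``bumps.''

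The first step is to show that an $L$-layer $\step+\id$ network with layer widths $d_1,\dots,d_L$ can implement any function of the form
\begin{align*}
f(x)=x+\sum_{j=1}^{C}a_j\bigl(\mathbf{1}[x\ge s_j]-\mathbf{1}[x\ge s_j']\bigr)
\end{align*}
with $C=\lfloor\tfrac12+\tfrac12\sum_\ell(d_\ell-2)\rfloor$ disjoint bumps. The width budget works out this way because each hidden layer uses two $\id$ neurons to carry forward $x$ and the running accumulator, while the remaining $d_\ell-2$ neurons can contribute $\lfloor(d_\ell-2)/2\rfloor$ new bumps (each bump uses two $\step$ activations, one at $s_j$ and one at $s_j'$); summing across layers and accounting for parity yields the stated $C$.

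The second step is combinatorial: choose the thresholds $s_j,s_j'$ and heights $a_j$ so that $\lfloor f(\mathcal X)\rfloor\subseteq[T]$ has $N$ distinct elements. When $T=\lceil K/2\rceil\ge N\lceil N/C\rceil$, a single halving fold $x\mapsto x-\lceil K/2\rceil\cdot\mathbf{1}[x\ge\lceil K/2\rceil]$ suffices (absorbed into one bump), essentially recovering Lemma~\ref{lem:compression1-free}. The harder case is $T=N\lceil N/C\rceil$ with $K\le 2NM$ where $M:=\lceil N/C\rceil$. Here, after a preliminary halving fold brings the range into $[0,T)$ at the cost of introducing collisions, I would partition the sorted (post-fold) elements of $\lfloor\mathcal X\rfloor$ into at most $\lceil N/M\rceil\le C$ consecutive blocks of $\le M$ integers each and assign each block a dedicated $N$-wide slot inside $[0,T)=[0,NM)$, into which it is shifted by a single bump. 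Since there are $T/N=M\ge\lceil N/M\rceil$ available slots, they accommodate all blocks disjointly, and within a block the integer shift preserves the pairwise distinctness of $\lfloor\cdot\rfloor$ values inherited from the hypothesis on $\mathcal X$.

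The main obstacle I anticipate is verifying the slot assignment when some block's $M$ integer parts span more than $N$ consecutive values in the post-fold image, so that a single integer shift cannot map them into a slot of width $N$. My plan is to choose the block boundaries adaptively---placing breakpoints only at gaps of size exceeding $N/M$ between consecutive sorted points---and argue by pigeonhole that at most $C$ such wide gaps can ever exist, keeping the total bump count within budget. Once this adaptive partition is in place, a greedy assignment of shifts (processing blocks in sorted order, picking each $a_i$ to avoid the at most $N$ already-placed integers while keeping the image in $[0,T)$) finishes the construction, and invoking Step~1 lifts it to the $L$-layer network.
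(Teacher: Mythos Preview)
Your Step~1 is correct and matches the paper: a $\step+\id$ network with two $\id$ ``carry'' neurons per layer and the remaining $d_\ell-2$ neurons used as thresholds can realize any piecewise-affine, slope-$1$ function with $2C$ breakpoints, where $C=\lfloor\tfrac12+\tfrac12\sum_\ell(d_\ell-2)\rfloor$. The paper's function class is exactly this.

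The gap is in Step~2. First, even the ``easy'' case is not handled by a bare halving fold: $x\mapsto x-\lceil K/2\rceil\cdot\mathbf 1[x\ge\lceil K/2\rceil]$ can send two points with distinct floors (say $\lfloor x_1\rfloor=0$ and $\lfloor x_2\rfloor=\lceil K/2\rceil$) to the same floor. Lemma~\ref{lem:compression1-free}, which you invoke, uses a \emph{shifted} fold $(x+b)\bmod T$ with a carefully chosen $b$, not a plain fold.

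More seriously, your adaptive partition plus slot assignment does not close. Your pigeonhole claim---``at most $C$ gaps of size exceeding $N/M$''---is false: after folding into $[0,T)=[0,NM)$ the gaps sum to at most $NM$, so the number of gaps exceeding $N/M$ is bounded only by $M^2$, not by $C$; and $M^2\le C$ would force $C\ge N^{2/3}$, which is not assumed. Moreover, because all your bumps test the \emph{original} $x$, a ``post-fold block'' is not a single $x$-interval, so it cannot be shifted by one bump. If instead you partition in the original domain $[0,K)$, then to place a block by a pure integer shift (no wrap) into $[0,T)$ while avoiding the at most $N$ already-placed floors, you need $T-\text{span}>m\cdot|\text{placed}|$; this forces a span bound on every block that cannot be met with only $C$ blocks when $K$ is close to $2NM$.

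The paper avoids both obstacles with one idea: replace ``shift into a slot'' by a \emph{modular} shift $(x+b_i)\bmod T$, which always lands in $[0,T)$ regardless of span, and establish the existence of a collision-free $b_i\in[T]$ by a counting argument---summing $|\mathcal S_{b_i}\cap\mathcal T|$ over all $b_i\in[T]$ gives at most $|\mathcal S_0|\cdot|\mathcal T|\le \lfloor T/N\rfloor(N-\lfloor T/N\rfloor)<T$. The partition into $C$ pieces is then by \emph{count} (each piece contains at most $\lceil N/C\rceil\le\lfloor T/N\rfloor$ points), not by span, and the wrap-around costs exactly the second indicator per piece you already budgeted for.
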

We choose $C=\lceil4N^2/K\rceil$ in Lemma \ref{lem:compression2-free} so that
\begin{align*}
    N\times\lceil N/C\rceil\le N\times\lceil K/(4N)\rceil\le\max\{N,K/2\}.
\end{align*}
Then, a $\step+\id$ network $f$ of one hidden layer having $\Theta(N^2/K)$ hidden neurons can map any $\mathcal X$ such that $\lfloor\mathcal X\rfloor\in\binom{[K]}{N}$ to $f(\mathcal X)$ such that $\lfloor f(\mathcal X)\rfloor=\binom{[T]}{N}$ where $T:=\max\{N,\lceil{K}/2\rceil\}$. Combining this and Lemma \ref{lem:transform} completes the proof of Lemma \ref{lem:compression2-rough}.
\subsection{Proof of Lemma \ref{lem:memorizer-rough}}\label{sec:pflem:memorizer-rough}
To prove Lemma \ref{lem:memorizer-rough}, we introduce the following lemma. We note that Lemma \ref{lem:learning} follows the construction for proving VC-dimension lower bound of $\relu$ networks \citep{bartlett19}.
The proof of Lemma \ref{lem:learning} is presented in Appendix \ref{sec:pflem:learning}.
\begin{lemma}\label{lem:learning}
For any $A,B,D,K,R\in\mathbb N$ such that $AB\ge K$, there exists a $\step+\id$ network $f_\theta$ of $2\lceil\frac{BD}{R}\rceil+2$ hidden layers and $4A+\big((2R+5)2^R+2R^2+8R+7\big)\lceil\frac{BD}{R}\rceil-R2^R-R^2+3$ parameters satisfying the following property:
For any finite set $\mathcal X\subset[0,K)$, for any $y:[K]\rightarrow[2^D]$, there exists $\theta$ such that
$f_\theta(x)=y(\lfloor x\rfloor)$ for all $x\in\mathcal X$.
\end{lemma}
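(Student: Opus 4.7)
The plan is to adapt the VC-dimension lower bound construction of \cite{bartlett19}: encode the entire label function into $A$ real-valued parameters, then use depth to iteratively decode the output. Without loss of generality I extend $y$ so that its domain is $[AB]$ and partition $[AB]$ into $A$ blocks of size $B$. For each block $a\in[A]$ let $s_a\in\{0,1\}^{BD}$ be the concatenation of the binary representations of $y(aB),\dots,y(aB+B-1)$, and store $s_a$ as the $BD$-bit binary fraction of a weight $w_a\in[0,1)$. On input $x\in[0,K)$ the network must compute $a=\lfloor x/B\rfloor$ and $b=\lfloor x\rfloor-aB$, select the weight $w_a$, and then read off the $D$-bit substring of $s_a$ starting at position $bD$.

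The first subnetwork handles input parsing and weight selection in constant depth. Using $A$ $\step$ neurons producing $\mathbf{1}[x\ge aB]$ for $a=0,\dots,A-1$, together with $\id$ neurons carrying $x$ through, I can form indicators for the events $\{\lfloor x/B\rfloor=a\}$, multiply them by the stored weights $w_a$, and sum to obtain a scalar equal to $w_a$; in parallel, $b\in[0,B)$ is extracted and forwarded. This stage contributes the $4A$ parameters in the count.

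The main body is a stack of $\lceil BD/R\rceil$ identical bit-extraction gadgets, each of depth $2$. Iteration $j$ receives a remainder $u^{(j)}\in[0,1)$ whose leading $R$ binary digits equal the bits of $s_a$ at positions $jR,\dots,jR+R-1$, the pointer $b$, and an accumulator $\alpha^{(j)}$ holding the output bits extracted so far. It forms $c_j:=\lfloor 2^R u^{(j)}\rfloor\in[0,2^R)$ as a sum of $2^R$ $\step$ neurons, outputs the new remainder $u^{(j+1)}:=2^R u^{(j)}-c_j$ via $\id$, and updates $\alpha^{(j+1)}:=\alpha^{(j)}+m_j(b,c_j)$, where the mask $m_j$ contributes only the sub-bits of $c_j$ that lie in $[bD,bD+D)\cap[jR,jR+R)$, shifted to their correct positional weight in the output word. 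A careful $\step+\id$ implementation of $m_j$, together with the pointer/counter arithmetic needed to compare $b$ against the current window, accounts for the $(2R+5)2^R+2R^2+8R+7$ parameters per iteration; the final iteration needs no subsequent shift or decoding, which recovers the $-R2^R-R^2+3$ correction term in the stated parameter count.

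The principal obstacle will be the detailed bookkeeping: designing $m_j$ so that it correctly handles every overlap pattern of $[jR,jR+R)$ with the target window $[bD,bD+D)$ (leading partial, contained, trailing partial, disjoint), and tallying the per-iteration parameter count to the exact constant promised. Because $\mathcal X$ is finite, I can perturb each $w_a$ infinitesimally to guarantee that no $\step$ argument evaluated on $\mathcal X$ ever equals zero, which upgrades the construction from approximate to exact and yields $f_\theta(x)=y(\lfloor x\rfloor)$ for every $x\in\mathcal X$, as required.
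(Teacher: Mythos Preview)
Your proposal is correct and follows essentially the same approach as the paper. Both adapt the \cite{bartlett19} bit-extraction construction: pack the label table into $A$ real parameters $w_0,\dots,w_{A-1}$ as $BD$-bit binary fractions, use a first constant-depth stage with $A$ $\step$ neurons to select $w_{\lfloor x/B\rfloor}$ and extract $b=x\bmod B$ (the paper does this via Lemma~\ref{lem:paramextract-free}, using the telescoping $\sum_i (w_i-w_{i-1})\mathbf{1}[x\ge iB]$), then run $\lceil BD/R\rceil$ depth-$2$ gadgets that each peel off $R$ leading bits via $2^R$ thresholds and copy the relevant ones into an accumulator (the paper's Lemma~\ref{lem:bitextract} and Lemma~\ref{lem:subbitextract}). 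The only cosmetic difference is that the paper checks membership bit-by-bit via indicators $\mathbf{1}[m_{i,\ell}\le x<m_{i,\ell}+1]$ rather than reasoning about window overlap patterns as you do, but this is the same computation; your final perturbation step is unnecessary since $\step$ is defined with $\ge$ and the $w_a$ are exact dyadic rationals, but it is harmless.
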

By choosing $K\leftarrow V$, $A\leftarrow \Theta(V^p)$, $B\leftarrow \Theta(V^{1-p})$, $R\leftarrow1+(p-0.5)\log V$, and $D\leftarrow\lceil\log C\rceil$ in Lemma \ref{lem:learning}, there exists a $\step+\id$ network of $O\big(\frac{V^{1-p}}{1+(p-0.5)\log V}\log C\big)$ hidden layers and $O(V^p+V^{1/2}\log C)$ parameters which can memorize any $\mathcal X\subset\mathbb R$ of size $V$ with $C$ classes satisfying $\lfloor\mathcal X\rfloor=[V]$. Combining this with Lemma \ref{lem:transform} completes the proof of Lemma \ref{lem:memorizer-rough}.

\newpage
\section{Proof of Theorem \ref{thm:criteria}}\label{sec:pfthm:criteria}
The proof of Theorem \ref{thm:criteria} have the same structure of the proof of Theorem \ref{thm:lw}. 
In particular, we construct a $\step+\id$ network and use Lemma \ref{lem:transform} as in the proofs of Theorem \ref{thm:lw} and Theorem \ref{thm:3}.

For the network construction, we divide the function of $f_\theta$ into four disjoint parts, as in Section \ref{sec:pfsketch:lw}.
The first part does not utilize a hidden layers but construct project input vectors into scalar values.
The second part corresponds to the first $L_1$ hidden layers decreases the upper bound on scalar values to $O(N^2)$.
The third part corresponds to the next $L_{K-1}-L_1$ hidden layers further decreases the upper bound to $o(N^2)$.
The last part corresponds to the rest hidden layers construct a network mapping hidden features to their labels. 

Now, we describe our construction in detail.
To begin with, let us denote a $\Delta$-separated set of inputs by $\mathcal X_1$. 
First, from Lemma \ref{lem:projection}, one can project $\mathcal X_1$ to $\mathcal X_2$ such that $\lfloor\mathcal X_2\rfloor\in\binom{[\lceil N^2\Delta\sqrt{\pi d_x/8}\rceil]}{N}$. Note that the projection step does not require to use hidden layers as it can be absorbed into the linear map before the next hidden layer.
Then, the first $L_1$ hidden layers can map $\mathcal X_2$ to $\mathcal X_3$ such that $\lfloor\mathcal X_3\rfloor\in\binom{[\lfloor N^2/4+1\rfloor]}{N}$ from Lemma \ref{lem:compression1-free} and the first condition:
\begin{align*}
    \prod_{\ell=1}^{L_1}\left\lfloor\frac{d_\ell+1}{2}\right\rfloor\ge\Delta\sqrt{2\pi d_x}
    \Rightarrow&\prod_{\ell=1}^{L_1}\left\lfloor\frac{d_\ell+1}{2}\right\rfloor\ge \frac{N^2\Delta\sqrt{\pi d_x/8}}{N^2/4}\\
    \Rightarrow&\frac{N^2}4\ge\frac{N^2\Delta\sqrt{\pi d_x/8}}{\prod_{\ell=1}^{L_1}\lfloor{(d_\ell+1)}/{2}\rfloor}\\
    \Rightarrow&\left\lfloor\frac{N^2}4+1\right\rfloor\ge\left\lceil\frac{N^2\Delta\sqrt{\pi d_x/8}}{\prod_{\ell=1}^{L_1}\lfloor{(d_\ell+1)}/{2}\rfloor}\right\rceil.
\end{align*}
Note that we also utilize Claim \ref{claim:seqceil} for the sequential application of Lemma \ref{lem:compression1-free}.
Consecutively, the next $L_{K-1}-L_1$ hidden layers can map $\mathcal X_3$ to $\mathcal X_4$ such that $\lfloor\mathcal X_4\rfloor\in\binom{[\lceil U/2^{K-2}\rceil]}{N}$ from Lemma \ref{lem:compression2-free} and the second condition:
\begin{align*}
\sum_{\ell=L_{i-1}+1}^{L_i}(d_\ell-2)\ge2^{i+3}
\Rightarrow&\frac12\sum_{\ell=L_{i-1}+1}^{L_i}(d_\ell-2)\ge2^{i+2}
\Rightarrow\Big\lfloor\frac12+\frac12\sum_{\ell=L_{i-1}+1}^{L_i}(d_\ell-2)\Big\rfloor\ge2^{i+2}\\
\Rightarrow&\frac{N}{\Big\lfloor\frac12+\frac12\sum_{\ell=L_{i-1}+1}^{L_i}(d_\ell-2)\Big\rfloor}\le\frac{N}{2^{i+2}}\\
\Rightarrow&\left\lceil\frac{N}{\Big\lfloor\frac12+\frac12\sum_{\ell=L_{i-1}+1}^{L_i}(d_\ell-2)\Big\rfloor}\right\rceil\le\frac{N}{2^{i+1}}\\
\Rightarrow&N\times\left\lceil\frac{N}{\Big\lfloor\frac12+\frac12\sum_{\ell=L_{i-1}+1}^{L_i}(d_\ell-2)\Big\rfloor}\right\rceil\le\frac{N^2/4}{2^{i-1}}\\
\Rightarrow&N\times\left\lceil\frac{N}{\Big\lfloor\frac12+\frac12\sum_{\ell=L_{i-1}+1}^{L_i}(d_\ell-2)\Big\rfloor}\right\rceil\le\left\lceil\frac{\lfloor N^2/4+1\rfloor}{2^{i-1}}\right\rceil
\end{align*}
where we use the inequality $\lceil a\rceil\le 2a$ for $a\ge1/2$ and the assumption $K\le\log N$, i.e., $N/2^{i+1}\ge1$ for $i\le\log N-1$.
Here, we also utilize Claim \ref{claim:seqceil} for the sequential application of Lemma \ref{lem:compression2-free}.

We reinterpret the Third condition as follows.
\begin{align*}
    &2^{K}\cdot\left(\sum_{\ell=L_{K-1}+1}^{L_K}(d_\ell-2)\right)\cdot\left\lfloor\frac{L-L_K}{2\lceil\log C\rceil+1}\right\rfloor\ge N^2+4\\
    \Rightarrow&\left(\sum_{\ell=L_{K-1}+1}^{L_K}(d_\ell-2)\right)\cdot\left\lfloor\frac{L-L_K}{2\lceil\log C\rceil+1}\right\rfloor\ge \frac{N^2/4+1}{2^{K-2}}\\
    \Rightarrow&\left(\sum_{\ell=L_{K-1}+1}^{L_K}(d_\ell-2)\right)\cdot\left\lfloor\frac{L-L_K}{2\lceil\log C\rceil+1}\right\rfloor\ge \left\lceil\frac{\lfloor N^2/4+1\rfloor}{2^{K-2}}\right\rceil
\end{align*}
Finally, from the following lemma and the above inequality, the rest hidden layers can map $\mathcal X_4$ to their corresponding labels (by choosing $L^\prime:=L_3$). The proof of Lemma \ref{lem:learning-free2} is presented in Appendix \ref{sec:pflem:learning-free2}. 

\begin{lemma}\label{lem:learning-free2}
For $D,K,L,d_1,\dots,d_L\in\mathbb N$ such that $d_\ell\ge3$ for all $\ell$, 
suppose that there exist $0<L^\prime<L$ satisfying that
\begin{align*}
    &\left(\sum_{\ell=1}^{L^\prime}(d_\ell-2)\right)\cdot\left\lfloor\frac{L-L^\prime}{2D+1}\right\rfloor\ge K.
\end{align*}
Then, there exists a $\step+\id$ network $f_\theta$ of $L$ hidden layers having $d_\ell$ hidden neurons at the $\ell$-th hidden layer such that for any finite $\mathcal X\subset[0,K)$, for any $y:[K]\rightarrow[2^D]$, there exists $\theta$ satisfying $f_\theta(x)=y(\lfloor x\rfloor)$ for all $x\in\mathcal X$.
\end{lemma}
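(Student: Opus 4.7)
The plan is to adapt the construction of Lemma \ref{lem:learning} so that it fits into a network of prescribed shape $d_1, \ldots, d_L \ge 3$. Set $A := \sum_{\ell=1}^{L'}(d_\ell - 2)$ and $B := \lfloor (L-L')/(2D+1)\rfloor$, so that the hypothesis becomes exactly $AB \ge K$, mirroring Lemma \ref{lem:learning} with $R=1$. The task is then to show that the $A$ address neurons and $B$ lookup iterations of the original construction can be redistributed to fit the width profile $d_1, \ldots, d_L$ while respecting the $d_\ell \ge 3$ constraint.

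First, I would partition $[K]$ into $A$ consecutive blocks, each of size at most $B$. In the first $L'$ layers (the address phase), I would reserve two neurons per layer as bypass channels carrying the input $x$ and a running output accumulator initialized to zero, and use the remaining $d_\ell - 2$ neurons to compute block-indicator features of the form $\step(x - \theta_i)$ for a suitable collection of thresholds. Since each indicator depends only on the bypassed $x$, distributing the $A$ indicators across $L'$ layers of possibly varying widths is equivalent to concentrating them in a single wide layer. In the remaining $L - L'$ layers (the lookup phase), I would run $B$ iterations indexed $j = 0, \ldots, B-1$, each using $2D + 1$ consecutive layers: one layer builds a gate $\step(x - c_j)$ locating position $j$ within the active block, and $2D$ further layers combine this gate with the appropriate address indicators to inject the $D$ bits of $y(\lfloor x\rfloor)$ (weighted by powers of two) into the accumulator. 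The $+1$ relative to the $2D$-per-iteration count of Lemma \ref{lem:learning} is absorbed by an extra layer that resets the working channel between iterations, a step forced by the width-$3$ lower bound. After all $B$ iterations, the accumulator equals $y(\lfloor x \rfloor)$ and is read out as the network output.

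The main obstacle is the bookkeeping under the width-$3$ constraint: one must verify that three channels per layer suffice throughout (the input $x$, the accumulator, and one temporary), that the address-phase indicator values can be recomputed on the fly during the lookup phase from the bypassed $x$ without requiring long-term channel budget, and that the thresholds $\theta_i$ and gate centers $c_j$ can be chosen so that the blocks partition $[K]$ correctly and every $x \in \mathcal X$ activates the intended gate in the intended iteration. These arguments largely mirror those in the proof of Lemma \ref{lem:learning}; the essentially new content is verifying that the address capacity $A$ can be spread across $L'$ variable-width layers rather than concentrated in a single wide layer, which is precisely what the variable-width hypothesis of Lemma \ref{lem:learning-free2} permits.
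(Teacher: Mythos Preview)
Your high-level split into an address phase over the first $L'$ layers and a lookup phase over the remaining $(2D+1)B$ layers, with $A=\sum_{\ell\le L'}(d_\ell-2)$ and $B=\lfloor (L-L')/(2D+1)\rfloor$ so that $AB\ge K$, matches the paper exactly. The paper then invokes Lemma~\ref{lem:paramextract-free} for the first phase and a width-$3$ bit-extraction lemma (Lemma~\ref{lem:bitextract-3}) for the second.

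However, your description of what the two phases actually compute has a genuine gap. The crucial device in the paper is to encode all $B$ labels of block $a$ into the binary expansion of a single real number $w_a=\sum_{c\in[B]} y(a,c)\,2^{-(c+1)D}$. The first phase does not merely compute ``block-indicator features'' alongside a zero accumulator; it uses those $A$ indicators to accumulate $w_{\lfloor x/B\rfloor}$ into one channel while simultaneously reducing $x$ to $x\bmod B$ in the other. Thus only two scalars, $(w_a,\,x\bmod B)$, need to be carried into the width-$3$ lookup phase, and the lookup phase never revisits block indicators at all: it peels one bit of $w_a$ per pair of layers and conditionally stores it based on the current value of $x\bmod B$. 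Your plan instead keeps the accumulator at zero through the address phase, loses the indicators (since they cannot all be carried forward at width $3$), and then proposes to ``recompute the address-phase indicator values on the fly'' during lookup. At width $3$ you can recompute at most one such indicator per layer, so determining which of the $A$ blocks is active for each of the $BD$ bit-position combinations would cost order $A\cdot B\cdot D$ layers rather than $(2D+1)B$. Without the $w_a$ encoding (or something equivalent that collapses the block identity into a single channel), the layer budget you set does not suffice.
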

\newpage

\section{Proofs of technical lemmas}\label{sec:mempftechlems}
\subsection{Proof of Lemma \ref{lem:projection}}\label{sec:pflem:projection}
We focus on proving the lower bound; the upper bound holds as $\|u\|_2 = 1$. We also let $N \ge 2$ and $d_x\ge2$, as the lemma trivially holds for a singleton $\mathcal{X}$ or $d_x=1$.
We prove the lemma by showing that for any vector $v \in \mathbb{R}^{d_x}$, a \textit{random} unit vector $u \in \mathbb{R}^{d_x}$ uniformly randomly drawn from the hypersphere $\mathbb{S}^{d_x-1}$ satisfies
\begin{align}
    \mathbb{P}\left(|u^\top v| < \frac{\|v\|_2}{N^2}\sqrt{\frac{8}{\pi d_x}}\right)
    &<\frac{2}{N^2}.\label{eq:projectionprbound}
\end{align}
Once we show \eqref{eq:projectionprbound}, the \textit{existence} of a unit vector $u$ satisfying the lower bound of Lemma~\ref{lem:projection} follows. To see this, define $\mathcal{V} := \{x - x'~:~\{x,x'\} \in \binom{\mathcal{X}}{2}, x \le x'\}$ for some total order $\le$ on $\mathcal{X}$. Then, the union bound implies
\begin{align*}
\mathbb{P}\left(\bigcup_{v\in\mathcal V}\left\{|u^\top v|<\frac{\|v\|_2}{N^2}\sqrt{\frac{8}{\pi d_x}}\right\}\right)\le\sum_{v\in\mathcal V}\mathbb{P}\left(|u^\top v|<\frac{\|v\|_2}{N^2}\sqrt{\frac{8}{\pi d_x}}\right)<\frac{N(N-1)}{2}\times\frac{2}{N^2}<1,
\end{align*}
and thus there exists at least one unit vector $u$ that satisfies the lower bound.

To show \eqref{eq:projectionprbound}, we begin by noting that 
\begin{align*}
    \mathbb{P}\left(|u^\top v|<\frac{\|v\|_2}{N^2}\sqrt{\frac{8}{\pi d_x}}\right)&=\mathbb{P}\left(|u_1|<\frac{1}{N^2}\sqrt{\frac{8}{\pi d_x}}\right)
\end{align*}
holds for any $v \in \mathbb{R}^{d_x}$ by the symmetry of the uniform distribution. Now we proceed as
\begin{align*}
    \mathbb{P}\left(|u_1|<\frac{1}{N^2}\sqrt{\frac{8}{\pi d_x}}\right)&=2\times\mathbb{P}\left(0<u_1<\frac{1}{N^2}\sqrt{\frac{8}{\pi d_x}}\right)\\
    &=\frac2{\mathtt{Area}(\mathbb{S}^{d_x-1})}\times\int_{\arccos\left(\frac{1}{N^2}\sqrt{\frac{8}{\pi d_x}}\right)}^{\frac\pi2}\mathtt{Area}(\mathbb{S}^{d_x-2})\cdot(\sin\phi)^{d_x-2}\textup{ d}\phi\\
    &=2\times\frac{\mathtt{Area}(\mathbb{S}^{d_x-2})}{\mathtt{Area}(\mathbb{S}^{d_x-1})}\times\int_{\arccos\left(\frac{1}{N^2}\sqrt{\frac{8}{\pi d_x}}\right)}^{\frac\pi2}(\sin\phi)^{d_x-2} \textup{ d}\phi\\
    &= \frac{2}{\sqrt{\pi}}\times\frac{(d_x-1)\Gamma(\frac{d_x}{2}+1)}{d_x\Gamma(\frac{d_x}{2}+\frac{1}{2})}\times\int_{\arccos\left(\frac{1}{N^2}\sqrt{\frac{8}{\pi d_x}}\right)}^{\frac\pi2}(\sin\phi)^{d_x-2} \textup{ d}\phi\\
    &< \sqrt{\frac{2}{\pi}}\times\frac{(d_x-1)\sqrt{d_x+2}}{d_x}\times\int_{\arccos\left(\frac{1}{N^2}\sqrt{\frac{8}{\pi d_x}}\right)}^{\frac\pi2} 1 \textup{ d}\phi\\
    &\le \sqrt{\frac{2d_x}{\pi}}\times\int_{\arccos\left(\frac{1}{N^2}\sqrt{\frac{8}{\pi d_x}}\right)}^{\frac\pi2}1 \textup{ d}\phi\\
    &=\sqrt{\frac{2d_x}{\pi}}\left(\frac\pi2-\arccos\left(\frac{1}{N^2}\sqrt{\frac{8}{\pi d_x}}\right)\right)\\
    &=\sqrt{\frac{2d_x}{\pi}}\arcsin\left(\frac{1}{N^2}\sqrt{\frac{8}{\pi d_x}}\right)\\
    &\le\sqrt{\frac{2d_x}{\pi}}\times\frac{\pi}{2}\times \frac{1}{N^2}\sqrt{\frac{8}{\pi d_x}}=\frac2{N^2}
\end{align*}
where $\mathtt{Area}(\cdot)$ denotes the surface area of the object, i.e., $\mathtt{Area}(\mathbb S^{d_x-1})=d_x\pi^{\frac{d_x}2}/\Gamma(\frac{d_x}2+1)$. Here, the first inequality follows from the Gautschi's inequality (see Lemma~\ref{lem:gautschi}) and $\sin \pi \le 1$. The second inequality follows from the fact that $\frac{(t-1)\sqrt{t+2}}{t} \le \sqrt{t}$ for $t \ge 1$. The third inequality follows from the fact that $\phi \le \frac{\pi}{2}\sin \phi$ for any $0 \le \phi \le \frac{\pi}{2}$. This completes the proof of Lemma~\ref{lem:projection}.

\begin{lemma}[Gautschi's inequality \citep{gautschi59}]\label{lem:gautschi}
For any $x>0, s\in(0,1)$,
\begin{align*}
    x^{1-s}<\frac{\Gamma(x+1)}{\Gamma(x+s)}<(x+1)^{1-s}.
\end{align*}
\end{lemma}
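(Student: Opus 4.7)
The plan is to derive both inequalities from the \emph{strict log-convexity} of the Gamma function, combined with the recursion $\Gamma(x+1)=x\Gamma(x)$. Writing a target point as a non-trivial convex combination of two neighbouring points at which $\Gamma$ is related by the recursion will convert the convexity inequality directly into the desired multiplicative bound, with the strict inequalities coming for free from strict convexity of $\log\Gamma$.

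First I would establish that $\log\Gamma$ is strictly convex on $(0,\infty)$. The cleanest route is H\"older's inequality applied to the Euler integral $\Gamma(x)=\int_0^\infty t^{x-1}e^{-t}\,dt$: for $a\ne b$ and $\lambda\in(0,1)$, write the integrand as $(t^{a-1}e^{-t})^{\lambda}(t^{b-1}e^{-t})^{1-\lambda}$ and apply H\"older with exponents $1/\lambda,\,1/(1-\lambda)$ to get $\Gamma(\lambda a+(1-\lambda)b)\le\Gamma(a)^{\lambda}\Gamma(b)^{1-\lambda}$. The equality case of H\"older requires $t^{a-1}e^{-t}$ and $t^{b-1}e^{-t}$ to be proportional, which fails whenever $a\ne b$, so the inequality is strict.

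Next, for the lower bound, I would write $x+s=(1-s)\cdot x+s\cdot(x+1)$, a non-trivial convex combination since $s\in(0,1)$. Strict log-convexity gives $\Gamma(x+s)<\Gamma(x)^{1-s}\Gamma(x+1)^{s}$; substituting $\Gamma(x)=\Gamma(x+1)/x$ collapses the right side to $\Gamma(x+1)/x^{1-s}$, and rearranging yields $\Gamma(x+1)/\Gamma(x+s)>x^{1-s}$. For the upper bound I would use the convex combination $x+1=s\cdot(x+s)+(1-s)\cdot(x+1+s)$ (one checks $s(x+s)+(1-s)(x+1+s)=x+s^{2}+1-s^{2}=x+1$), apply strict log-convexity to get $\Gamma(x+1)<\Gamma(x+s)^{s}\Gamma(x+1+s)^{1-s}$, and then invoke $\Gamma(x+1+s)=(x+s)\Gamma(x+s)$ to simplify the right side to $(x+s)^{1-s}\Gamma(x+s)$. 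Dividing gives $\Gamma(x+1)/\Gamma(x+s)<(x+s)^{1-s}<(x+1)^{1-s}$, the last step being immediate from $1-s>0$ and $x+s<x+1$.

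The only delicate point I foresee is securing \emph{strict} inequality throughout; plain log-convexity only gives $\le$, and one must invoke the equality case of H\"older (or, alternatively, differentiate the Weierstrass product twice to obtain $(\log\Gamma)''(x)=\sum_{n=0}^{\infty}(x+n)^{-2}>0$, which is an independent proof of strict log-convexity). Apart from this, the argument is entirely mechanical: once log-convexity is in hand, the rest is just choosing the two convex combinations above so that the $\Gamma$ recursion telescopes into exactly the stated bounds.
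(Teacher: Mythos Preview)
Your proof is correct. The log-convexity argument is clean and the two convex combinations you chose are exactly the right ones; the strictness is properly justified via the equality case of H\"older (or the trigamma computation you mention as an alternative). Your upper bound in fact yields the slightly sharper $(x+s)^{1-s}$ before relaxing to $(x+1)^{1-s}$, which is a nice bonus.

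There is nothing to compare against, however: the paper does not prove this lemma at all. It is stated with a citation to \citet{gautschi59} and used as a black box in the proof of Lemma~\ref{lem:projection} (specifically to bound $\Gamma(\tfrac{d_x}{2}+1)/\Gamma(\tfrac{d_x}{2}+\tfrac12)$ from above). So your proposal supplies a self-contained argument where the paper simply appeals to the literature.
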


\newpage
\subsection{Proof of Lemma \ref{lem:compression1-free}}\label{sec:pflem:compression1-free}
In this proof, we assume that $K>N^2/4$ and $T=\big\lceil\frac{K}{\lfloor(d+1)/2\rfloor}\big\rceil$ since implementing the identity function is sufficient otherwise.
To begin with, we first define a network $f_b(x):[0,K)\rightarrow[0,T)$ for $b=(b_i)_{i=1}^{\lfloor(d-1)/2\rfloor}\in[T]^{\lfloor(d-1)/2\rfloor}$ as
\begin{align}
    &f_b(x):=\begin{cases}x&\text{if}~x<T\\
    (x+b_1)\bmod T&\text{if}~T\le x<2T\\
    (x+b_2)\bmod T&\text{if}~2T\le x<3T\\
    &\vdots\\
    (x+b_{\lfloor(d-1)/2\rfloor})\bmod T &\text{if}~\lfloor\tfrac{d-1}{2}\rfloor T\le x
    \end{cases}\notag\\
    &=x-T\times\mathbf{1}[x\ge T]+\sum_{i=1}^{\lfloor(d-1)/2\rfloor}\left(\bigg(b_i-\sum_{j=1}^{i-1}b_j\bigg)\times\mathbf{1}[x\ge iT]-T\times\mathbf{1}[x+b_i\ge (i+1)T]\right).\label{eq:compression1-free}
\end{align}
One can easily observe that $f_b$ can be implemented by a  $\step+\id$ network of $1$ hidden layer and width $d$ as $T\times\mathbf{1}[x\ge T]$ in \eqref{eq:compression1-free} can be absorbed into $\big(b_i-\sum_{j=1}^{i-1}b_j\big)\times\mathbf{1}[x\ge iT]$ in \eqref{eq:compression1-free} for $i=1$.

Now, we show that if $T>{N^2}/{4}$, then there exist $b\in[T]^{\lfloor(d-1)/2\rfloor}$ such that $\big|\lfloor f_{b}(\mathcal X)\rfloor \big|=N$ to complete the proof. 
Our proof utilizes the mathematical induction on $i$: If there exist $b_1,\dots,b_{i-1}\in[T]$ such that 
\begin{align}
\lfloor f_b(\{x\in\mathcal X:x< iT\})\rfloor=\binom{[T]}{\big|\lfloor\{x\in\mathcal X:x< iT\}\rfloor\big|},\label{eq:compression1-free-ih}
\end{align}
then there exists $b_i\in[T]$ such that \begin{align}
\lfloor f_b(\{x\in\mathcal X:x< (i+1)T\})\rfloor=\binom{[T]}{\big|\lfloor\{x\in\mathcal X:x< (i+1)T\}\rfloor\big|}.\label{eq:compression1-free-ir}
\end{align}
Here, one can observe that the statement trivially holds for the base case, i.e., for $\{x\in\mathcal X:x<T\}$.
Now, using the induction hypothesis, suppose that there exist $b_1,\dots,b_{i-1}\in[T]$ satisfying \eqref{eq:compression1-free-ih}. Now, we prove that there exists $b_i\in[T]$ such that $\mathcal S_{b_i}:=\lfloor f_b(\{x\in\mathcal X:iT\le x<(i+1)T\})\rfloor$ does not intersect with $\mathcal T:=\lfloor f_b(\{x\in\mathcal X:x< iT\})\rfloor$, i.e, \eqref{eq:compression1-free-ir} holds.
Consider the following inequality:
\begin{align*}
    \sum_{b_i\in [T]}|\mathcal S_{b_i}\cap\mathcal T|
    &=|\mathcal S_{b_i}|\times|\mathcal T|\le\frac{N^2}{4}
\end{align*}
where the equality follows from the fact that for each $x\in\{x\in\mathcal X:iT\le x<(i+1)T\}$, there exists exactly $|\mathcal T|$ values of $b_i$ so that $\lfloor x\bmod T\rfloor\in\mathcal T$.
However, since the number of possible choices of $b_i$ is $T$, if $T>{N^2}/{4}$, then there exists $b_i\in[T]$ such that $\mathcal S_{b_i}\cap\mathcal T=\emptyset$, i.e., \eqref{eq:compression1-free-ir} holds.
This completes the proof of Lemma \ref{lem:compression1-free}.

\newpage
\subsection{Proof of Lemma \ref{lem:compression2-free}}\label{sec:pflem:compression2-free}
In this proof, we assume that $K>N$ and $T=\max\big\{N\times\lceil{N}/{C}\rceil,\lceil{K}/{2}\rceil\big\}$ since implementing the identity function is sufficient otherwise.
The proof of Lemma \ref{lem:compression2-free} is similar to that of Lemma \ref{lem:compression1-free}.
To begin with, we first define a network $f_b(x):[0,K)\rightarrow[0,T)$ for 
$b=(b_i)_{i=1}^{C}\in[T]^{C}$ and $T=:M_1<M_2<\dots<M_{C+1}:=K$ so that $|\{x\in\mathcal X:M_i\le x<M_{i+1}\}|\le\lceil\frac{N}{C}\rceil\le\lfloor\tfrac{T}{N}\rfloor$ for all $i$ as
\begin{align}
    &f_b(x):=\begin{cases}x&\text{if}~x<T\\
    (x+b_1)\bmod T&\text{if}~M_1=T\le x<M_2\\
    (x+b_2)\bmod T&\text{if}~M_2\le x<M_3\\
    &\vdots\\
    (x+b_{C})\bmod T&\text{if}~M_{C}\le x
    \end{cases}\notag\\
    &=x-2T\times\mathbf{1}[x\ge T]\notag\\
    &\qquad+\sum_{i=1}^{C}\Bigg(\bigg(T+b_i-\sum_{j=1}^{i-1}b_j\bigg)\times\mathbf{1}[x\ge  M_i]-T\times\mathbf{1}\big[x\ge \min\{2T-b_i,M_{i+1}\}\big]\Bigg)\label{eq:compression2-free}
\end{align}
where \eqref{eq:compression2-free} holds as $T\ge\lceil\frac{K}2\rceil$.
Here, one can easily implement $f_b$ by a $\step+\id$ network of $L$ hidden layer and $d_\ell$ neurons at the $\ell$-th hidden layer by utilizing one neuron for storing the input $x$, another one neuron for storing the temporary output, and other neurons implement indicator functions at each layer. This is because one do not require to store $x$ in the last hidden layer and there exists $2C$ indicator functions to implement ($T\times\mathbf{1}[x\ge T]$ in \eqref{eq:compression2-free} can be absorbed into $\big(T+b_i-\sum_{j=1}^{i-1}b_j\big)\times\mathbf{1}[x\ge M_i]$ in \eqref{eq:compression2-free} for $i=1$).

Now, we show that if $T\ge N$, then there exist $b\in[T]^{C}$ such that $\big|\lfloor f_{b}(\mathcal X)\rfloor \big|=N$ to completes the proof. 
Our proof utilizes the mathematical induction on $i$: If there exist $b_1,\dots,b_{i-1}\in[T]$ such that 
\begin{align}
\lfloor f_b(\{x\in\mathcal X:x< M_i\})\rfloor=\binom{[T]}{\big|\lfloor\{x\in\mathcal X:x< M_i\}\rfloor\big|},\label{eq:compression2-free-ih}
\end{align}
then there exists $b_i\in[T]$ such that \begin{align}
\lfloor f_b(\{x\in\mathcal X:x< M_{i+1}\})\rfloor=\binom{[T]}{\big|\lfloor\{x\in\mathcal X:x< M_{i+1}\}\rfloor\big|}.\label{eq:compression2-free-ir}
\end{align}
Here, one can observe that the statement trivially holds for the base case, i.e., for $\{x\in\mathcal X:x<T\}$.
From the induction hypothesis, suppose that there exist $b_1,\dots,b_{i-1}\in[T]$ satisfying \eqref{eq:compression2-free-ih}. Now, we prove that there exists $b_i\in[T]$ such that $\mathcal S_{b_i}:=\lfloor f_b(\{x\in\mathcal X:M_i\le x<M_{i+1}\})\rfloor$ does not intersect with $\mathcal T:=\lfloor f_b(\{x\in\mathcal X:x\le M_i\})\rfloor$, i.e., \eqref{eq:compression2-free-ir} holds.
Consider the following inequality:
\begin{align*}
    \sum_{b_i\in [T]}|\mathcal S_{b_i}\cap\mathcal T|
    &\le \lfloor\tfrac{T}{N}\rfloor\times \big(N-\lfloor\tfrac{T}{N}\rfloor\big)<T
\end{align*}
where the first inequality follows from the fact that $|\mathcal S_{b_i}|\le \lfloor\tfrac{T}{N}\rfloor$, $|\mathcal T|\le \big(N-\lfloor\tfrac{T}{N}\rfloor\big)$, and for each $x\in\{x\in\mathcal X:M_i\le x<M_{i+1}\}$, there exists exactly $|\mathcal T|$ values of $b_i$ so that $\lfloor x\bmod T\rfloor\in\mathcal T$.
However, since the number of possible choices of $b_i$ is $T$,  there exists $b_i\in[T]$ such that $\mathcal S_{b_i}\cap\mathcal T=\emptyset$, i.e, \eqref{eq:compression2-free-ir} holds.
This completes the proof of Lemma \ref{lem:compression2-free}.

\newpage
\subsection{Proof of Lemma \ref{lem:learning}}\label{sec:pflem:learning}
In this proof, we explicitly construct $f_\theta$ satisfying the desired property stated in Lemma \ref{lem:learning}.
To begin with, we describe the high-level idea of the construction.
First, we construct a map $g:x\mapsto\big(\big\lfloor\frac{x}{B}\big\rfloor,x\bmod B\big)$ to transform an input $x\in[0,K)$ to a pair $(a,b)$ such that $a\in[A]$ and $\lfloor b\rfloor\in[B]$. Here, we give labels to the pair $(a,b)$ corresponding to the input $x$ as $y(a,\lfloor b\rfloor):=y(\lfloor x\rfloor)$. Note that this label is well-defined as if $\lfloor x_1\rfloor\ne\lfloor x_2\rfloor$, then $\lfloor g(x_1)\rfloor\ne \lfloor g(x_2)\rfloor$.
Now, we construct parameters $w_0,\dots,w_{A-1}$ containing the label information of $\mathcal X$ as
\begin{align}
    w_{a}:=\sum_{c\in[B]}y(a,c)\times 2^{-(c+1)D},
\end{align}
i.e., from the $(\lfloor b\rfloor\cdot D+1)$-th bit to the $(\lfloor b\rfloor\cdot D+D)$-th bit of the $a$-th parameter ($w_a$) contains the label information of $(a,\lfloor b\rfloor)$. 
Under this construction, we recover the label of $x\in\mathcal X$ by first mapping $x$ to the pair $(a,b)$ and extracting the parameter $w_a.$
Then, we extract bits from the $(\lfloor b\rfloor\cdot D+1)$-th bit to the $(\lfloor b\rfloor\cdot D+D)$-th bit of $w_a$ to recover the label $y(x)$.

Now, we explicitly construct the network mapping $x\in\mathcal X$ to $y(x)$. To this end, we introduce the following lemma. The proof of Lemma \ref{lem:paramextract-free} is presented in Appendix \ref{sec:pflem:paramextract-free}.
\begin{lemma}\label{lem:paramextract-free}
Suppose that $A, B, K, L, d_1, \dots d_L\in\mathbb N$ satisfy $AB\ge K$, $d_\ell\ge2$ for all $\ell$, and $\sum_{\ell=1}^L(d_\ell-2)\ge A$. Then, for any finite set $\mathcal X\subset[0,K)$ and $w_0,\dots,w_{A-1}\in\mathbb{R}$, there exists a $\step+\id$ network $f$ of $L$ layers and $d_\ell$ neurons at the $\ell$-th layer such that
$f(x)=(w_{\lfloor x/B\rfloor}, x\bmod B)$ for all $x\in\mathcal X$.
\end{lemma}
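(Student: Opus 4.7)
The plan is to propagate two scalar accumulators through the $L$ hidden layers using the two $\id$ neurons reserved at every layer, while the remaining $d_\ell - 2$ neurons are $\step$ neurons that evaluate new threshold indicators $\mathbf{1}[x \ge iB]$. Both outputs of the network can then be read off as affine combinations of these accumulators and of the step indicators produced in the final hidden layer.

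First, I would note the two decompositions
\[
w_{\lfloor x/B\rfloor} = w_0 + \sum_{i=1}^{A-1}(w_i - w_{i-1})\,\mathbf{1}[x \ge iB], \qquad x \bmod B = x - B \sum_{i=1}^{A-1}\mathbf{1}[x \ge iB],
\]
where the second identity uses $x < K \le AB$. Both outputs are therefore affine combinations of $x$ and the $A-1$ indicators $\mathbf{1}[x \ge iB]$. Since $\sum_\ell(d_\ell - 2) \ge A$, there are enough $\step$ slots distributed across the layers to compute all $A-1$ of them, with one neuron to spare.

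The key idea that lets us avoid carrying $x$ as a \emph{third} propagated quantity is the following. Process the thresholds in increasing order, and let $m_\ell$ denote the number of indicators that have been evaluated by the end of layer $\ell$. If we maintain the $\id$ accumulator $A_\ell := x - B\sum_{i=1}^{m_\ell}\mathbf{1}[x\ge iB]$, then for every $j \ge 1$
\[
\mathbf{1}[x \ge (m_\ell + j)B] \;=\; \mathbf{1}[A_\ell \ge jB].
\]
A short case analysis verifies this: if $\lfloor x/B\rfloor < m_\ell$ then $A_\ell = x\bmod B < B$ and both sides vanish for $j \ge 1$; otherwise the partial sum saturates at $m_\ell$ and $A_\ell = x - m_\ell B$, which makes the identity direct. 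Thus the step neurons at subsequent layers can take $A_\ell$ (rather than $x$) as their input. In parallel, a second $\id$ accumulator $A^{(1)}_\ell := \sum_{i=1}^{m_\ell}(w_i - w_{i-1})\mathbf{1}[x\ge iB]$ is maintained via the affine updates.

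The construction is then bookkeeping. At hidden layer $\ell$, two neurons are $\id$ and output the updated $A_\ell$ and $A^{(1)}_\ell$ (the updates being performed by $t_{\ell-1}$ from the step outputs of the previous layer), while $d_\ell - 2$ neurons are $\step$ and output $\mathbf{1}[A_{\ell-1} \ge jB]$ for $j = 1, \dots, d_\ell - 2$. The final affine map $t_L$ assembles the target pair $(w_{\lfloor x/B\rfloor},\, x \bmod B)$ by combining $A^{(1)}_L$, $A_L$, the layer-$L$ step outputs, and the constant $w_0$, using the two decompositions from the first paragraph. The main technical point is the displayed identity above; once that is verified, the remaining work is just confirming that the affine updates fit within the $d_\ell$-dimensional layer and that the cumulative index $m_L$ reaches $A-1$ (which is where the slack in $\sum_\ell(d_\ell-2) \ge A$ is used).
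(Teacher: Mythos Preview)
Your proposal is correct and takes essentially the same approach as the paper: two $\id$ accumulators (one building up $w_{\lfloor x/B\rfloor}$ via telescoping differences, one carrying the progressively decremented $x$) together with $d_\ell-2$ step neurons per layer computing successive threshold indicators. The paper's proof is terser---it just writes down the layer map $f_\ell(w,x)=(w+\sum_i(w_{c_\ell+i}-w_{c_\ell+i-1})\mathbf{1}[x\ge iB],\,x-\sum_iB\,\mathbf{1}[x\ge iB])$ without explicitly verifying your displayed identity $\mathbf{1}[x\ge(m_\ell+j)B]=\mathbf{1}[A_\ell\ge jB]$, which is exactly what justifies feeding the decremented accumulator (rather than the original $x$) into later thresholds.
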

From Lemma \ref{lem:paramextract-free}, a $\step+\id$ network of $1$ hidden layer consisting of $A+2$ hidden neurons can map $x\in\mathcal X$ to $(w_{\lfloor x/B\rfloor}, x\bmod B)$. Note that this network requires overall $4A+10$ parameters ($3A+6$ edges and $A+4$ biases).

Finally, we introduce the following lemma for extracting from the $(\lfloor b\rfloor\cdot D+1)$-th bit to the $(\lfloor b\rfloor\cdot D+D)$-th bit of $w_{a}(=w_{\lfloor x/B\rfloor})$. The proof of Lemma \ref{lem:bitextract} is presented in Appendix~\ref{sec:pflem:bitextract}.
\begin{lemma}\label{lem:bitextract}
For any $D,B,R\in\mathbb N$, for any finite set $\mathcal X\subset[0,B)$, there exists a $\step+\id$ network $f$ of $2\lceil\frac{BD}{R}\rceil$ hidden layers and $\big((2R+5)2^R+2R^2+8R+7\big)\lceil\frac{BD}{R}\rceil-R2^R-R^2+3$ parameters satisfying the following property: For any $w=\sum_{i=1}^{BD} u_i\times2^{-i}$ such that $u_i\in\{0,1\}$, $f(x,w)=\sum_{i=1}^Du_{\lfloor x\rfloor\cdot D+i}\times2^{D-i}$ for all $x\in\mathcal X$.
\end{lemma}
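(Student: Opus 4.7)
The plan is to build the network explicitly in $M \defeq \lceil BD/R \rceil$ iterations of $2$ hidden layers each, yielding the required $2M$ layers. The guiding observation is that if $w = \sum_{i=1}^{BD} u_i 2^{-i} \in [0,1)$, the leading $R$ bits of any normalized residual can be read off by a single layer of $2^R - 1$ step neurons via the identity $\lfloor 2^R w \rfloor = \sum_{j=1}^{2^R-1}\step(2^R w - j)$; after extracting this chunk-integer $n$, the update $2^R w - n \in [0,1)$ leaves a residual whose leading bits are the next $R$ bits of $w$. Iterating this $M$ times scans all of $w$, and the input $x$ is used to decide which of those bits are accumulated into the output.

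At the start of iteration $k$ I would maintain three scalars as hidden activations: a residual $w_k \in [0,1)$ whose leading bits are $u_{kR+1}, u_{kR+2}, \ldots$, the original input $x$, and an output accumulator $o_k$ (with $o_0 = 0$). The first of the two layers at iteration $k$ computes (a) the $2^R - 1$ step indicators needed to extract $n_k = \lfloor 2^R w_k \rfloor$, and (b) a small number of step indicators on $x$ that detect which values of $\lfloor x\rfloor$ make chunk $k$ intersect the target window $[\lfloor x\rfloor D + 1,\lfloor x\rfloor D + D]$; since the window has length $D$ and the chunk has length $R$, only $O(R/D+1)$ consecutive values of $\lfloor x\rfloor$ are relevant for each $k$. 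The second layer takes $\id$-linear combinations that produce the new residual $w_{k+1} = 2^R w_k - n_k$, carry $x$ forward, and add the correctly masked and bit-shifted portion of $n_k$ to $o_k$ to yield $o_{k+1}$. After $M$ iterations, $o_M = \sum_{i=1}^{D} u_{\lfloor x\rfloor D+i}\,2^{D-i}$, which is the desired output.

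The most delicate structural point is handling partial-overlap chunks: when the target window starts or ends inside chunk $k$, only a prefix or suffix of the $R$ bits of $n_k$ should contribute, with a bit-shift depending on $\lfloor x\rfloor D \bmod R$. I would resolve this by using the $x$-indicators from (b) as gates selecting the appropriate mask-and-shift, observing that at most two values of $\lfloor x\rfloor$ per chunk can give partial overlap (one at each end of the window) while the remaining relevant values give full overlap. The main obstacle will be matching the precise parameter count $((2R+5)2^R + 2R^2 + 8R + 7)M - R2^R - R^2 + 3$: this requires a layer-by-layer tally in which the $(2R+5)2^R$ term collects the input weight, bias, and output-edge costs of the $2^R$ chunk-extraction step neurons, the $2R^2 + 8R + 7$ term absorbs the $x$-comparison neurons together with the $\id$-pass-throughs for the residual and accumulator, and the correction $-R2^R - R^2 + 3$ reflects that the final iteration need not produce a fresh residual $w_{M+1}$ (saving the outgoing edges of those step neurons) and that the initial iteration has a trivial accumulator. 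I would verify this accounting by an explicit per-layer count, which is the step I expect to be the most tedious and error-prone part of the proof.
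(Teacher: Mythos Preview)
Your high-level architecture---$M=\lceil BD/R\rceil$ iterations of two hidden layers each, with $2^R-1$ step neurons per iteration to read off an $R$-bit chunk of $w$---matches the paper exactly. The gap is in your second layer. You describe it as taking ``$\id$-linear combinations'' to update the accumulator by ``the correctly masked and bit-shifted portion of $n_k$'', but that update is a sum of products of first-layer outputs (each contributing bit times its $x$-gate), which cannot be realized by an affine map alone. The paper's second layer is \emph{not} linear: it spends $R$ additional $\step$ neurons computing the ANDs $\mathbf 1[u_{(\ell-1)R+i}+\mathbf 1[m_{i,\ell}\le x<m_{i,\ell}+1]\ge 2]$, and only then adds the results (with appropriate powers of $2$) into the running value. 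This is also why the second hidden layer in the paper has width $R+2$ rather than just the two or three $\id$ pass-throughs your description implies; without those $R$ step neurons the accumulator cannot be updated correctly, and the parameter count you are trying to match will not come out.

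There is a second, related difference worth noting. You propose a small number ($O(R/D+1)$) of $x$-indicators per chunk detecting ``does chunk $k$ overlap the window for this $\lfloor x\rfloor$'', and then separate mask-and-shift logic for the partial-overlap cases. The paper sidesteps the partial-overlap issue entirely by working \emph{per bit} rather than per chunk: for each of the $R$ bits in chunk $\ell$ it computes its own $x$-indicator $\mathbf 1[m_{i,\ell}\le x<m_{i,\ell}+1]$ where $m_{i,\ell}=\lfloor((\ell-1)R+i-1)/D\rfloor$ is the unique value of $\lfloor x\rfloor$ for which that particular bit lies in the target window. This yields exactly $R$ $x$-comparisons in the first layer (hence first-layer width $2^R+R+1$) and exactly $R$ ANDs in the second layer, with no case analysis for prefixes or suffixes. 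Once you switch to this per-bit gating and allow $\step$ neurons in the second layer, the construction and the parameter tally go through as in the paper.
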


From Lemma \ref{lem:bitextract}, a $\step+\id$ network of $2\lceil\frac{BD}{R}\rceil$ hidden layers and $\big((2R+5)2^R+2R^2+8R+7\big)\lceil\frac{BD}{R}\rceil-R2^R-R^2+3$ parameters.
Hence, by combining Lemma \ref{lem:paramextract-free} and Lemma \ref{lem:bitextract}, $f_\theta$ can be implemented by a $\step+\id$ network of $2\lceil\frac{BD}{R}\rceil+2$ hidden layers and $4A+\big((2R+5)2^R+2R^2+8R+7\big)\lceil\frac{BD}{R}\rceil-R2^R-R^2+3$ parameters. This completes the proof of Lemma \ref{lem:learning}.

\newpage
\subsection{Proof of Lemma \ref{lem:paramextract-free}}\label{sec:pflem:paramextract-free}
We design $f$ as $f:=f_L\circ\cdots\circ f_1(0,x)$ where each $f_\ell$ represents the function of the $\ell$-th layer consisting of $d_\ell$ neurons.
In particular, we construct $f_\ell$ as follows:
\begin{align*}
    f_\ell(w,x):=\bigg(&w+w_0\times\mathbf{1}[\ell=1]+\sum_{i=1}^{d_\ell-2}(w_{c_\ell+i}-w_{c_\ell+i-1})\times\mathbf{1}[x\ge iB],\\
    &x-\sum_{i=1}^{d_\ell-2}B\times\mathbf{1}[x\ge iB]\bigg)
\end{align*}
where $c_\ell:=\sum_{i=1}^{\ell-1}(d_\ell-2)$. 
Then, $f$ is the desired function and each $f_\ell$ can be implemented by a $\step+\id$ networks of $1$ hidden layer consisting of $d_\ell$ hidden neurons (two neurons for storing $x,w$ and other neurons are for $d_\ell-2$ indicator functions).
This completes the proof of Lemma \ref{lem:paramextract-free}.

\newpage
\subsection{Proof of Lemma \ref{lem:bitextract}}\label{sec:pflem:bitextract}
We construct $f(x,w):=2^{R\lceil BD/R\rceil+D}\times f_{\lceil BD/R\rceil}\circ\cdots\circ f_1(x,w)$ where $f_\ell$ is defined as
\begin{align*}
    f_\ell(x,v):=\begin{cases}
    \Big(x,v-\sum_{i=1}^Ru_{(\ell-1)R+i}\times2^{-(\ell-1)R-i}\\
    \qquad\quad+\sum_{i=1}^{R}\big(u_{(\ell-1)R+i}\wedge \mathbf{1}[m_{i,\ell}\le x<m_{i,\ell}+1]\big)\times2^{-r_{i,\ell}}\Big)~&\text{if}~\ell<\lceil\tfrac{BD}R\rceil\\
    v-\sum_{i=1}^Ru_{(\ell-1)R+i}\times2^{-(\ell-1)R-i}\\
    \quad\qquad+\sum_{i=1}^{R}\big(u_{(\ell-1)R+i}\wedge \mathbf{1}[m_{i,\ell}\le x<m_{i,\ell}+1]\big)\times2^{-r_{i,\ell}}~&\text{if}~\ell=\lceil\tfrac{BD}R\rceil
    \end{cases}.
\end{align*}
where $u_{i}$ denotes the $i$-th  bit of $w$ in the binary representation, $\wedge$ denotes the binary `and' operation, and $m_{i,\ell},r_{i,\ell}$ are defined as  
\begin{align*}
m_{i,\ell}&:=\left\lfloor\frac{(\ell-1)R+i-1}{D}\right\rfloor\\
r_{i,\ell}&:=\left\lceil\frac{BD}R\right\rceil R+\big((\ell-1) R+i-1\bmod D\big)+1.
\end{align*}
Namely, each $f_\ell$ extracts $R$ bits from the input $w$ and it store the extracted bits to the last bits of $v$ if the extracted bits are in from $(\lfloor x\rfloor\cdot D+1)$-th bit to the $(\lfloor x\rfloor\cdot D+D)$-th bit of $w$. 
Thus, $f(x,w)$ is the desired function for Lemma \ref{lem:bitextract}.

To implement each $f_\ell$ by a $\step+\id$ network, we introduce Lemma \ref{lem:subbitextract}. Note that we extract $u_i$ from $w$ in Lemma \ref{lem:subbitextract}, i.e., we do not assume that $u_i$ is given.
From Lemma \ref{lem:subbitextract}, a $\step+\id$ network of $2\lceil\frac{BD}{R}\rceil$ hidden layers consisting of $2^R+R+1$ and $R+2$ hidden neurons alternatively can map $(x,w)$ to $\sum_{i=1}^Du_{\lfloor x\rfloor\cdot D+i}\times2^{D-i}$ for all $x\in\mathcal X$.
By considering the input dimension $2$ and the output dimension $1$, this network requires $\big((2R+5)2^R+2R^2+8R+7\big)\lceil\frac{BD}{R}\rceil-R2^R-R^2+3$ parameters ($\big((2R+4)2^R+2R^2+6R+4\big)\lceil\frac{BD}{R}\rceil-R2^R-R^2+2$ edges and $(2^R+2R+3)\lceil\frac{BD}{R}\rceil+1$ biases).
This completes the proof of Lemma \ref{lem:bitextract}.

\begin{lemma}\label{lem:subbitextract}
A $\step+\id$ network of $2$ hidden layers having $2^R+R+1$ and $R+2$ hidden neurons at the first and the second hidden layer, respectively, can implement $f_\ell$.
\end{lemma}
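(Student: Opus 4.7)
The plan is to construct the two-hidden-layer $\step+\id$ sub-network explicitly, using the first layer to read off the current $R$-bit ``block'' from $v$ together with the $x$-threshold indicators, and the second layer to assemble the $R$ conjunctions and pass $x$ and the partially-updated $v$ forward.

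First, I would observe the key structural fact about $v$ on input to $f_\ell$: one has $v = b + \epsilon$, where $b := \sum_{i=1}^R u_{(\ell-1)R+i}\cdot 2^{-(\ell-1)R-i}$ is a multiple of $2^{-\ell R}$ lying in $[0,2^{-(\ell-1)R})$ and encodes the current block, while $\epsilon \in [0, 2^{-\ell R})$ collects the remainder (original bits at positions $> \ell R$ and bits appended by earlier stages at positions $> \lceil BD/R\rceil R$). Consequently, the $2^R-1$ step neurons $t_k := \mathbf{1}[v \ge k\cdot 2^{-\ell R}]$ for $k=1,\dots,2^R-1$ will jointly determine $b$: their sum equals the integer $b/2^{-\ell R}$, whose binary expansion is $u_1^*,\dots,u_R^*$ (writing $u_i^* := u_{(\ell-1)R+i}$). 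Hence each $u_i^*$ is a fixed $\mathbb Z$-linear combination of the $t_k$'s, and $b = 2^{-\ell R}\sum_k t_k$.

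Next, I would populate the first hidden layer with exactly $2^R+R+1$ neurons: the $2^R-1$ step neurons $\{t_k\}$, an additional $R$ step neurons computing $x$-thresholds $\mathbf{1}[x \ge m^{(j)}]$ at the leftmost distinct values $m^{(1)} < m^{(2)} < \cdots$ of $\{m_{i,\ell}\}_{i=1}^R$ (padded with harmless duplicates if fewer than $R$ distinct values arise), one identity neuron carrying $x$, and one identity neuron carrying $v$. For the second hidden layer's $R+2$ neurons, I would use $R$ step neurons computing $\mathrm{AND}_i := u_i^* \wedge \mathbf{1}[m_{i,\ell} \le x < m_{i,\ell}+1]$, each realised as $\mathbf{1}[(\text{linear combination of first-layer outputs}) \ge 0]$ by combining the formula for $u_i^*$ with the two consecutive $x$-thresholds that bracket $m_{i,\ell}$, together with one identity neuron for $x$ and one identity neuron for $v_{\mathrm{rest}} := v - 2^{-\ell R}\sum_k t_k$. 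A final affine map then yields $(x, v_{\mathrm{rest}} + \sum_i \mathrm{AND}_i \cdot 2^{-r_{i,\ell}})$ for $\ell < \lceil BD/R\rceil$, or the same $v$-coordinate alone for $\ell = \lceil BD/R\rceil$.

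The main obstacle is the tight budget of only $R$ $x$-threshold neurons, versus up to $R+1$ distinct natural thresholds in $\{m_{i,\ell}, m_{i,\ell}+1 : i\}$; the extremal case is $D=1$, where $m_{i,\ell} = (\ell-1)R+i-1$ are all distinct. I would absorb the missing rightmost threshold $\mathbf{1}[x \ge m_{R,\ell}+1]$ directly into the AND step neuron(s) for $i$ with $m_{i,\ell} = m_{R,\ell}$: its argument will take the form $\alpha + \beta u_i^* + \gamma\mathbf{1}[x\ge m_{R,\ell}] - \eta x$ with $\eta > 0$ sufficiently large, exploiting the identity neuron carrying $x$ from the first layer. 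Since $\mathcal X$ is finite, there is a strictly positive gap between the largest $x \in \mathcal X \cap [m_{R,\ell}, m_{R,\ell}+1)$ and the smallest $x \in \mathcal X \cap [m_{R,\ell}+1, \infty)$, and I can choose $\alpha, \beta, \gamma, \eta$ so that the linear form changes sign precisely within this gap, producing the correct AND value for every $x \in \mathcal X$ and every bit pattern of $w$.
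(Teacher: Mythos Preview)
Your proposal is correct and follows essentially the same two-layer decomposition as the paper's proof: the first hidden layer decodes the current $R$-bit block of $v$ via $2^R-1$ threshold neurons and computes the $x$-interval indicators via $R$ threshold neurons (plus two identity neurons for $x,v$), and the second hidden layer forms the $R$ conjunctions $u^*_i\wedge\mathbf 1[m_{i,\ell}\le x<m_{i,\ell}+1]$ (plus two identity neurons). You are in fact more careful than the paper about the edge case where $R+1$ distinct $x$-thresholds arise (e.g.\ $D=1$); the paper simply asserts ``$g_2$ can be implemented by a $\step+\id$ network of $R$ hidden neurons'' without addressing this, whereas your absorption of the missing rightmost threshold into the second-layer conjunction via the first-layer identity neuron for $x$ and the finiteness of $\mathcal X$ is a valid way to close that gap.
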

\begin{proof}
We construct $f_\ell:=g_3\circ (g_2\oplus g_1)$ where $g_2\oplus g_1(x,v):=(g_2(x,v),g_1(x,v))$, i.e., the function concatenating the outputs of $g_1,g_2$.
In this proof, we mainly focus on constructing $f_\ell$ for $\ell<\lceil\frac{BD}{R}\rceil$ since $f_{\lceil{BD}/{R}\rceil}$ can be implemented similarly.
We define
$g_1,g_2,g_3$ as
\begin{align*}
    g_1(x,v):=&\Bigg(x,v,\sum_{i=0}^{2^{R}-1}\eta_{1,i}\times\mathbf{1}[ i\times2^{-\ell R}\le v<(i+1)\times2^{-\ell R}],\\
    &\qquad\qquad\qquad\vdots,\\
    &\qquad~~\sum_{i=0}^{2^{R}-1}\eta_{R,i}\times\mathbf{1}[ i\times2^{-\ell R}\le v<(i+1)\times2^{-\ell R}]\Bigg)\\
    =&(x,v,u_{(\ell-1)R+1},\dots,u_{\ell R})\\
    g_2\big(x,v):=&\Big(\mathbf{1}[m_{1,\ell}\le x<m_{1,\ell}+1],\dots,\mathbf{1}[m_{R,\ell}\le x<m_{R,\ell}+1]\Big)\\
\end{align*}
\begin{align*}
    g_3\circ(g_1\oplus g_2):=&\Bigg(x,v-\sum_{i=1}^Ru_{(\ell-1)R+i}\times2^{-(\ell-1)R-i})\\
    &\qquad\quad+\sum_{i=1}^{R}(u_{(\ell-1)R+i}\wedge \mathbf{1}[m_{i,\ell}\le x<m_{i,\ell}+1])\times2^{-r_{i,\ell}}\Bigg)\\
    =&\Bigg(x,v-\sum_{i=1}^Ru_{(\ell-1)R+i}\times2^{-(\ell-1)R-i}\\
    &\qquad\quad+\sum_{i=1}^{R}\mathbf{1}\big[u_{(\ell-1)R+i}+\mathbf{1}[m_{i,\ell}\le x<m_{i,\ell}+1]\ge2\big]\times2^{-r_{i,\ell}}\Bigg).
\end{align*}
where $\eta_{r,i}$ is a constant such that $\eta_{r,i}=1$ if $i\times 2^{-\ell R}\le x<(i+1)\times2^{-\ell R}$ implies that the $((\ell-1)R+r)$-th bit of $x$ is $1$ and $\eta_{r,i}=0$ otherwise.
Here, one can easily observe that $g_1$ can be implemented by a 
linear combinations of $\mathbf{1}[v\ge2^{-\ell R}],\dots,\mathbf{1}[v\ge(2^R-1)\times2^{-\ell R}]$ as it trivially holds that $\mathbf{1}[v\ge0]$ and $\mathbf{1}[v<2^{-(\ell-1)R}]$, i.e., $2^R-1$ indicator functions are enough for $g_1$.
Hence, $g_1$ can be implemented by a $\step+\id$ network
of $1$ hidden layer consisting of $2^R+1$ hidden neurons where additional $2$ neurons are for passing $x,v$.
In addition, $g_2$ can be implemented by a $\step+\id$ network of $R$ hidden neurons.
Finally, $g_3$ can be implemented by a $\step+\id$ network of $1$ hidden layer consisting of $R+2$ hidden neurons ($R$ neurons for $R$ indicator functions and $2$ neurons for passing $x,v$).

Therefore, $f_\ell$ can be implemented by a $\step+\id$ network of $2$ hidden layers consisting of $2^R+R+1$ hidden neurons for the first hidden layer and $R+2$ hidden neurons for the second hidden layer.
Note that implementation within two hidden layer is possible since the outputs of $g_1,g_2$ are simply linear combination of their hidden activation values and hence, can be absorbed into the linear map between hidden layers.
This completes the proof of Lemma \ref{lem:subbitextract}.
\end{proof}

\newpage
\subsection{Proof of Lemma \ref{lem:learning-free2}}\label{sec:pflem:learning-free2}
The main idea of the proof of Lemma \ref{lem:learning-free2} is identical to that of Lemma \ref{lem:learning}.
Recall $A, B$ and $w_0,\dots,w_{A-1}\in\mathbb R$ from the proof of Lemma \ref{lem:learning}.
Then, Lemma \ref{lem:learning-free2} is a direct corollary of Lemma \ref{lem:paramextract-free} and Lemma \ref{lem:bitextract-3}.
\begin{lemma}\label{lem:bitextract-3}
For any $D,B\in\mathbb N$, for any finite set $\mathcal X\subset[0,B)$, for any $w=\sum_{i=1}^{DB} u_i\times2^{-i}$ for some $u_i\in\{0,1\}$, there exists a $\step+\id$ network $f$ of $(2D+1)B$ hidden layers and width 3 such that $f(x,w)=\sum_{i=1}^Du_{\lfloor x\rfloor\cdot D+i}\times2^{-i}$ for all $x\in\mathcal X$.
\end{lemma}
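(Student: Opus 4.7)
Plan. I would mimic the construction underlying Lemma~\ref{lem:bitextract} (specifically the per-layer decomposition of Lemma~\ref{lem:subbitextract}) specialized to $R=1$, which already yields a $2BD$-layer network of widths alternating between $4$ and $3$, and trade one unit of width for one extra layer per block of $D$ bits, arriving at width~$3$ and $(2D+1)B$ layers. Concretely, I would partition the $(2D+1)B$ hidden layers into $B$ blocks of $2D+1$ layers each, with block $k\in\{0,\ldots,B-1\}$ responsible for processing the bits $u_{kD+1},\ldots,u_{(k+1)D}$ of $w$ and for deciding whether they contribute to the output (i.e., whether $k=\lfloor x\rfloor$). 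Across all layers I would carry a three-neuron state $(a,s,c)$: $a\in[0,1)$ is the ``unread tail'' $2^{kD}(w-\sum_{i\le kD} u_i 2^{-i})$ of $w$, so that the next bit is $u_j=\step(a-1/2)$ and the update is $a\leftarrow 2a-u_j$; $c\in[0,1)$ is the output accumulator; and $s$ is an auxiliary real encoding $x$ so that a \emph{single} step unit suffices to recover the block indicator.

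The key ingredient is the encoding of $s$. At the ``$+1$'' layer of each block I would apply the update $s\mapsto (s-1)+M\,\step(1-s)$ for a large constant $M>2B$, initialized at $s_0=x$. A short calculation shows that the value of $s$ entering block~$k$ satisfies $s\in[M-1,M)$ iff $k=\lfloor x\rfloor$ and $s\in[0,B)\cup[M-B,M-1)$ otherwise, so $I_k=\step(s-(M-1))$ is a \emph{single} step unit, and the gated bit $u_j\cdot I_k$ can be realized by the \emph{single} step neuron $\step\bigl(a-\tfrac12-L(1-\step(s-(M-1)))\bigr)$ for $L$ chosen large enough: when $I_k=1$ the step equals $u_j$, and when $I_k=0$ the argument is driven below zero. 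Finiteness of $\mathcal X$ guarantees that the values of $s$ and $a$ arising during the computation come from a finite set, so the thresholds $L$ and $M$ can be chosen to realize the gating exactly. Each block then consists of one ``$+1$'' layer performing the $s$-update (while carrying $a$ and $c$ forward) followed by $D$ pairs of bit-processing layers that extract $u_j$, update $a\leftarrow 2a-u_j$, and add $2^{-i}\,u_j I_k$ to $c$.

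The principal obstacle is fitting the three state quantities $(a,s,c)$ together with the step units for bit extraction and the gated increment into width three. My strategy is to let each pair of bit layers output, in its first layer, one step of $a$, one gated step, and one identity carry, and then recover the remaining coordinates in its second layer by affine combination: in particular, the identity $a'=2a-u_j$ lets the new $a$ be formed from an $\id(a)$ carry and the extracted bit, and similarly $c$ can be updated from an $\id(c)$ carry and the gated step, freeing one slot for $s$. The ``$+1$'' layer is analogous: $\step(1-s)$, $\id(s-1)$, and an identity carry of (say) $a$, with $c$ reconstructed in the following layer. Verifying that this layering is consistent across the entire block --- that no layer ever needs to reconstruct two distinct real-valued coordinates from a single carry, and that the $s$-update passes $a,c$ faithfully to the bit-processing layers --- is the technical heart of the argument.

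Once the per-block gadget is correct, a direct induction on~$k$ shows that after block~$k$ the accumulator satisfies $c=\mathbf 1[k\ge \lfloor x\rfloor]\cdot\sum_{i=1}^{D} u_{\lfloor x\rfloor D+i}\,2^{-i}$; at $k=B-1$ this is exactly the claimed output, and Lemma~\ref{lem:bitextract-3} follows.
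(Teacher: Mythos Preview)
Your high-level plan matches the paper's: both process the $DB$ bits of $w$ in $B$ blocks of $2D{+}1$ layers, decrement an $x$-like coordinate once per block to detect the target block $k=\lfloor x\rfloor$, and within each block spend two layers per bit---one to extract $u_j$ and one to conditionally append it. Your $s$-update $s\mapsto (s-1)+M\,\step(1-s)$ is essentially the paper's $x\mapsto x-1+3B\,\step(-x)$.

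The gap is in the width-$3$ realization. You carry three real-valued state coordinates $(a,s,c)$, but any hidden layer that spends a neuron on a $\step$ can pass at most two of them through via $\id$. Your own sketch of the ``$+1$'' layer---$\step(1-s)$, $\id(s-1)$, $\id(a)$---drops $c$ entirely; after the target block $c$ already holds the nonzero answer, and nothing in the three surviving values lets you recover it, so the claim that ``$c$ is reconstructed in the following layer'' cannot hold. The same obstruction hits the bit pair: ``one step of $a$, one gated step, and one identity carry'' leaves room for exactly one of $\id(a)$ or $\id(c)$, yet you need both (old $a$ to form $a'=2a-u_j$, old $c$ to form $c'=c+2^{-i}g$), and you also still need $s$ (or at least $I_k$) for the remaining bits of the block and for the next block.

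The paper avoids this by not carrying $c$ as a separate coordinate at all: it stores the output bits in positions $DB{+}1,\ldots,DB{+}D$ of the \emph{same} variable $w$ whose high-order bits hold the unread tail. Thus the state is only $(x,w)$, and every layer comfortably fits as $\id(x)$, $\id(w)$, and one $\step$. Concretely, the per-bit pair is $g_\ell:(x,w)\mapsto(x,w-2^{-\ell}u_\ell,u_\ell)$ with $u_\ell=\step(w-2^{-\ell})$, followed by $f_\ell:(x,w,n)\mapsto(x,w+2^{-r_\ell}\step(n-x-1))$, and the per-block ``$+1$'' layer is $h_\ell:(x,w)\mapsto(x-1+3B\,\step(-x),w)$. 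If you merge your $a$ and $c$ in this way (e.g.\ carry $a+2^{-DB}c$, exploiting that $c$ is a $D$-bit dyadic), your argument goes through; without that merge, width $3$ is not enough.
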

\begin{proof}
We construct $f(x,w):=2^{D(B+1)}\times f_{DB}\circ g_{DB}\circ h_{DB}\circ\cdots\circ f_1\circ g_1\circ h_1$ where $f_\ell,g_\ell,h_\ell$ are defined as
\begin{align*}
    &h_\ell(x,w):=\begin{cases}\big(x,w\big)\quad&\text{if}~\ell\bmod D\ne1\\
    \big(x-1+3K\times\mathbf{1}[x<0],w\big)\quad&\text{if}~\ell\bmod D=1
    \end{cases}\\
    &g_\ell(x,w):=\big(x,w-2^{-\ell}\times\mathbf{1}[w\ge2^{-\ell}],\mathbf{1}[w\ge2^{-\ell}]\big)\\
    &f_\ell(x,w,n)=\big(x, w+2^{-r_\ell}\times\mathbf{1}[x-n<-1]\big)
\end{align*}
where $r_\ell:=DK+(\ell-1\mod D)+1$.
Now, we explain the constructions of $f_\ell,g_\ell,h_\ell$.
Let $x\in[0,B)$ and $w\in[0,1)$ be inputs of $f$, i.e., consider $f(x,w)$.
First, the indicator function in $h_\ell$ is activated only at $\ell=(\lfloor x\rfloor+1)\cdot D+1$ as $x<B$. In particular, the first entry of the output of $h_{(\lfloor x\rfloor+1)\cdot D+1}$ is greater than $2B$ and this is the maximum value of the first entry of the output of $h_{(\lfloor x\rfloor+1)\cdot D+1}$ as it monotonically decreases as $\ell$ grows.
The indicator function in $g_\ell$ extracts and outputs the $\ell$-th bit of $w$.
Lastly, $f_\ell$ add the $\ell$-th bit of $w$ extracted by $g_\ell$ to the $\big(DK+(\ell-1\mod D)+1\big)$-th bit of $w$ if and only if $\ell\in\{\lfloor x\rfloor\cdot D+1,\dots,(\lfloor x\rfloor+1)\cdot D\}$. This is because $x\in[-1, 0)$ if and only if $\ell\in\{\lfloor x\rfloor\cdot D+1,\dots,(\lfloor x\rfloor+1)\cdot D\}$.

Here, $h_\ell$ at $\ell\bmod D=1$ can be implemented by a $\step+\id$ network of $1$ hidden layer and width $3$, $g_\ell$ can be implemented by a $\step+\id$ network of $1$ hidden layer and width $3$, and $f_\ell$ can be implemented by a $\step+\id$ network of $1$ hidden layer and width 3.
Hence, $f$ can be implemented by a $\step+\id$ network of $(2D+1)B$ hidden layers and width 3. This completes the proof of Lemma \ref{lem:bitextract-3}.
\end{proof}

\end{document}